\documentclass{article}

\usepackage{microtype}
\usepackage{graphicx}
\usepackage{subcaption}
\usepackage{booktabs} 

\usepackage{hyperref}
\usepackage[english]{babel}
\usepackage{enumitem}

\usepackage[preprint]{icml2026}

\usepackage{amsmath}
\usepackage{amssymb}
\usepackage{mathtools}
\usepackage{amsthm}

\usepackage[utf8]{inputenc} 
\usepackage[T1]{fontenc}    
\usepackage{hyperref}       
\usepackage{url}            
\usepackage{booktabs}       
\usepackage{amsfonts}       
\usepackage{nicefrac}       
\usepackage{microtype}      
\usepackage{xcolor}         

\usepackage{empheq}
\usepackage{enumitem}
\usepackage{amsmath}
\usepackage{amssymb}
\usepackage{amsthm}
\usepackage{bm} 
\usepackage{algorithm}
\usepackage{algpseudocode}
\usepackage{float}

\newcommand{\rd}{\mathop{}\mathopen{}\textrm{d}}
\newcommand{\lfrac}[2]{(#1 \mathbin{/} #2)}

\usepackage{booktabs}
\usepackage{multirow}
\usepackage{makecell}
\usepackage[table]{xcolor}
\usepackage{threeparttable}
\usepackage{adjustbox}
\usepackage{caption}

\usepackage{graphicx}
\usepackage{subcaption}

\definecolor{headerblue}{RGB}{238,246,255}
\definecolor{oursblue}{RGB}{232,241,255}
\definecolor{bestgreen}{RGB}{222,245,226}
\definecolor{categorygray}{RGB}{245,245,245}

\usepackage{tabularx}
\definecolor{groupgray}{RGB}{246,247,249}
\definecolor{oursgreen}{RGB}{226,245,233}
\usepackage{wrapfig}
\usepackage[capitalize,noabbrev]{cleveref}

\theoremstyle{plain}
\newtheorem{theorem}{Theorem}[section]
\newtheorem{proposition}[theorem]{Proposition}

\theoremstyle{definition}

\theoremstyle{remark}

\usepackage[disable,textsize=tiny]{todonotes}

\icmltitlerunning{Physics-Informed Neural PDE Solvers via Spatio-Temporal MeanFlow}

\begin{document}

\twocolumn[
  \icmltitle{Physics-Informed Neural PDE Solvers via Spatio-Temporal MeanFlow}



  \icmlsetsymbol{equal}{*}

  \begin{icmlauthorlist}
    \icmlauthor{Hanru Bai}
     {yyy}
    \icmlauthor{Yuncheng Zhou}
    {yyy}
    \icmlauthor{Difan Zou}{comp}
  \end{icmlauthorlist}

  \icmlaffiliation{yyy}{Fudan University}
  \icmlaffiliation{comp}{The University of Hong Kong}

  \icmlcorrespondingauthor{Difan Zou}{dzou@cs.hku.hk}

  \icmlkeywords{Machine Learning, ICML}

  \vskip 0.3in
]



\printAffiliationsAndNotice{}  

\begin{abstract}
  Deep learning paradigms, such as PINNs and neural operators, have significantly advanced the solving of PDEs. However, they often struggle to capture the continuous integral nature of physical systems, relying either on pointwise residuals that ignore the integral perspective or on pre-discretized temporal grids. Drawing inspiration from MeanFlow, a continuous-time integrator recently developed to efficiently solve generative ODEs, we introduce Spatio-Temporal MeanFlow, which functions as a novel PDE solver learning the finite-interval evolution of physical states. By substituting the generative velocity field with the physical PDE operator, we transform multi-step numerical integration into an efficient prediction with a freely controllable integration length. Crucially, we extend the original MeanFlow constraint from the temporal to the spatio-temporal domain, coupling time evolution with spatial consistency. This yields a unified framework naturally accommodating both time-dependent and stationary PDEs. Comprehensive experiments on benchmarks demonstrate that our approach achieves superior accuracy and inference efficiency over representative baselines. Furthermore, the proposed integral constraint enables excellent generalization to out-of-distribution initial conditions and varying spatial resolutions.
\end{abstract}

\section{Introduction}

Solving partial differential equations (PDEs) efficiently and accurately is fundamental to modeling complex physical phenomena across science and engineering, such as shock wave propagation and fluid turbulence \cite{bandrauk2007high,madhavi2025advanced,glimm1991nonlinear}. Fundamentally, this mathematical process is a continuous integration process. For example, for a time-dependent PDE governed by $\frac{\partial \boldsymbol{u}}{\partial t} = \boldsymbol{f}[\boldsymbol{u}, \nabla, \boldsymbol{x}, t]$, the evolution of the state field $\boldsymbol{u}(\boldsymbol{x}, t)$ across the continuous spatial domain from time $\tau$ to $t$ is the integral:
\begin{equation}
 \textstyle \boldsymbol{u}(\boldsymbol{x}, t) = \boldsymbol{u}(\boldsymbol{x}, \tau) + \int_\tau^t \boldsymbol{f}[\boldsymbol{u}(\boldsymbol{x}, s), \nabla, \boldsymbol{x}, s]\,\mathrm{d}s.
\label{eq:integral_form}
\end{equation}            
 
While the two dominant deep learning paradigms for PDE solving, namely Physics-Informed Neural Networks (PINNs) \cite{cuomo2022scientific}, which embed governing equations as pointwise loss penalties, and neural operators like FNO \cite{li2020fourier}, which learn data-driven mappings between function spaces, have achieved remarkable success, they struggle to fully capture the continuous integral process defined in Eq. \eqref{eq:integral_form}. Specifically, FNO relies heavily on pre-discretized time slices and treats the integration interval as a fixed hyperparameter, essentially functioning as a discrete, step-by-step integrator. Conversely, PINNs rely solely on the pointwise PDE formula itself, bypassing the integral perspective entirely. Grounded in the formulation of Eq. \eqref{eq:integral_form}, we argue that a more straightforward approach is to directly train a neural integrator capable of operating flexibly in continuous domain. 

\begin{figure} 
    \centering
    \includegraphics[width=\linewidth]{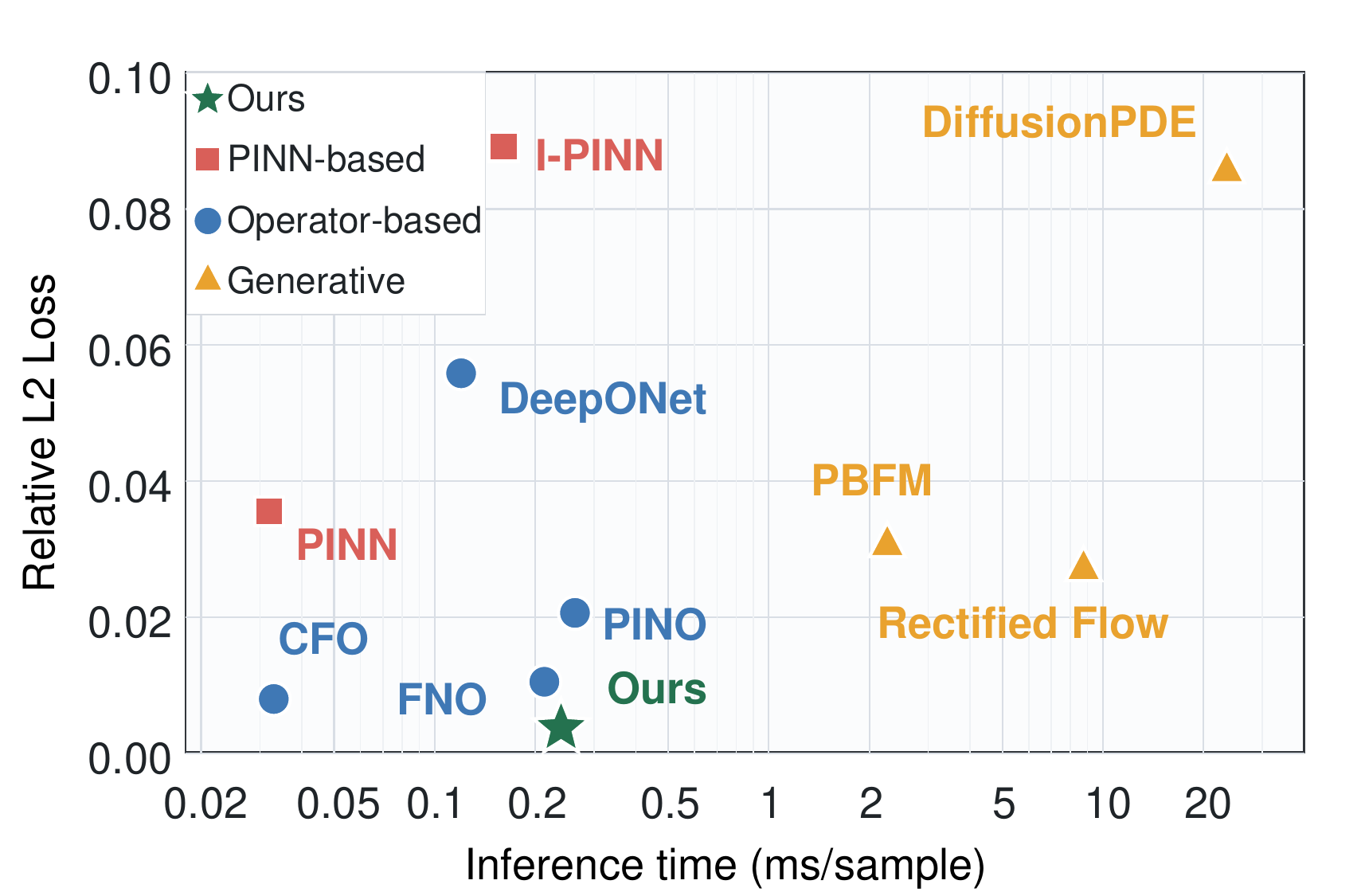}
    \caption{Accuracy-efficiency trade-off on the Burgers benchmark. Our Spatio-Temporal MeanFlow achieves an optimal Pareto frontier (bottom-left), delivering the lowest relative $L_2$ error with highly competitive inference speeds across all baselines.}
    \label{fig:teaser}
\end{figure}

A similar challenge has been extensively studied in generative modeling, such as diffusion models \cite{croitoru2023diffusion,yang2023diffusion,luo2022understanding} and flow matching \cite{lipman2022flow}, where the inference process can also be formulated as an ordinary differential equation (ODE). Consequently, several works strive to develop efficient methods to solve this generative ODE \cite{liu2022flow,song2023consistency}. One notable model is MeanFlow \cite{geng2025mean}, which directly learns the average velocity over a finite interval using neural networks. By doing so, it essentially functions as a continuous-time integrator whose core advantage lies in bypassing repeated velocity evaluations, transforming multi-step integration into an efficient, single-step prediction. This provides a powerful intuition: can we adapt this analogous mechanism to construct a continuous-time neural integrator dedicated to serving PDE solving?

Inspired by this, we adapt the continuous integral formulation of MeanFlow to PDE
solving. Specifically, we replace the generative velocity field
$\boldsymbol{v}$ in the MeanFlow identity with the PDE right-hand side
$\boldsymbol{f}$, and use the resulting identity to parameterize the
finite-interval evolution of PDE solutions. The original MeanFlow constraint
characterizes temporal evolution alone. For PDEs, however, generalization also
depends critically on spatial consistency. We therefore extend the MeanFlow
loss from the temporal domain to the spatio-temporal domain, introducing a
spatio-temporal MeanFlow loss tailored to PDE solving that strengthens the model's
ability to generalize across spatial locations and resolutions. Furthermore,
this formulation naturally accommodates static PDEs by specializing to a
purely spatial MeanFlow loss, providing a unified framework for both
time-dependent and stationary PDEs. By transcending purely
data-driven learning, the model achieves superior generalization to
out-of-distribution initial conditions and varied spatial resolutions.
Our contributions are:
\begin{itemize}
\item \textbf{A MeanFlow-inspired PDE solver framework}: We develop a novel PDE solver that enables efficient inference by transforming conventional multi-step
numerical integration into finite-interval predictions,
with the integration length freely controllable at inference time.

\item \textbf{A novel spatio-temporal MeanFlow loss}: We introduce a
physics-informed constraint that spatio-temporal significantly enhances solving accuracy
and generalization under out-of-distribution initial conditions and different spatial resolutions.

\item \textbf{Comprehensive experimental validation}: We conduct extensive evaluations on canonical PDE benchmarks and a real-world dataset. As shown in Fig.~\ref{fig:teaser}, our method achieves an optimal accuracy-efficiency trade-off across synthetic and real-world PDEs. Furthermore, extensive ablations validate our core components, demonstrating the integration-step flexibility of our continuous formulation and showing how the spatio-temporal MeanFlow losses drive robust generalization to higher resolutions and out-of-distribution (OOD) conditions.
\end{itemize}
\section{Preliminaries} 
\paragraph{MeanFlow.}
For states $\boldsymbol u_t$ ($\boldsymbol z_t$ in \cite{geng2025mean}), MeanFlow \cite{geng2025mean} achieves one-step prediction by modeling the average velocity $\boldsymbol{m}$ ($\boldsymbol u$ in \cite{geng2025mean}) along ODE $\frac{\rd\boldsymbol{u}_t}{\rd t} = \boldsymbol{v}(\boldsymbol{u}_t, t)$:
$
\textstyle\boldsymbol{m}(\boldsymbol{u}_t, {\tau}, t) \triangleq \frac{1}{t-{\tau}} \int_{\tau}^t \boldsymbol{v}(\boldsymbol{u}_s, s) \rd s
$ between time ${\tau}$ ($r$ in \cite{geng2025mean}) and $t$, 
rather than multi-step integration in Flow Matching. 
Then, by taking the total derivative with respect to time $t$ on both sides, we obtain the \textbf{MeanFlow Identity} in Eq.~\ref{eq:identity}, whose right-hand side serves as the training objective for network $\boldsymbol{m}_\theta(\boldsymbol{u}_t, {\tau}, t)$. (derived in App. \ref{app:meanflow_derivation})
\begin{equation}
\label{eq:identity}
\textstyle\boldsymbol{m}(\boldsymbol{u}_t, {\tau}, t) = \boldsymbol{v}(\boldsymbol{u}_t, t) - (t-{\tau})\frac{\rd}{\rd t}\boldsymbol{m}(\boldsymbol{u}_t, {\tau}, t).
\end{equation}

\paragraph{Problem Formulation.}
We formulate the PDE system over a spatial domain $D \subset \mathbb{R}^n$ and time $t \in [0, T]$ as follows:
\begin{flalign}
& \textstyle\gamma \frac{\partial \boldsymbol{u}(\boldsymbol{x}, t)}{\partial t}
= \boldsymbol{f}[\boldsymbol{u}(\boldsymbol{x}, t), \nabla, \boldsymbol{x}, t, \boldsymbol{a}],
\textstyle \boldsymbol{x} \in D, \, t \in (0, T], &&
\label{eq:pde_gov} \\
& \boldsymbol{u}(\boldsymbol{x}, 0)
= \boldsymbol{u}_0(\boldsymbol{x}),
 \boldsymbol{x} \in D, &&
\label{eq:pde_ic} \\
& \mathcal{B}[\boldsymbol{u}(\boldsymbol{x}, t)]
= \boldsymbol{g}(\boldsymbol{x}, t),
 \boldsymbol{x} \in \partial D, \, t \in [0, T]. &&
\label{eq:pde_bc}
\end{flalign}

Here, $\boldsymbol{u} \in \mathbb{R}^d$ is the state variable, $\boldsymbol{f}$ is the differential operator with parameters $\boldsymbol{a}$, while $\boldsymbol{u}_0$ and $\mathcal{B}$ denote the initial condition and boundary operator, respectively. The coefficient ${\gamma} \in \mathbb{R}$ unifies time-variant ($\gamma \neq 0$) and time-invariant ($\gamma = 0$) PDEs. Notably, for static systems ($\gamma=0$), the time dimension $t$ acts as a dummy variable, yielding a constant state $\boldsymbol{u}(\boldsymbol{x}, t) \equiv \boldsymbol{u}_{\text{static}}(\boldsymbol{x})$ along the temporal axis, which naturally accommodates our unified spatio-temporal modeling.

\begin{figure*}
    \centering
\includegraphics[width=0.85\linewidth]{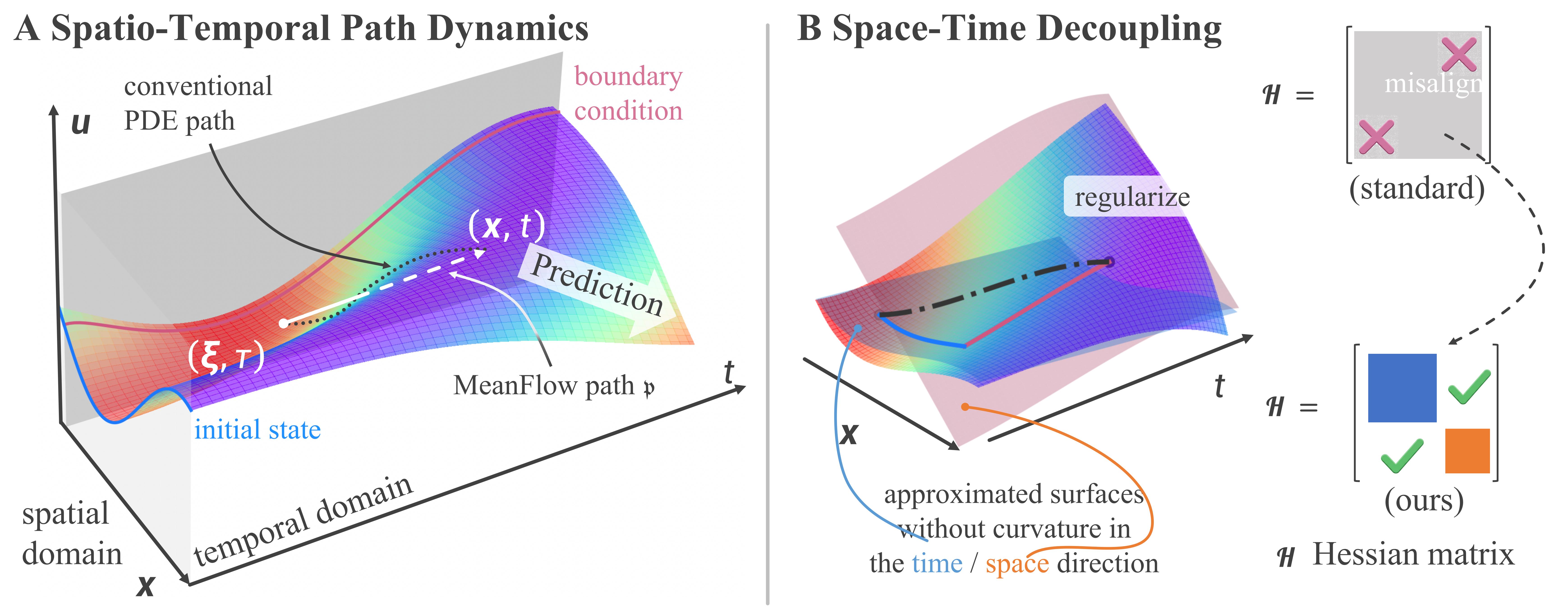} 
    \caption{Overview of our framework. (\textbf{A}) Spatio-Temporal Path Dynamics, formulating PDE solutions via straight paths. (\textbf{B}) Space-Time Decoupling, constraining the surface in space or time direction only, designed to mitigate scale mismatches within the Hessian matrix.}
    \label{fig:framework}
\end{figure*}
\section{Methodology}

This section presents the  Spatio-Temporal MeanFlow framework. Sec. \ref{sec:formulation} and \ref{sec:constraints} formulate the problem via integration paths and derive physics-informed governing constraints. To mitigate temporal-spatial scale mismatches, Sec. \ref{sec:decoupling} introduces a decoupling strategy with a theoretical decoupling bound. Sec. \ref{sec:framework} integrates these into a unified objective function and details the specific algorithms. Finally, Sec. \ref{sec:theory} establishes theoretical foundations, providing rigorous justifications for consistency, a-posteriori error bounds, and improved optimization dynamics.
\subsection{Spatio-Temporal MeanFlow Formulation}
\label{sec:formulation}
We consider the general partial differential equation (PDE) system defined by Eq. \ref{eq:pde_gov}.
To relate the states between a starting point $(\boldsymbol{\xi}, \tau)$ and a target point $(\boldsymbol{x}, t)$, we define a spatio-temporal flow path $\mathfrak{p}: [0,1] \mapsto D \times [0,T]$ such that $\mathfrak{p}(0)=(\boldsymbol{\xi}, \tau)$ and $\mathfrak{p}(1)=(\boldsymbol{x}, t)$ (Fig. \ref{fig:framework} A). 
We then define the spatio-temporal MeanFlow function $\boldsymbol{m}$ as the integral average of the state evolution along path $\mathfrak{p}$:
\begin{equation}
\textstyle\boldsymbol{m}[\boldsymbol{u}(\boldsymbol{\xi}, \tau), \mathfrak{p}, \boldsymbol{a}] \triangleq {\int_{\mathfrak{p}} \rd\boldsymbol{u}(\mathfrak{p})}~/~{\int_{\mathfrak{p}} \|\rd \mathfrak{p}\|_2}.
\label{eq:meanflow_identity}
\end{equation}

For a convex domain $D$, we consider a straight path $\mathfrak p(s)\triangleq[\boldsymbol \xi  + s(\boldsymbol x-\boldsymbol \xi ), \tau + s(t-\tau )]$ and define path length $l(\boldsymbol \xi , \tau , \boldsymbol x, t)\triangleq\int_{\mathfrak p}\Vert \textrm d\mathfrak p\Vert _2=\sqrt{\Vert \boldsymbol x-\boldsymbol \xi \Vert _2^2+(t-\tau )^2}$ which is the total spatio-temporal Euclidean distance. 
Following the Fundamental Theorem of Line Integrals, Eq. \ref{eq:meanflow_identity} satisfies
\begin{equation}
\textstyle l(\boldsymbol \xi , \tau , \boldsymbol x, t)\cdot\boldsymbol m[\boldsymbol u(\boldsymbol \xi , \tau ), \boldsymbol \xi , \tau , \boldsymbol x, t; \boldsymbol a] = \boldsymbol u(\boldsymbol x, t) - \boldsymbol u(\boldsymbol \xi , \tau ). 
\label{eq:fund-id}
\end{equation}
Note that using a straight path requires $D$ to be convex in classical analysis, as it might temporarily exit the physical boundary of non-convex domains (e.g., fluid flow around obstacles). Fortunately, parameterized neural networks act as global continuous functions, hence implicitly perform a \textit{smooth analytic continuation} of the physical state field into the convex hull of $D$. 

In our formulation, the proposed path integration and differentiation operators rely on two fundamental mathematical prerequisites: 1) The state variable $\boldsymbol u$ is sufficiently smooth. Specifically, we require $\boldsymbol{u} \in \mathcal{C}^1(D \times [0,T])$ for first-order PDEs, and $\boldsymbol{u} \in \mathcal{C}^2$ for systems involving second-order spatial derivatives. This guarantees that the Jacobian-based operators (e.g., $\boldsymbol K$ in Sec. \ref{sec:constraints}) are well-defined; 2) The domain is topologically connected to ensure the physical meaningfulness of bridging any two states. 

\subsection{Physics-informed MeanFlow Constraints}
\label{sec:constraints}
By differentiating the fundamental identity Eq. \eqref{eq:fund-id} with respect to the initial coordinates $(\boldsymbol{\xi}, \tau)$, we derive the physics-informed constraints for our framework, i.e., 
\begin{align}
& \textstyle l_t \cdot \boldsymbol m
= l \boldsymbol K \boldsymbol f
+ \gamma l^2 \cdot \frac{\partial \boldsymbol m}{\partial \tau},
\label{eq:time-mf}
\\
& \left\{
\begin{aligned}
\textstyle l \cdot \boldsymbol K \cdot \nabla^\top \boldsymbol u(\boldsymbol \xi, \tau)
&= \boldsymbol m(\boldsymbol x - \boldsymbol \xi)^\top
- l^2 \cdot \frac{\partial \boldsymbol m}{\partial \boldsymbol \xi},
\\
\textstyle l \cdot \boldsymbol K \cdot \Delta \boldsymbol u(\boldsymbol \xi, \tau)
&= -(\boldsymbol \alpha + 2 \boldsymbol K^{-1} \boldsymbol \beta),
\quad \cdots .
\end{aligned}
\right.
\label{eq:spatial-mf}
\end{align}
Here, $\boldsymbol{\alpha}\triangleq n\boldsymbol{m}-\lfrac{l_s^2}{l^2}\cdot\boldsymbol{m}+2l^2{\Delta}\boldsymbol{m}(\boldsymbol{\xi}, {\tau})$ and $\boldsymbol{\beta}\triangleq-\frac{\partial\boldsymbol{m}}{\partial \boldsymbol{\xi}}(\boldsymbol{x}-\boldsymbol{\xi})-\lfrac{l_s^2}{l}\cdot\frac{\partial\boldsymbol{m}}{\partial \boldsymbol{u}}\boldsymbol{m}$, with $l_t = t - \tau$ being the time step, $l_s = \|\boldsymbol{x}-\boldsymbol{\xi}\|_2$ the spatial distance, and $\boldsymbol{K} = \boldsymbol{I} + l \frac{\partial \boldsymbol{m}}{\partial \boldsymbol{u}}$ the Jacobian-based correction operator. Specifically, Eq. \ref{eq:time-mf} is derived by differentiating Eq.~\eqref{eq:fund-id} with respect to time $\tau$, while Eq. \ref{eq:spatial-mf} are obtained by taking spatial derivatives (first-order, second-order, and so forth) with respect to $\boldsymbol{\xi}$. Prop. \ref{app:prop:mf-constraints}, Appendix \ref{app:theory} provides the detailed derivation. Consequently, the spatio-temporal MeanFlow loss is defined as
\begin{equation}
\mathcal L_\textrm{ST-MeanFlow} \triangleq \Vert \textstyle l_t\cdot\boldsymbol m - l\boldsymbol K\boldsymbol f - {\gamma}l^2\cdot\frac{\partial\boldsymbol m}{\partial {\tau}}\Vert _2^2, 
\label{eq:mf-loss}
\end{equation}
with the gradient terms in $\boldsymbol f$ substituted by Eqs. \eqref{eq:spatial-mf}. 

\subsection{Space-Time Decoupling Strategy}
\label{sec:decoupling}

In many physical systems, temporal evolution and spatial variations occur at vastly different magnitudes (e.g., fast chemical reactions in large-scale reactors) \cite{wang2021understanding,krishnapriyan2021characterizing}. When these dimensions are tightly coupled in a single loss function, the optimization landscape often exhibits pathological curvature, characterized by a highly ill-conditioned Hessian matrix \cite{wang2022and} with large off-diagonal cross-coupling terms (Fig. \ref{fig:framework} B). To solve the challenge of inherent \textit{spatio-temporal scale mismatch}, we decompose the spatio-temporal objective into two independent first-order components:

\textbf{Temporal MeanFlow Loss.} $\mathcal{L}_{\textrm{T-MeanFlow}} \triangleq \left\| \boldsymbol{m} - \boldsymbol K\boldsymbol f - {\gamma}l_t \frac{\partial \boldsymbol{m}}{\partial \tau} \right\|_2^2$ constrains the network to satisfy the temporal evolution predicted by the Spatio-Temporal MeanFlow identity along the time-line. 

\textbf{Spatial MeanFlow Loss.} $\mathcal{L}_{\textrm{S-MeanFlow}} \triangleq \left\| l_s \boldsymbol K \nabla^\top \boldsymbol{u} - \boldsymbol{m}(\boldsymbol{x}-\boldsymbol{\xi})^\top + l_s^2 \frac{\partial \boldsymbol{m}}{\partial \boldsymbol{\xi}} \right\|_2^2$ enforces consistency between predicted Spatio-Temporal MeanFlow's spatial derivatives and numerical spatial gradients. 

Thm. \ref{thm:decoupling-error}, Sec. \ref{sec:theory} ensures that the decoupling provide a theoretical reference for the overall Spatio-Temporal MeanFlow loss. 
The core advantage of this decoupling is the removal of explicit spatio-temporal cross-derivatives in the residual definitions. Mathematically, this transformation reshapes the Hessian into a nearly block-diagonal structure, reducing its condition number and enabling faster, more robust convergence (Thm. \ref{app:thm:hessian}, App. \ref{app:hessian-condition}). This decoupling is theoretically supported by the \textit{Space-Time Decoupling Bound} presented in the following section, which ensures that minimizing these independent losses effectively constrains the global spatio-temporal error.
    
        \begin{algorithm}[t] 
        \small
        \caption{Training}
        \label{alg:training}
        \begin{algorithmic}[1]
        \Require PDE params $\boldsymbol{a}$, temporal indicator ${\gamma}$, network $\boldsymbol{m}_\theta$, learning rate $\eta$
        \For{each training iteration}
            \If{${\gamma} \neq 0$} \Comment{Time-dependent PDEs}
                \State Sample time pairs $\tau < t$ and grid $\boldsymbol{x}$;
                \State Get input: $\boldsymbol{I} \gets [\boldsymbol{u}(\boldsymbol{x}, \tau), \tau, \boldsymbol{x}, t, \boldsymbol{a}]$;
                \State Predict MeanFlow: $\boldsymbol{m} \gets \boldsymbol{m}_\theta(\boldsymbol{I})$;
                \State Compute 
                $\boldsymbol{u}(\boldsymbol{x}, t) \gets \boldsymbol{u}(\boldsymbol{x}, \tau) + (t-\tau)  \boldsymbol{m}$;
                \State Compute Loss: $\mathcal{L}_{\textrm{Total}} 
                $;
            \Else \Comment{Static PDEs (${\gamma} = 0$)}
                \State Analytically extend boundary conditions 
                \Statex \quad\quad to the entire domain as $\boldsymbol{u}_0(\boldsymbol{x})$; 
                \State Get input: $\boldsymbol{I} \gets [\boldsymbol{u}_0(\boldsymbol{x}), \boldsymbol{a}]$;
                \State Predict MeanFlow: $\boldsymbol{m} \gets \boldsymbol{m}_\theta(\boldsymbol{I})$;
                \State Construct state: $\boldsymbol{u}(\boldsymbol{x}) \gets \boldsymbol{u}_0(\boldsymbol{x}) + \boldsymbol{m}$;
                \State Compute Loss: $\mathcal{L}_{\textrm{Total}} \gets \mathcal{L}_{\textrm{Data}} + \lambda_{s}\mathcal{L}_{\textrm{S-MF}}$; 
            \EndIf
            \State Update parameters: $\theta \gets \theta - \eta \nabla_\theta \mathcal{L}_{\textrm{Total}}$;
        \EndFor
        \State \Return $\theta^*={\theta}$;
        \end{algorithmic}
        \end{algorithm}
    
\subsection{Detailed Implemention}
\label{sec:framework}

\paragraph{Training.} 
We parameterize the MeanFlow predictor $\boldsymbol{m}$ using a neural network, denoted as $\boldsymbol{m}_\theta$, to approximate the true $\boldsymbol{m}$. For time-dependent PDEs, the network is designed to map the concatenated input $[\boldsymbol{u}(\boldsymbol{x}, \tau), \tau, \boldsymbol{x}, t, \boldsymbol{a}]$, where $\tau$ and $t$ are randomly sampled time pairs during training. The model outputs the average integral from time $\tau$ to $t$. According to the fundamental identity (Eq.~\eqref{eq:fund-id}), the target state at time $t$ is explicitly constructed as $\boldsymbol{u}(\boldsymbol{x}, t) = \boldsymbol{u}(\boldsymbol{x}, \tau) + (t-\tau) \cdot \boldsymbol{m}_\theta$. 
The total objective function is defined as the weighted sum of the data loss (measuring the discrepancy between the predicted solution at time $t$ and observational data) and the physics-informed decoupled spatio-temporal MeanFlow losses:
\begin{equation}
\textstyle\mathcal{L}_{\textrm{Total}} =  \mathcal{L}_{\textrm{Data}} + \lambda_{t}\mathcal{L}_{\textrm{T-MeanFlow}} + \lambda_{s}\mathcal{L}_{\textrm{S-MeanFlow}},
\label{eq:loss_dynamic}
\end{equation}
where $\lambda_t$ and $\lambda_s$ are the weighting coefficients.

Conversely, when $\gamma = 0$, the system degenerates into a static PDE. In this scenario, temporal variables are omitted. The model instead takes the analytically extended boundary conditions $\boldsymbol{u}_0(\boldsymbol{x})$ (where $\boldsymbol{u}_0\rvert_{\partial D} = \boldsymbol{u}\rvert_{\partial D}$) and the coefficient field $\boldsymbol{a}$ as inputs. The solution over the interior domain $\boldsymbol{x} \in D \setminus \partial D$ is constructed as $\boldsymbol{u}(\boldsymbol{x}) = \boldsymbol{u}_0(\boldsymbol{x}) + \boldsymbol{m}_\theta$, where $\boldsymbol{m}_\theta$ represents the network output. The temporal loss naturally vanishes, and the training objective simplifies to:
$
    \mathcal{L}_{\textrm{Total}} =  \mathcal{L}_{\textrm{Data}} + \lambda_{s}\mathcal{L}_{\textrm{S-MeanFlow}}.
$
Alg. \ref{alg:training} detail the training optimization procedures of the proposed framework. 

Note that in our practical implementation, considering that the benchmark PDEs evaluated in this work are defined on regular grids, we maintain fixed spatial coordinates ($\boldsymbol{x}$) across inputs and outputs. Consequently, our current network design predominantly emphasizes continuous-time evolution. However, our fundamental mathematical formulation inherently supports continuous spatial evolution as well, where the anchor point $\boldsymbol{\xi}$ and the target point $\boldsymbol{x}$ can vary continuously. The detailed algorithms for the fully generalized spatio-temporal framework are provided in App.~\ref{app:generalized_algorithm}.
\begin{algorithm}[t] 
        \small
        \caption{Multi-step Inference}
        \label{alg:inference}
        \begin{algorithmic}[1]
        \Require Pretrained network $\boldsymbol{m}_{\theta^{*}}$, initial state $\boldsymbol{u}_0=\boldsymbol u(\boldsymbol x, 0)$, PDE params $\boldsymbol{a}$, target time $T$, steps $N$
        \State Calculate time step: $\Delta t \gets T / N$;
        \State Initialize trajectory list with $\boldsymbol{u}_0$;
        \State Initialize current time: $t_0 \gets 0$;
        \For{$k = 0$ to $N-1$}
            \State Target time: $t_{k+1} \gets t_k + \Delta t$;
            \State Get input: 
            \Statex \quad\quad $\boldsymbol{I}_k \gets [\boldsymbol{u}_k, t_k, \boldsymbol{x}, t_{k+1}, \boldsymbol{a}]$;
            \State Predict average integral (MeanFlow): 
            \Statex \quad\quad $\boldsymbol{m}_k \gets \boldsymbol{m}_{\theta^{*}}(\boldsymbol{I}_k)$;
            \State Update state via continuous integral: 
            \Statex \quad\quad $\boldsymbol{u}_{k+1} \gets \boldsymbol{u}_k + \Delta t \cdot \boldsymbol{m}_k$;
            \State Append $\boldsymbol{u}_{k+1}$ to trajectory list;
        \EndFor
        \State \Return $\boldsymbol{u}_T$;
        \end{algorithmic}
        \end{algorithm}

\begin{table*}[t]
\centering
\small
\setlength{\tabcolsep}{4.8pt}
\renewcommand{\arraystretch}{1.15}
\caption{Results for forward PDE solving on four synthetic benchmarks and one real-world dataset, alongside the inverse problem on Darcy flow. We report the relative $L_2$ error ($\downarrow$) as the mean $\pm$ standard deviation across three different random seeds. All reported metrics are multiplied by $10^{-2}$ for readability. Results for CFO on steady-state PDEs are omitted, as it is designed for time-dependent system. The best results are highlighted in \textbf{bold}, while the second-best results are \underline{underlined}.}
\label{tab:main_results}
\begin{tabular*}{\textwidth}{@{} l @{\hspace{12pt}}@{\extracolsep{\fill}} c c c c c c @{}}
\toprule
\multirow{2}{*}{\makecell[l]{Method \\ ($\times 10^{-2}$)}} &
\multicolumn{4}{c}{Synthetic Forward PDEs} &
\multicolumn{1}{c}{Real-world} &
\multicolumn{1}{c}{Inverse} \\
\cmidrule(lr){2-5} \cmidrule(lr){6-6} \cmidrule(l){7-7}
& Burgers & Navier--Stokes & Darcy & Poisson & HYCOM & Darcy \\
\midrule

\rowcolor{categorygray}[0pt][0pt]
\multicolumn{7}{@{}l}{\textit{Physics-informed coordinate-based solvers}} \\
PINN     & 3.56$\pm$0.42 & 40.21$\pm$0.20 & 15.98$\pm$1.55 & 32.01$\pm$4.69 & 28.12$\pm$0.24 & 27.41$\pm$0.01 \\
I-PINN   & 8.92$\pm$3.19 & 35.65$\pm$0.15 & 8.41$\pm$0.12 & 54.64$\pm$4.09 & 33.05$\pm$0.45 & 8.95$\pm$0.01 \\

\midrule
\rowcolor{categorygray}[0pt][0pt]
\multicolumn{7}{@{}l}{\textit{Neural operator solvers}} \\
DeepONet & 5.59$\pm$0.07 & 19.68$\pm$1.14 & 2.92$\pm$0.02 & 5.46$\pm$0.33 & 34.23$\pm$0.54 & 8.73$\pm$0.26 \\
FNO      & 1.05$\pm$0.07 & 0.38$\pm$0.00 & \underline{0.46$\pm$0.00} & \underline{0.23$\pm$0.01} & 24.08$\pm$0.79 & \underline{1.40$\pm$0.00} \\
PINO     & 2.06$\pm$0.18 & \underline{0.32$\pm$0.00} & 16.30$\pm$0.03 & 1.58$\pm$0.00 & \underline{23.79$\pm$0.11} & 12.19$\pm$0.32 \\
CFO      & \underline{0.80$\pm$0.15} & 0.88$\pm$0.00 & -- & -- & 32.75$\pm$4.26 & 17.07$\pm$1.01 \\

\midrule
\rowcolor{categorygray}[0pt][0pt]
\multicolumn{7}{@{}l}{\textit{Generative PDE solvers}} \\
DiffusionPDE    & 8.61$\pm$0.31 & 118.74$\pm$0.95 & 5.48$\pm$1.75 & 0.76$\pm$0.03 & 180.89$\pm$0.72 & 4.49$\pm$0.94 \\
Rectified Flow  & 2.76$\pm$0.06 & 1.40$\pm$0.28 & 0.71$\pm$0.07 & 1.58$\pm$0.00 & 31.80$\pm$0.21 & 36.95$\pm$1.88 \\
PBFM            & 3.11$\pm$0.60 & 137.45$\pm$15.35 & 0.62$\pm$0.01 & 28.56$\pm$0.07 & 83.86$\pm$5.43 & \bf{1.02$\pm$0.03} \\

\midrule
\textbf{Ours} & \bf{0.26$\pm$0.00} & \bf{0.19$\pm$0.02} & \bf{0.25$\pm$0.00} & \bf{0.18$\pm$0.02} & \bf{23.43$\pm$0.12} & 2.72$\pm$0.00 \\
\bottomrule
\end{tabular*}
\end{table*}
\paragraph{Inference.}
We first focus on the one-step inference scenario. For time-dependent PDEs, standard deep learning-based solvers typically take the initial condition and physical coefficients as inputs to predict the solution over the spatial domain at a target time $t$. Following this paradigm, our algorithm takes $[\boldsymbol{u}(\boldsymbol{x}, 0), 0, \boldsymbol{x}, t, \boldsymbol{a}]$ as input during inference to predict the average integral from time $0$ to $t$, denoted as $\boldsymbol{m}_{\boldsymbol{\theta}}(\boldsymbol{x},t)$. The final solution at time $t$ is then recovered via 
\begin{equation}
\textstyle\boldsymbol{u}(\boldsymbol{x}, t) = \boldsymbol{u}(\boldsymbol{x}, 0) + t \cdot \boldsymbol{m}_\theta(\boldsymbol{x}, t).
\end{equation}
For static systems, the general objective is to derive the global solution given specific boundary conditions and coefficient fields. Maintaining consistency with our training phase, the model simply takes the analytically extended boundary conditions and the coefficient field as inputs and then get the solution across the spatial domain through $\boldsymbol{u}(\boldsymbol{x}) = \boldsymbol{u}_0(\boldsymbol{x}) + \boldsymbol{m}_\theta$. Furthermore, for time-dependent PDEs, since our framework models the continuous integral over arbitrary time intervals, it naturally extends to multi-step inference. The detailed procedure for multi-step prediction is provided in Alg.~\ref{alg:inference}.

\subsection{Theoretical Results}
\label{sec:theory}
The proposed framework is supported by two core theoretical pillars: a-posteriori error bounds for the predicted state (Thm. \ref{thm:error_bound}), and effectiveness of the decoupled loss terms (Thm. \ref{thm:decoupling-error}). App. \ref{app:error-estimation}-\ref{app:hessian-condition} provide proofs for them, with the improved optimization landscapes through decoupling (Thm. \ref{app:thm:hessian}). 

For simplicity, we define $\boldsymbol r_\textrm{MeanFlow}\triangleq l_t\boldsymbol m - l\boldsymbol K\boldsymbol f-{\gamma}l^2\frac{\partial\boldsymbol m}{\partial {\tau}}$ with higher-order spatial gradients in $\boldsymbol h$ substituted by spatial MeanFlow Eqs. \eqref{eq:spatial-mf}, $\boldsymbol r_\textrm{Temp}\triangleq\boldsymbol m - \boldsymbol K\boldsymbol f-{\gamma}l_t\frac{\partial\boldsymbol m}{\partial {\tau}}$, and $\boldsymbol r_\textrm{Spac}\triangleq l_s\boldsymbol K{\nabla}^\top\boldsymbol u - \boldsymbol m(\boldsymbol x-\boldsymbol {\xi})^\top+l_s^2\frac{\partial\boldsymbol m}{\partial \boldsymbol {\xi}}$ as the residual vectors for different MeanFlow losses. 



\begin{theorem}[A-posteriori Error Estimate]
\label{thm:error_bound}
Assume the MeanFlow predictor $\boldsymbol{m}_\theta$ is $\mathcal{C}^1$-continuous. If the total training loss bounds the squared path residuals such that $\mathcal{L}_{\textrm{MeanFlow}} \le \epsilon$, then the pointwise error between reconstructed state $\boldsymbol{u}_\theta$ and true solution $\boldsymbol{u}_{\textrm{True}}$ at any target point $(\boldsymbol{x}, t)$ is bounded by:
\begin{equation}
    \|\boldsymbol{u}_\theta(\boldsymbol{x}, t) - \boldsymbol{u}_{\textrm{True}}(\boldsymbol{x}, t)\|_2 \le C \sqrt{\epsilon} + \mathcal{O}(l^2),
\end{equation}
where $C$ is a constant depending on the path length $l$ and the Lipschitz constant of the PDE operator.
\end{theorem}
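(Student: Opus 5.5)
The plan is to turn the residual bound into a perturbed integral identity and then close it with a Gr\"onwall-type stability argument along the straight path $\mathfrak p$.

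\textbf{Step 1 (residual as defect).} For the true solution, the exact MeanFlow $\boldsymbol m^\star$ satisfies the fundamental identity Eq.~\eqref{eq:fund-id}. Differentiating it in $\tau$ gives the exact constraint Eq.~\eqref{eq:time-mf} with zero residual. For the network $\boldsymbol m_\theta$, write
\[
l_t\boldsymbol m_\theta - l\boldsymbol K_\theta\boldsymbol f[\boldsymbol u_\theta] - \gamma l^2\,\partial_\tau \boldsymbol m_\theta = \boldsymbol r_\textrm{MeanFlow}.
\]
Here $\boldsymbol u_\theta(\boldsymbol x,t)\triangleq \boldsymbol u(\boldsymbol\xi,\tau)+l\,\boldsymbol m_\theta$. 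Because $\boldsymbol m_\theta\in\mathcal C^1$, this relation can be read backwards. It is exactly the $\tau$-derivative of the reconstruction map $\tau\mapsto \boldsymbol u_\theta(\boldsymbol x,t;\tau)$, with the chain-rule term $\boldsymbol K_\theta=\boldsymbol I+l\,\partial_{\boldsymbol u}\boldsymbol m_\theta$ absorbing the dependence through $\boldsymbol u(\boldsymbol\xi,\tau)$. The first key step is therefore to show that $\frac{\rd}{\rd\tau}\boldsymbol u_\theta(\boldsymbol x,t;\tau)$ equals $\gamma^{-1}$ times a normalised residual, i.e.\ $\boldsymbol r/(\gamma l)$ up to the $l_t/l$ geometric factors. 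The true quantity $\boldsymbol u_\textrm{True}(\boldsymbol x,t)$ is independent of $\tau$.

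\textbf{Step 2 (integrate and compare).} Integrate this derivative from the anchor $\tau$ up to $t$. As $\tau\to t$ and $\boldsymbol\xi\to\boldsymbol x$ we have $l\to0$, so $\boldsymbol u_\theta\to\boldsymbol u(\boldsymbol x,t)$ and the two agree at the degenerate end. This yields
\[
\boldsymbol u_\theta-\boldsymbol u_\textrm{True}=\int_{\mathfrak p}\frac{\boldsymbol r}{l}\,\rd s \;+\; \text{(mismatch from } \boldsymbol f[\boldsymbol u_\theta]-\boldsymbol f[\boldsymbol u_\textrm{True}]).
\]
The second term is controlled by the Lipschitz constant $L_f$ of $\boldsymbol f$, after substituting the gradient terms via the spatial constraints Eq.~\eqref{eq:spatial-mf}. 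This is where $\boldsymbol r_\textrm{Spac}$ enters, and it also requires bounding $\|\boldsymbol K_\theta\|$ using the $\mathcal C^1$ assumption.

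\textbf{Step 3 (Gr\"onwall and Cauchy--Schwarz).} The previous step gives a linear integral inequality for $e(s)=\|\boldsymbol u_\theta-\boldsymbol u_\textrm{True}\|$ along the path. Gr\"onwall then gives $e\le e^{L_f\|\boldsymbol K\|l}\int\|\boldsymbol r\|/l$. Cauchy--Schwarz against $\mathcal L_\textrm{MeanFlow}\le\epsilon$ bounds the integral by $\sqrt{l}\sqrt{\epsilon}$, giving $C\sqrt\epsilon$ with $C=C(l,L_f)$.

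\textbf{Step 4 (the $\mathcal O(l^2)$ term).} This remainder comes from the fact that the loss is evaluated at sampled points rather than along the whole path. The fix is to Taylor-expand the residual between sample and path point, using $\mathcal C^1$ smoothness of $\boldsymbol m_\theta$ and $\mathcal C^2$ of $\boldsymbol u$. It also absorbs the analytic-continuation discrepancy from the straight-path linearisation, which is second order in $l$.

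\textbf{Main obstacle.} The hardest part will be Step 2, for two reasons. Dividing by $\gamma$ and by $l$ near the degenerate endpoint must be handled carefully; the static case $\gamma=0$ should instead go through the spatial residual alone. It is also delicate to turn the pointwise/average loss into an $L^\infty$ pointwise bound. I would first try to assume the loss controls the residual uniformly on the path, so that $\|\boldsymbol r\|\le\sqrt\epsilon$ pointwise, and relegate everything else to the $\mathcal O(l^2)$ remainder.
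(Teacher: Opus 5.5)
Your route differs from the paper's. The paper fixes the anchor and propagates forward. It takes the linear interpolant $\boldsymbol u_\theta(s)=\boldsymbol u(\boldsymbol\xi,\tau)+s\,l\,\boldsymbol m_\theta$ and pins $\boldsymbol v(0)=\boldsymbol 0$ at the anchor. It then adds and subtracts $l\boldsymbol f(\boldsymbol u_\theta(s))$ to get a Lipschitz feedback term plus ``$\boldsymbol r_{\textrm{MeanFlow}}+\mathcal O(l^2)$''. Gr\"onwall and Cauchy--Schwarz close it with $C=\sqrt{(e^{2lL}-1)/(2lL)}$.

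You instead slide the anchor toward the target and pin the degenerate end, where the reconstruction is exact. That idea is sound, and it matches how the constraint was derived (it is the anchor-derivative of the fundamental identity). But Step 3 does not go through. Take the grid case $\boldsymbol x=\boldsymbol\xi$, $\gamma=1$, and set $\boldsymbol U(\tau')=\boldsymbol u_{\textrm{True}}(\boldsymbol x,\tau')+(t-\tau')\,\boldsymbol m_\theta(\boldsymbol u_{\textrm{True}}(\boldsymbol x,\tau'),\tau',t)$. The chain rule gives exactly $\boldsymbol U'(\tau')=\boldsymbol K\boldsymbol f-\boldsymbol m_\theta+l\,\partial_\tau\boldsymbol m_\theta=-\boldsymbol r_{\textrm{MeanFlow}}(\tau')/(t-\tau')$, so
\[
\boldsymbol u_\theta(\boldsymbol x,t)-\boldsymbol u_{\textrm{True}}(\boldsymbol x,t)=\int_\tau^t\frac{\boldsymbol r_{\textrm{MeanFlow}}(\tau')}{t-\tau'}\,\rd\tau'=\int_\tau^t\boldsymbol r_{\textrm{Temp}}(\tau')\,\rd\tau'.
\]
The weight $1/l$ blows up at the end you pinned. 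Your fallback assumption $\|\boldsymbol r_{\textrm{MeanFlow}}\|_2\le\sqrt\epsilon$ along the path gives $\sqrt\epsilon\int_\tau^t\rd\tau'/(t-\tau')=\infty$. Cauchy--Schwarz cannot fix this, because $1/l$ is not square-integrable near $l=0$. Your claimed $\sqrt{l}\sqrt{\epsilon}$ treats $\boldsymbol r/l$ as if $\mathcal L_{\textrm{MeanFlow}}$ controlled it. In fact $\mathcal L_{\textrm{MeanFlow}}$ controls $\boldsymbol r_{\textrm{MeanFlow}}=l\,\boldsymbol r_{\textrm{Temp}}$, which carries an extra factor $l$. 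So a bound in terms of $\epsilon$ alone fails exactly at the obstacle you flagged.

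The repair is your Step 4 idea, applied to the normalised residual at the single sampled anchor rather than to a path-wise loss. If $\tau'\mapsto\boldsymbol r_{\textrm{Temp}}(\tau')$ is Lipschitz, then $\int_\tau^t\boldsymbol r_{\textrm{Temp}}\,\rd\tau'=l\,\boldsymbol r_{\textrm{Temp}}(\tau)+\mathcal O(l^2)=\boldsymbol r_{\textrm{MeanFlow}}(\tau)+\mathcal O(l^2)$. This gives $\sqrt\epsilon+\mathcal O(l^2)$ with $C=1$, and the $\mathcal O(l^2)$ term appears naturally. However, $\boldsymbol r_{\textrm{Temp}}$ already contains $\partial_\tau\boldsymbol m_\theta$ and $\partial_{\boldsymbol u}\boldsymbol m_\theta$, so this needs $\boldsymbol m_\theta\in\mathcal C^2$ (or Lipschitz first derivatives). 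Under the stated $\mathcal C^1$ hypothesis you only get an $o(l)$ remainder.

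Two smaller points. First, the Lipschitz/Gr\"onwall machinery of Steps 2--3 is superfluous in your formulation. The anchor state is the true state, so $\boldsymbol f$ in the residual is evaluated at $\boldsymbol u_{\textrm{True}}(\boldsymbol\xi,\tau)$, not at $\boldsymbol u_\theta$ as you wrote in Step 1. The error therefore never feeds back into its own derivative; a Lipschitz constant would enter only through the gradient substitution in $\boldsymbol f$, multiplying a spatial residual rather than the error. Second, for $\boldsymbol x\neq\boldsymbol\xi$, integrating in $\tau$ alone stops at $l=l_s\neq0$. Reaching the degenerate point requires moving $\boldsymbol\xi$ as well, and the $\boldsymbol\xi$-derivative of the reconstruction is the spatial residual, which the hypothesis $\mathcal L_{\textrm{MeanFlow}}\le\epsilon$ does not control.
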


Thm. \ref{thm:error_bound} provides a bridge between the optimization target and the physical accuracy, justifying the use of MeanFlow residuals as a proxy for solution quality.

\begin{theorem}[Space-Time Decoupling Bound]
\label{thm:decoupling-error}
Assume that the physical system satisfies regular linear gradient structure, whose differential function $\boldsymbol f$ can be split into ${\nabla}^\top\boldsymbol u\cdot\boldsymbol g + \boldsymbol h$ where $\boldsymbol h$ is irrelevant to first-order gradient ${\nabla}\boldsymbol u$. The global MeanFlow loss is bounded by space-time decoupled losses:
\begin{equation}
    \mathcal{L}_{\textrm{MeanFlow}} \le c_1 \mathcal{L}_{\textrm{Temp}} + c_2 \mathcal{L}_{\textrm{Spac}} + {\varepsilon}
\end{equation}
where the coefficients are defined as $c_1 = 3l_t^2$, and $c_2 = 3 \left( \frac{l - l_t}{l_s} \|\boldsymbol{g}\|_{\textrm{op}} \right)^2$. Here, ${\varepsilon}$ represents the structural coupling error and ${\varepsilon}=\mathcal O(l_s^2)$.
\end{theorem}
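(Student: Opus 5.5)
The plan is to express $\boldsymbol r_\textrm{MeanFlow}$ algebraically in terms of $\boldsymbol r_\textrm{Temp}$ and $\boldsymbol r_\textrm{Spac}$, plus a remainder. Then a three-term Young inequality $\|a+b+c\|^2\le 3(\|a\|^2+\|b\|^2+\|c\|^2)$ produces the constants $c_1=3l_t^2$, $c_2=3(\cdots)^2$, and a third term that becomes $\varepsilon$.

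\textbf{Split $\boldsymbol f$.} Write $\boldsymbol f=\nabla^\top\boldsymbol u\cdot\boldsymbol g+\boldsymbol h$. In $\boldsymbol r_\textrm{MeanFlow}$, the first-order gradient $\nabla^\top\boldsymbol u$ is replaced using the first line of Eq.~\eqref{eq:spatial-mf}. Concretely, $l\boldsymbol K\nabla^\top\boldsymbol u$ is substituted by $\boldsymbol m(\boldsymbol x-\boldsymbol\xi)^\top-l^2\partial_{\boldsymbol\xi}\boldsymbol m$. The decoupled residual uses the same expression with $l$ replaced by $l_s$: by definition,
$l_s\boldsymbol K\nabla^\top\boldsymbol u=\boldsymbol r_\textrm{Spac}+\boldsymbol m(\boldsymbol x-\boldsymbol\xi)^\top-l_s^2\partial_{\boldsymbol\xi}\boldsymbol m$.

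\textbf{Rewrite $l_t\boldsymbol r_\textrm{Temp}$.} Multiplying out,
\[
l_t\boldsymbol r_\textrm{Temp}=l_t\boldsymbol m-l_t\boldsymbol K\boldsymbol f-\gamma l_t^2\partial_\tau\boldsymbol m .
\]
Subtracting this from $\boldsymbol r_\textrm{MeanFlow}$ gives
\[
\boldsymbol r_\textrm{MeanFlow}-l_t\boldsymbol r_\textrm{Temp}=-(l-l_t)\boldsymbol K\boldsymbol f-\gamma(l^2-l_t^2)\partial_\tau\boldsymbol m .
\]
Decompose $(l-l_t)\boldsymbol K\boldsymbol f=(l-l_t)\boldsymbol K\nabla^\top\boldsymbol u\,\boldsymbol g+(l-l_t)\boldsymbol K\boldsymbol h$. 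In the gradient piece, insert $\boldsymbol K\nabla^\top\boldsymbol u=l_s^{-1}\bigl(\boldsymbol r_\textrm{Spac}+\boldsymbol m(\boldsymbol x-\boldsymbol\xi)^\top-l_s^2\partial_{\boldsymbol\xi}\boldsymbol m\bigr)$. This produces the term $\frac{l-l_t}{l_s}\boldsymbol r_\textrm{Spac}\boldsymbol g$, whose norm is at most $\frac{l-l_t}{l_s}\|\boldsymbol g\|_\textrm{op}\|\boldsymbol r_\textrm{Spac}\|$. Squaring it and applying the factor $3$ gives exactly $c_2\mathcal L_\textrm{Spac}$.

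\textbf{The remainder.} Everything left over is collected into a structural term $\boldsymbol e$:
\[
\boldsymbol e=-\tfrac{l-l_t}{l_s}\bigl(\boldsymbol m(\boldsymbol x-\boldsymbol\xi)^\top-l_s^2\partial_{\boldsymbol\xi}\boldsymbol m\bigr)\boldsymbol g-(l-l_t)\boldsymbol K\boldsymbol h-\gamma(l^2-l_t^2)\partial_\tau\boldsymbol m ,
\]
plus whatever discrepancy arises from substituting higher-order gradients in $\boldsymbol h$ via Eq.~\eqref{eq:spatial-mf}. Then $\varepsilon:=3\|\boldsymbol e\|^2$.

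\textbf{Main obstacle: showing $\varepsilon=\mathcal O(l_s^2)$.} The key estimate is
\[
l-l_t=\sqrt{l_t^2+l_s^2}-l_t=\frac{l_s^2}{l+l_t}\le \frac{l_s^2}{2l_t}.
\]
With this, $(l-l_t)/l_s=\mathcal O(l_s)$ and $l^2-l_t^2=l_s^2$. Each summand of $\boldsymbol e$ is then bounded, assuming bounded $\boldsymbol m$, $\partial\boldsymbol m$, $\boldsymbol h$ (the $\mathcal C^1/\mathcal C^2$ prerequisites) and $l_t$ bounded away from zero:
\begin{itemize}[nosep]
\item the $\boldsymbol m(\boldsymbol x-\boldsymbol\xi)^\top\boldsymbol g$ piece is $\mathcal O(l_s\cdot l_s)=\mathcal O(l_s^2)$;
\item the $l_s^2\partial_{\boldsymbol\xi}\boldsymbol m$ piece is $\mathcal O(l_s^3)$;
\item the $\boldsymbol h$ piece is $\mathcal O(l_s^2)$;
\item the $\gamma$ piece is $\mathcal O(l_s^2)$.
\end{itemize}
So $\|\boldsymbol e\|=\mathcal O(l_s^2)$, and hence $\|\boldsymbol e\|^2$ is even smaller, well within $\mathcal O(l_s^2)$. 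The delicate point is the uniformity of the constants, in particular the $1/l_t$ factor. I would state this as an explicit assumption that $l_t$ is bounded below on the sampled pairs, or else absorb the factor into the $\mathcal O$. Averaging the pointwise inequality over the sampling distribution then yields the loss-level bound.
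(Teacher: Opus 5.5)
Your argument follows the paper's proof essentially line for line. You subtract $l_t\boldsymbol r_{\textrm{Temp}}$ from $\boldsymbol r_{\textrm{MeanFlow}}$, split $\boldsymbol f=\nabla^\top\boldsymbol u\,\boldsymbol g+\boldsymbol h$, and insert $\boldsymbol K\nabla^\top\boldsymbol u=l_s^{-1}(\boldsymbol r_{\textrm{Spac}}+\boldsymbol m(\boldsymbol x-\boldsymbol\xi)^\top-l_s^2\partial_{\boldsymbol\xi}\boldsymbol m)$. You then apply the three-term inequality and control what is left with $l-l_t=l_s^2/(l+l_t)$. Your handling of the time-derivative term, $\gamma(l^2-l_t^2)=\gamma l_s^2$, is cleaner than the paper's coefficient $(l_t^2-\gamma l^2)$. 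The paper's version comes from dropping $\gamma$ when isolating $\boldsymbol m$, and it agrees with yours only when $\gamma=1$ or $l_t=0$.

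The estimate of $\varepsilon$ needs two repairs.

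First, the lower bound on $l_t$ is unnecessary and loses a case the statement covers. The paper treats the static case separately, with $l_t=0$ and $l=l_s$. There your bound $l-l_t\le l_s^2/(2l_t)$ is void, and absorbing $1/l_t$ into the $\mathcal O$ is not uniform as $l_t\to0$. Use instead $l\le l_t+l_s$, i.e.\ $0\le l-l_t\le l_s$. The four pieces of $\boldsymbol e$ are then $\mathcal O(l_s)$, $\mathcal O(l_s^2)$, $\mathcal O(l_s)$, $\mathcal O(l_s^2)$, so $\varepsilon=3\|\boldsymbol e\|^2=\mathcal O(l_s^2)$ uniformly in $l_t\ge0$, which is exactly the claim. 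Your sharper $\mathcal O(l_s^4)$ under $l_t$ bounded below is the paper's $\gamma=1$ case, which tacitly uses the same assumption.

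Second, do not fold an unspecified ``discrepancy from substituting higher-order gradients in $\boldsymbol h$'' into $\boldsymbol e$. Suppose $\boldsymbol r_{\textrm{MeanFlow}}$ carried a substituted $\tilde{\boldsymbol h}$ while $\boldsymbol r_{\textrm{Temp}}$ carried $\boldsymbol h$. You would then get an extra term $-l_t\boldsymbol K(\tilde{\boldsymbol h}-\boldsymbol h)$ that neither decoupled loss controls and that is not small in $l_s$; your bullet list never bounds it. The paper, and your own computation, use the same $\boldsymbol K\boldsymbol f$ in both residuals, so this term is zero; state that convention explicitly.

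For the same reason, drop your opening remark that $\nabla^\top\boldsymbol u$ in $\boldsymbol r_{\textrm{MeanFlow}}$ is replaced by the exact identity with $l$. Taken literally, it would change the coefficient of $\boldsymbol r_{\textrm{Spac}}$ to $l_t/l_s$. It would also leave a remainder $(l_t/l_s-1)\boldsymbol m(\boldsymbol x-\boldsymbol\xi)^\top\boldsymbol g$, which is generically of order $l_t$ rather than small in $l_s$.
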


The theoretical validity of decoupled training is established in Thm. \ref{thm:decoupling-error}, which proves that the global spatio-temporal consistency is bounded by the individual temporal and spatial errors.
This bound ensures that minimizing decoupled losses effectively controls the total physics-informed error. In addition, the gap between decoupled losses and spatio-temporal consistency is controlled by $l_s$, which can theoretically be infinitesimally small due to continuous modeling.


\begin{table*}[t]
\centering
\small
\setlength{\tabcolsep}{4pt}
\renewcommand{\arraystretch}{1.15}
\caption{Performance and computational cost on the Burgers benchmark. We report the relative $L_2$ error ($\times 10^{-2}$), inference time (milliseconds per sample), and GFLOPs per sample. Lower is better. The best results are highlighted with bold text and a gray background.}
\label{tab:efficiency_burgers}
\resizebox{\linewidth}{!}{
\begin{tabular}
{@{}lcccccccccc@{}}
\toprule
& \multicolumn{2}{c}{\textit{Coord-based}} & \multicolumn{4}{c}{\textit{Neural operators}} & \multicolumn{3}{c}{\textit{Generative}} & \\
\cmidrule(lr){2-3} \cmidrule(lr){4-7} \cmidrule(lr){8-10}
Metric & PINN & I-PINN & DeepONet & FNO & PINO & CFO & DiffPDE & RectFlow & PBFM & Ours \\
\midrule
Error ($\times 10^{-2}$) $\downarrow$ & 3.56 & 8.92 & 5.59 & 1.05 & 2.06 & 0.80 & 8.61 & 2.76 & 3.11 & \cellcolor{groupgray}\textbf{0.26} \\
\addlinespace[4pt]
Time (ms) $\downarrow$ & \cellcolor{groupgray}\textbf{0.032} & 0.161 & 0.120 & 0.213 & 0.263 & 0.033 & 23.487 & 8.753 & 2.263 & 0.239 \\
\addlinespace[4pt]
GFLOPs $\downarrow$ & 0.0715 & 0.0764 & \cellcolor{groupgray}\textbf{0.0004} & 0.0012 & 0.2742 & 0.0007 & 662.6990 & 155.3370 & 12.9219 & 0.0118 \\
\bottomrule
\end{tabular}
}
\end{table*}
\section{Experiments}
In this section, we comprehensively evaluate our Spatio-Temporal MeanFlow framework. First, in Sec.~\ref{subsec:forward}, we validate its PDE solving accuracy and computational efficiency against representative baselines on four PDE benchmarks. Next, Sec.~\ref{subsec:ablation} presents extensive ablations to verify the effectiveness of our core components. Specifically, we demonstrate the integration-step flexibility enabled by our framework functioning as a continuous integrator over arbitrary time intervals, and show how the temporal and spatial MeanFlow losses drive generalization to higher resolutions and OOD initial conditions. Finally, Sec.~\ref{subsec:broader_applicability} highlights the framework's broader applicability beyond synthetic forward prediction, showcasing its potential on real-world datasets and inverse problems.

\subsection{Forward PDE Solving on Synthetic Benchmarks}
\label{subsec:forward}

\paragraph{Benchmarks.}
We first evaluate the effectiveness of the proposed method for PDE solving using four standard synthetic benchmarks. The benchmarks include two time-dependent PDEs, 1D Burgers' equation and the 2D Navier--Stokes equation, and two steady-state PDEs, 2D Darcy flow and the 2D Poisson equation. All datasets are generated using numerical solvers and are divided into training, validation, and test sets with a ratio of 7:2:1. Further details about the governing equations, numerical discretizations, and data generation configurations are provided in App.~\ref{app:data}.

\paragraph{Baseline Methods and Metrics.}
We compare our method with three categories of PDE solvers: (1) physics-informed coordinate-based methods: PINN \cite{cuomo2022scientific}, I-PINN \cite{niu2025improved}; (2) neural operators: FNO \cite{li2020fourier}, DeepONet \cite{lu2019deeponet}, PINO \cite{li2024physics}, CFO \cite{hou2026cfo}); and (3) generative solvers: DiffusionPDE \cite{huang2024diffusionpde}, Rectified Flow \cite{armegioiu2025rectified}, PBFM \cite{pbfm2026}. All compared methods are reproduced using their publicly released code. Further details are provided in App.~\ref{app:baselines}. Following \cite{huang2024diffusionpde}, we report the relative $L_2$ error on the test set.
This metric enables fair cross-dataset comparisons regardless of solution magnitudes.

\paragraph{Implementation Details.}
All experiments are implemented in PyTorch and conducted on NVIDIA GeForce RTX 4080 GPUs. We employ the Adam \cite{kingma2014adam} optimizer for training. The learning rates are set to $5.5\times10^{-4}$, $2.0\times10^{-4}$, and $3.0\times10^{-4}$ for the Burgers' equation, NS equation, and steady-state PDEs. 
For the loss functions, the MeanFlow coefficients are consistently set to $\lambda_t = \lambda_s = 0.01$, with $\lambda_t$ omitted for steady-state cases. The model checkpoint achieving the lowest validation error is used for testing. Detailed hyperparameters and network architectures are provided in App.~\ref{app:implementation}.

\paragraph{Main Results.}
Table~\ref{tab:main_results} reports the forward prediction results on the four benchmarks. Our method achieves the best overall performance across both time-dependent and steady-state PDEs. On Burgers' equation and NS equation, the improvement indicates that directly learning finite-interval evolution can reduce temporal error accumulation compared with methods that rely only on supervised data fitting or pointwise physical residuals. On Darcy flow and Poisson equation, our method also obtains lower errors, showing that the spatial MeanFlow constraint provides an effective regularization. 
Figure~\ref{fig:ns_vis} shows qualitative results on NS test samples. Ours closely matches the ground-truth vorticity fields across different time steps, capturing the main flow structures and temporal evolution. Additional visual comparisons with baseline methods and training curves for all loss components (the total loss, the data loss, the temporal and the spatial MeanFlow loss) are detailed in App.~\ref{app:results}.

\paragraph{Efficiency and Computational Cost}
\label{app:efficiency}

Tab.~\ref{tab:efficiency_burgers} reports the detailed inference latency and computational cost on the Burgers benchmark. 
Inference time is measured in milliseconds per sample, and computational cost is measured in GFLOPs per sample. 
Our method achieves low prediction error with sub-millisecond inference and only $0.0118$ GFLOPs per sample. 
Compared to generative PDE solvers, it is substantially more efficient as it avoids the costly multi-step sampling or ODE evaluations. Meanwhile, when compared to similarly fast neural operator methods, our framework delivers lower prediction errors.

\begin{table*}[t]
\centering
\small
\setlength{\tabcolsep}{4pt}
\renewcommand{\arraystretch}{1.08}
\caption{Ablation study on Burgers evaluating the effects of temporal and spatial MeanFlow losses, resolution generalization and out-of-distribution initial condition generalization. All models are trained at the base resolution $s=128$ and evaluated on test sets generated with the same random seed for each setting. We report global relative $L_2$ error; lower is better.}
\label{tab:burgers_generalization_ablation}
\begin{tabularx}{\textwidth}{@{}lXccc>{\columncolor{gray!12}[0pt][0pt]}c@{}}
\toprule
Type & Test Setting & w/o Both & w/o T-MF & w/o S-MF & Ours \\
\midrule
\multirow{3}{*}{Resolution}
& $s=128$ & 0.017561 & 0.003075 & 0.010821 & 0.002574 \\
& $s=256$ & 0.017562 & 0.003051 & 0.010820 & 0.002419 \\
& $s=512$ & 0.017558 & 0.002913 & 0.010818 & 0.002395 \\
\midrule
\multirow{4}{*}{OOD IC}
& Smooth, $\gamma=3.5$ & 0.011952 & 0.010559 & 0.009174 & 0.003732 \\
& Low freq., $\tau=10$ & 0.020556 & 0.002792 & 0.016768 & 0.001223 \\
& Smooth+Amp., $\gamma=3.0,\sigma=61.25$ & 0.007475 & 0.003032 & 0.004114 & 0.001092 \\
& Smooth+Low-$\tau$, $\gamma=3.5,\tau=5$ & 0.008023 & 0.006197 & 0.005505 & 0.002187 \\
\bottomrule
\end{tabularx}
\end{table*}

\subsection{Ablation Study}
\label{subsec:ablation}
\paragraph{Effect of Spatio-Temporal MeanFlow losses.}
We first conduct an ablation study on the Burgers' equation at a base resolution of $s=128$ to test the effect of spatial MeanFlow losses (S-MF) and temporal MeanFlow losses (T-MF). We define four model variants based on their loss weight configurations: a baseline without Spatio-Temporal MeanFlow regularization (w/o both), models omitting either the temporal (w/o T-MF) or spatial component (w/o S-MF), and our full framework (Ours). As presented in Tab.~\ref{tab:burgers_generalization_ablation}, omitting either regularization term leads to an increase in predictive error, showing the effectiveness of both loss components. To further analyze the model's sensitivity, a further evaluation under various combinations of T-MF and S-MF coefficients is deferred to App.~\ref{app:results}.

\textbf{Resolution Generalization.}
Benefiting from the proposed spatial MeanFlow losses, a key advantage of our framework is its potential to generalize across varying spatial resolutions. 
We therefore directly evaluate the model, trained exclusively at $s=128$, on higher-resolution grids ($s=256$/$512$) without any retraining. As shown in Tab.~\ref{tab:burgers_generalization_ablation}, our method achieves stable zero-shot generalization across varying resolutions. Notably, removing the spatial MeanFlow loss (w/o S-MF) leads to severe performance degradation, confirming its critical role in maintaining physical consistency across discretizations.

\textbf{OOD Initial Condition Generalization.}
We further test generalization to out-of-distribution initial conditions by changing the Gaussian random field parameters used to generate the Burgers initial states, while keeping the resolution fixed. 
We consider smooth, low-frequency, smooth amplified, and smooth low-$\tau$ IC shifts. As Tab.~\ref{tab:burgers_generalization_ablation} shows, our method's superior OOD performance confirms that spatio-temporal MeanFlow losses robustly handles initial-condition shifts.

\begin{figure}[ht]
    \centering
\includegraphics[width=0.4\textwidth]{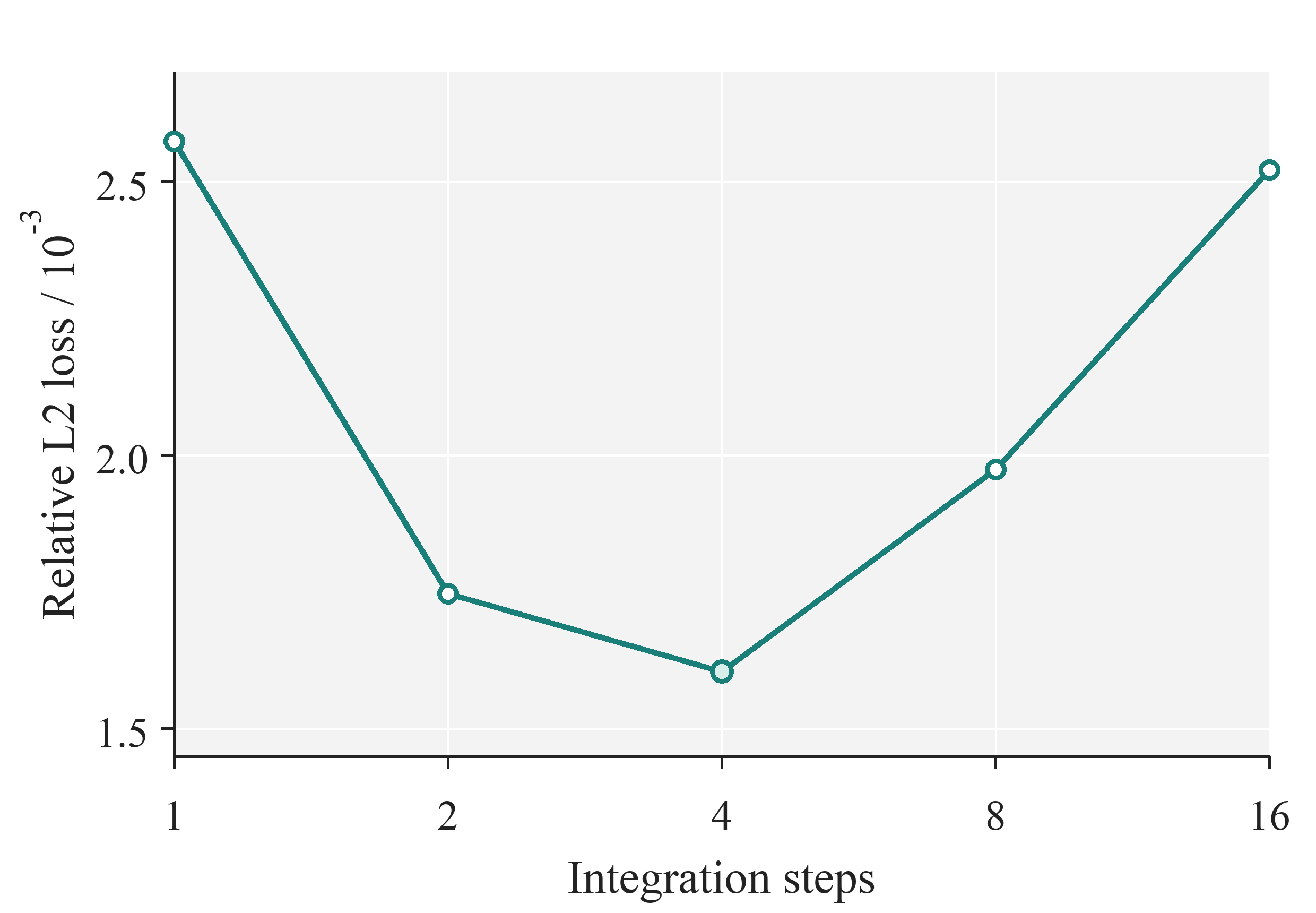}
    \caption{
    Relative $L_2$ error on Burgers with different numbers of integration steps.}
    \label{fig:burgers-step-ablation}
\end{figure}
\textbf{Integration steps.}
Since our model learns an integral operator over arbitrary time intervals,
inference does not require a fixed temporal discretization. Given an initial state
and a target time, the model can either predict the full interval in a single step
or compose several shorter integrations. We therefore evaluate the effect of the
number of integration steps on Burgers dataset. As shown in
Fig.~\ref{fig:burgers-step-ablation}, increasing the number of steps from $1$ to
$4$ reduces the relative $L_2$ error, showing that
the learned operator can be effectively reused over sub-intervals and benefits
from moderate temporal refinement. However, using too many steps degrades
performance, likely due to accumulated approximation errors from repeated
composition. This suggests that our framework provides a flexible
efficiency--accuracy trade-off: one-step inference gives fast direct prediction,
while a small number of integration steps improves accuracy without retraining.

\begin{figure*}[t]
    \centering
    \includegraphics[width=0.85\linewidth]{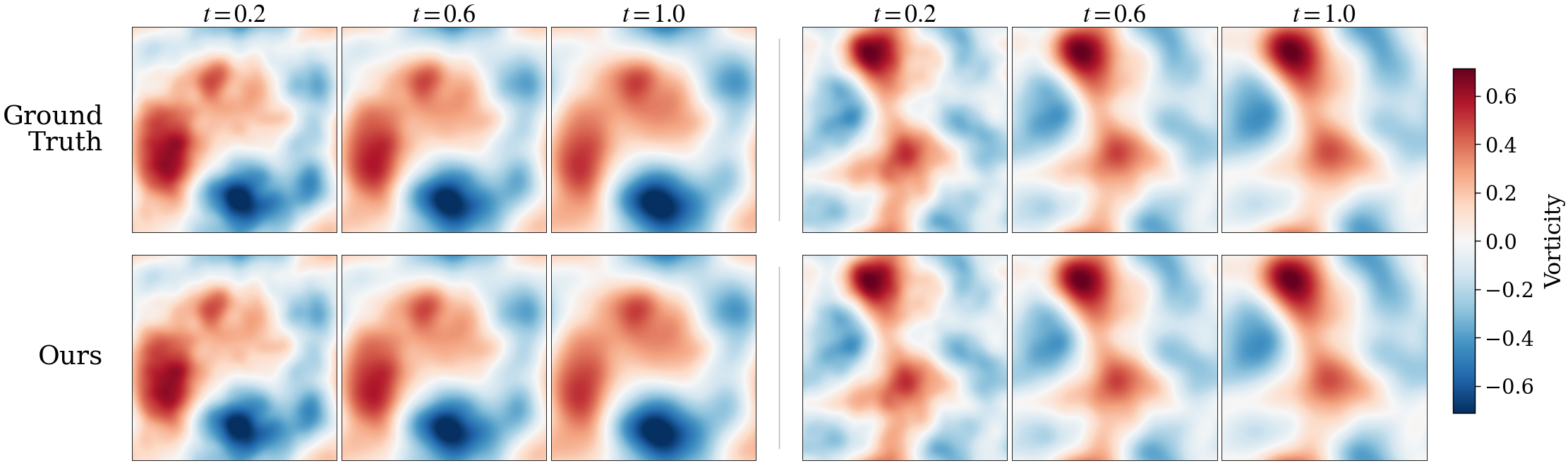}
    \caption{Qualitative visualization on the NS dataset. We compare ground-truth vorticity fields with predictions from our method at three time steps for two test samples.}
    \label{fig:ns_vis}
\end{figure*}
\subsection{Broader Applicability: Real-World Data and Inverse Modeling}
\label{subsec:broader_applicability}

To demonstrate the versatility of our Spatio-Temporal MeanFlow framework beyond standard synthetic forward prediction, we extend its evaluation to two challenging scenarios.

\textbf{Real-World Data.} We first evaluate our method on the HYCOM dataset (details in App.~\ref{app:hycom_dataset}), which provides data-assimilative 3D water temperature fields on a $32 \times 32 \times 8$ grid. A key challenge here is the absence of an explicit PDE residual due to realistic noise and assimilation artifacts. Therefore, we leverage the  Spatio-Temporal MeanFlow formulation as a weak physical regularization to learn the temporally averaged flow. This mitigates the overfitting risks of unconstrained direct regression and encourages temporally coherent predictions. As shown in Tab.~\ref{tab:main_results} (``Real-world''), this strategy ensures effective and stable predictions under realistic distributional shifts.
\textbf{Inverse Problem on Darcy Flow.} Furthermore, we evaluate whether our framework can be applied to inverse problems on Darcy flow, where the objective is to recover the unknown permeability field $\boldsymbol{a}(x)$ from observed pressure $\boldsymbol{u}^{\text{obs}}(x)$. Via test-time optimization, we freeze the pretrained forward model and optimize the log-permeability $\log \boldsymbol{a}$ as a learnable parameter using gradient descent. As shown in Tab.~\ref{tab:main_results}, our approach achieves accurate coefficient reconstruction, highlighting the broader potential of extending our framework to inverse modeling.

\section{Related Work}
 
\textbf{Physics-Informed Neural Networks (PINNs).} 
PINNs \cite{raissi2019physics} represent a foundational class of PDE solvers that parameterize solutions by embedding physical laws as soft constraints. Despite their widespread success, PINNs suffer from a stiff and pathological optimization landscape due to competing physical loss terms. While numerous strategies have been proposed to stabilize training (e.g., uncertainty-based weighting \cite{huang2021solving, kendall2018multi}), we argue that these methods rely solely on the PDE formulation itself, thereby overlooking the inherently continuous integral nature of the solving process. A more detailed discussion is provided in App. \ref{app:pinn_details}.

\textbf{Neural Operators.} 
Neural operators like FNO \cite{li2020fourier} and DeepONet \cite{lu2019deeponet} enable rapid PDE inference. To enhance physical consistency, methods like PINO \cite{li2024physics} explicitly incorporate residual constraints. For time-dependent systems, the Continuous Flow Operator (CFO) \cite{hou2026cfo} mitigates autoregressive error accumulation by repurposing flow matching to learn continuous-time dynamics. While CFO shares a similar continuous integration modeling philosophy with our approach, our framework additionally introduces a physics-inspired spatio-temporal MeanFlow loss, significantly enhancing its generalization capabilities. Other architectural variants targeting irregular geometries \cite{li2023fourier, li2020multipole} and multi-scale structures \cite{tripura2022wavelet, rahman2022u} are further discussed in App. \ref{app:operator_details}.

\textbf{Generative Modeling for PDEs.}
Generative approaches simulate PDEs by treating solution fields as samples from learned distributions. While early diffusion models \cite{huang2024diffusionpde, jacobsen2025cocogen} require numerous inference steps, recent works have improved sampling speed via rectified flows \cite{armegioiu2025rectified}, physical consistency via flow matching \cite{baldan2025flow}, and distillation \cite{zhang2025physics}. Nevertheless, these methods operate on a noise-to-solution generation paradigm. Rather than learning full data distributions, we simply adopt the continuous-time integration mechanism of generative models to formulate a continuous integrator over arbitrary time intervals. Further details are provided in App. \ref{app:generative_details}.


\section{Conclusion}

In this paper, we propose a physics-informed Neural PDE Solver via Spatio-Temporal MeanFlow. Functioning as an arbitrary-interval continuous integrator, it enables the computational efficiency of flexible integration steps. Furthermore, by enforcing physical consistency via a novel spatio-temporal loss, our method achieves enhanced generalization capabilities. Extensive evaluations demonstrate its superior accuracy, efficiency, and generalization.

\textbf{Limitations and Future Work.} 
Although our underlying mathematical formulation natively supports continuous spatial evolution, our current empirical implementation focuses on regular grids. Activating this continuous spatial modeling to tackle irregular domains represents the future step to further maximize the framework's generalization capabilities. Furthermore, extending our spatio-temporal loss to real-world datasets lacking analytical PDEs addresses a ubiquitous bottleneck in physics-informed learning. Finally, evolving our inverse modeling proof-of-concept into a unified architecture that seamlessly co-learns coefficient fields and solutions will further unleash the framework's potential.

\section*{Impact Statement}
Our Spatio-Temporal MeanFlow framework accelerates complex physical simulations by transforming traditional multi-step numerical integration into efficient, finite-interval predictions. This enhanced computational efficiency has the potential to significantly reduce the energy consumption and carbon footprint associated with large-scale scientific modeling, such as fluid dynamics, meteorology, and climate forecasting. Furthermore, its demonstrated versatility in handling real-world datasets and inverse problems opens up promising downstream applications in environmental monitoring and engineering design.

Despite these positive implications, the deployment of neural PDE solvers in safety-critical domains (e.g., aerospace design or nuclear engineering) requires careful consideration. Due to the inherent black-box nature of deep learning, neural solvers may fail unpredictably on extreme anomalies, even with improved out-of-distribution generalization. Therefore, we strongly recommend utilizing our framework to accelerate iterative design and optimization phases, while continuing to rely on mathematically guaranteed numerical methods for final safety certifications.





\bibliography{example_paper}
\bibliographystyle{icml2026}

\newpage
\appendix
\onecolumn



\section*{Appendix Overview}

\newcommand{\appitem}[2]{\item[\ref{#1}] \hyperref[#1]{#2}}

The following lists the structure of the appendix, with links to the respective sections.

\begin{itemize}
    \appitem{app:meanflow_derivation}{Detailed Derivation of the MeanFlow Identity}

    \appitem{app:related_work}{Extended Related Work}
    \begin{itemize}
        \appitem{app:pinn_details}{Physics-Informed Neural Networks (PINNs): Optimization Challenges in PINN}
        \appitem{app:operator_details}{Neural Operators: Advancements in Neural Operator Architectures}
        \appitem{app:generative_details}{Generative Modeling for PDE Simulation}
        
    \end{itemize}

    \appitem{app:generalized_algorithm}{Generalized Spatio-Temporal MeanFlow Framework}
    
    \appitem{app:theory}{Theoretical Results}
    \begin{itemize}
        \appitem{app:mf-constraints}{The Construction of Spatio-Temporal MeanFlow Constraints}
        \appitem{app:error-estimation}{Error Estimate}
        \appitem{app:decoupling-error}{MeanFlow Error Bounded by Decoupled Losses}
        \appitem{app:hessian-condition}{Hessian Condition Number Analysis}
    \end{itemize}

    \appitem{app:examples}{Examples of Spatio-Temporal MeanFlow Losse}

    \appitem{app:experiments}{Experiments}
    \begin{itemize}
        \appitem{app:data}{Details of Synthetic PDE Benchmark}
        \appitem{app:hycom_dataset}{ Details of the Real-world HYCOM Datase}
        \appitem{app:implementation}{ Implementation Details of Sec. 4.1}
        \appitem{app:hycom_implementation}{ Implementation Details for HYCO}
        \appitem{app:inverse_details}{ Implementation Details for the Inverse Darcy Experiment}
        \appitem{app:baselines}{Compared Methods}
        \appitem{app:results}{Additional Experimental Results of Sec. 4.1}
    \end{itemize}
\end{itemize}

\newpage

\section{Detailed Derivation of the MeanFlow Identity}
\label{app:meanflow_derivation}

Conventional Flow Matching parameterizes the instantaneous velocity field $\boldsymbol{v}$, which inherently requires multi-step numerical integration during inference. MeanFlow \cite{geng2025mean} seeks to directly model the average velocity $\boldsymbol{m}$ over a finite time interval, thereby enabling one-step prediction. 

Specifically, let $\boldsymbol{u}_t$ denote the state at time $t$ along the true ODE trajectory governed by $\frac{\rd\boldsymbol{u}_t}{\rd t} = \boldsymbol{v}(\boldsymbol{u}_t, t)$. The average velocity field $\boldsymbol{m}(\boldsymbol{u}_t, {\tau}, t)$ is defined as the displacement divided by the time interval $(t-{\tau})$:
$$\boldsymbol{m}(\boldsymbol{u}_t, {\tau}, t) \triangleq \frac{1}{t-{\tau}} \int_{\tau}^t \boldsymbol{v}(\boldsymbol{u}_s, s) \rd s.$$

Directly utilizing this definition as a training objective necessitates continuous numerical integration during training. To construct an integral-free training objective, we establish a differential relationship between $\boldsymbol{u}$ and $\boldsymbol{v}$. By rewriting the definition as $(t-{\tau})\boldsymbol{m}(\boldsymbol{u}_t, {\tau}, t) = \int_{\tau}^t \boldsymbol{v}(\boldsymbol{u}_s, s) \rd s$ and taking the total derivative with respect to time $t$ on both sides, we can eliminate the integral. 

For the left-hand side, applying the product rule yields:
$$\frac{\rd}{\rd t} \left[ (t-{\tau})\boldsymbol{m}(\boldsymbol{u}_t, {\tau}, t) \right] = \boldsymbol{m}(\boldsymbol{u}_t, {\tau}, t) + (t-{\tau})\frac{\rd}{\rd t}\boldsymbol{m}(\boldsymbol{u}_t, {\tau}, t).$$
The total derivative $\frac{\rd}{\rd t}\boldsymbol{m}(\boldsymbol{u}_t, {\tau}, t)$ accounts for both the explicit dependence of $\boldsymbol{m}$ on $t$ and its implicit dependence on $t$ through the evolving state $\boldsymbol{u}_t$. Using the chain rule, this expands as $\frac{\rd}{\rd t}\boldsymbol{m} = \partial_t \boldsymbol{m} + (\partial_{\boldsymbol{u}} \boldsymbol{m}) \frac{\rd\boldsymbol{u}_t}{\rd t} = \partial_t \boldsymbol{m} + (\partial_{\boldsymbol{u}} \boldsymbol{m}) \boldsymbol{v}(\boldsymbol{u}_t, t)$, assuming the reference time ${\tau}$ is a fixed constant independent of $t$ (hence $\frac{\rd{\tau}}{\rd t} = 0$).

For the right-hand side, applying the Fundamental Theorem of Calculus with respect to the upper limit of the integral yields the integrand evaluated at time $t$:
$$\frac{\rd}{\rd t} \int_{\tau}^t \boldsymbol{v}(\boldsymbol{u}_s, s) \rd s = \boldsymbol{v}(\boldsymbol{u}_t, t).$$

Equating the derivatives of both sides and rearranging the terms leads to the MeanFlow Identity:
$$\boldsymbol{m}(\boldsymbol{u}_t, {\tau}, t) = \boldsymbol{v}(\boldsymbol{u}_t, t) - (t-{\tau})\frac{\rd}{\rd t}\boldsymbol{m}(\boldsymbol{u}_t, {\tau}, t).$$

This identity provides a necessary and sufficient condition (given the natural boundary condition $\lim_{{\tau} \to t} \boldsymbol{m}(\boldsymbol{u}_t, {\tau}, t) = \boldsymbol{v}(\boldsymbol{u}_t, t)$) to characterize the true average velocity using a purely differential, integral-free form.

\section{Extended Related Work}
\label{app:related_work}

\subsection{Physics-Informed Neural Networks (PINNs): Optimization Challenges in PINNs}
\label{app:pinn_details}

Physics-Informed Neural Networks (PINNs) \cite{raissi2019physics} represent a foundational class of deep learning-based PDE solvers. They parameterize the solution manifold by embedding the governing physical laws into the training objective as soft constraints. Despite their widespread success across diverse physical domains (ranging from fluid dynamics to power systems), PINNs suffer from severe scalability bottlenecks due to their inherent reliance on instance-specific, iterative optimization. Furthermore, a prominent challenge in training PINNs is their notoriously pathological optimization landscape, caused by severe stiffness among competing boundary, initial, and residual loss terms. 

To address this problem, subsequent works have developed sophisticated adaptive schemes. Beyond standard uncertainty weighting \cite{huang2021solving} and gradient pathology mitigation \cite{kendall2018multi}, Wang et al. \cite{wang2021understanding} proposed an adaptive learning rate annealing strategy along with an improved neural architecture (IA-PINN) to balance fitting and regularization. Concurrently, methods like IAW-PINN advanced uncertainty weighting by controlling the upper bounds of weights, offering more flexible balancing among distinct loss terms. Recognizing that singular approaches are often limited by network structure, recent efforts have integrated these strengths (such as I-PINN \cite{niu2025improved}, which unifies both IA-PINN's architecture and IAW-PINN's weighting strategies). 

\subsection{Neural Operators: Advancements in Neural Operator Architectures}
\label{app:operator_details}

Neural operators learn a direct mapping between infinite-dimensional function spaces. Architectures like Fourier Neural Operators (FNO) \cite{li2020fourier} and DeepONet \cite{lu2019deeponet} enable rapid, zero-shot inference for new initial or boundary conditions. However, purely data-driven operators often struggle with out-of-distribution generalization and may violate physical laws. To address this, Physics-Informed Neural Operators (PINO) \cite{li2024physics} explicitly incorporate PDE residuals into the operator training objective, blending fast inference with physical constraints. However, when applied to time-dependent systems, these models typically rely on autoregressive (AR) schemes, which are prone to error accumulation over long horizons and restricted by uniform temporal discretization.

To address these limitations, the Continuous Flow Operator (CFO) \cite{hou2026cfo} was introduced as a novel class of flow-based operators. CFO shares our core conceptual motivation: rather than treating flow matching as a simple generative sampler, it repurposes the framework to learn the underlying differential dynamics (the physical right-hand side of the PDE) directly. By fitting temporal splines to trajectory data, CFO constructs continuous velocity fields that grant the model inherent time-resolution invariance. However, despite this shared perspective, its formulation is purely temporal; it captures the continuous evolution over time but lacks explicit constraints to ensure physical consistency across the spatial domain, which is critical for PDE generalization. Our proposed temporal-spatial MeanFlow framework strategically overcomes this. By expanding the integral constraints to the spatio-temporal domain, we explicitly enforce physical consistency across spatial resolutions. 

Furthermore, a wide array of other architectural extensions have been proposed to overcome the geometric constraints of standard neural operators. For instance, while original FNOs rely heavily on uniform grids, methods such as Geo-FNO \cite{li2023fourier} and multipole graph neural operators \cite{li2020multipole} have been developed to accurately map and compute solutions across highly irregular physical geometries. Similarly, the Galerkin Transformer \cite{cao2021choose} and CViT \cite{wang2024cvit} introduce robust mechanisms for handling non-uniform or irregularly sampled observational data. 

To capture complex, overlapping physical phenomena, models like the Wavelet Neural Operator (WNO) \cite{tripura2022wavelet} integrate multi-scale structural processing. Finally, architectures such as U-NO \cite{rahman2022u} address the depth limitations of traditional operators, employing U-Net-like structures to enable deeper, more expressive feature extraction without sacrificing computational efficiency. While these methods successfully expand the expressivity and geometric flexibility of operator learning, our approach uniquely pivots toward using generative flow matching as a direct, physical equation solver to achieve rapid, single-step integration.

\subsection{Generative Modeling for PDE Simulation}
\label{app:generative_details}

Generative modeling offers a novel paradigm for PDE simulation by treating solution fields as samples from learned distributions. Early advancements in this domain heavily relied on diffusion models. For example, DiffusionPDE \cite{huang2024diffusionpde} applies EDM-style diffusion combined with physics guidance during the sampling phase, while CoCoGen \cite{jacobsen2025cocogen} extends score-based models to ensure physically consistent generation. Although these conditional diffusion models achieve high fidelity, they typically require hundreds of stochastic steps at inference, severely limiting their practical utility for fast simulation.

To address this severe latency, researchers have increasingly explored flow matching and rectified flows. Armegioiu et al. \cite{armegioiu2025rectified} introduced a rectified flow surrogate that learns a time-dependent conditional velocity field, effectively transporting input-to-output laws along nearly straight trajectories. By leveraging a curvature-aware sampler, this approach drastically accelerates inference, reducing the required ODE evaluations to as few as eight steps while matching the Wasserstein statistics of standard diffusion models. In parallel, to improve physical fidelity within these generative frameworks, methods like PBFM \cite{baldan2025flow} have been developed to explicitly integrate physical constraints directly into flow matching training through conflict-free gradient updates. Furthermore, \cite{zhang2025physics} proposed a post-hoc distillation method to enable more accurate and fast generation for PDE problems, achieving remarkable acceleration and improvements over many PDE-related problems.

Despite these significant advancements in inference speed and physical consistency, the core objective of these models remains unchanged: they are designed to map initial noise distributions to target solution manifolds. They fundamentally operate on a noise-to-sample generation paradigm. Our approach diverges completely from this statistical sampling formulation. Rather than generating solutions from noise, we adapt the continuous-time modeling capabilities of flow matching purely from the perspective of direct equation solving. By replacing the unconstrained generative velocity field with the physical right-hand side of the PDE, our framework functions strictly as an efficient, deterministic neural integrator.

\section{Generalized Spatio-Temporal MeanFlow Framework}
\label{app:generalized_algorithm}

In the main text, we presented a simplified algorithm tailored for regular spatial grids, which fixes the spatial coordinates and focuses on continuous-time integration. Here, we present the fully generalized Spatio-Temporal MeanFlow algorithms (Alg.~\ref{alg:generalized_training} and Alg.~\ref{alg:generalized_inference}). In this generalized formulation, both the temporal and spatial coordinates are treated as continuous variables, allowing the integration path to move simultaneously across time from $\tau$ to $t$ and across space from an anchor point $\boldsymbol{\xi}$ to a target point $\boldsymbol{x}$.

\begin{figure}[htbp]
    \centering
    
    \begin{minipage}[t]{0.48\textwidth}
        \small
        \vspace{0pt} 
        \begin{algorithm}[H] 
        \small
        \caption{Generalized Training}
        \label{alg:generalized_training}
        \begin{algorithmic}[1]
        \Require PDE params $\boldsymbol{a}$, temporal indicator ${\gamma}$, network $\boldsymbol{m}_\theta$, max iter $M$, learning rate $\eta$
        \Ensure Trained network parameters $\theta^*$
        \For{$i = 1$ to $M$}
            \If{${\gamma} \neq 0$} \Comment{Time-variant PDEs}
                \State Sample anchor time $\tau$ and anchor grid $\boldsymbol{\xi}$;
                \State Compute anchor state $\boldsymbol{u}_{\textrm{anchor}}=\boldsymbol{u}(\boldsymbol{\xi}, {\tau})$;
                \State Sample target time $t>{\tau}$ and target grid $\boldsymbol{x}$;
                \State Predict MeanFlow: 
                \Statex \quad\quad $\boldsymbol{m} \gets \boldsymbol{m}_\theta([\boldsymbol{u}_{\textrm{anchor}}, \boldsymbol{\xi}, \tau, \boldsymbol{x}, t, \boldsymbol{a}])$;
                \State Compute total loss:
                \Statex \quad\quad $\mathcal{L}_{\textrm{Total}} \gets \mathcal{L}_{\textrm{Data}} \!+\! \lambda_{t}\mathcal{L}_{\textrm{T-MF}} \!+\! \lambda_{s}\mathcal{L}_{\textrm{S-MF}}$;
            \Else \Comment{Steady-state PDEs (${\gamma} = 0$)}
                \State Create initial state $\boldsymbol{u}_0$ by the harmonic
                \Statex \quad\quad extension of $\boldsymbol{u}_0(\boldsymbol{\xi})$ for $\boldsymbol{\xi}\in\partial D$; 
                \State Sample target grid $\boldsymbol{x}$ for domain $D$;
                \State Compute $\mathcal{L}_{\textrm{Data}}$ and $\mathcal{L}_{\textrm{S-MF}}$; 
                \State $\mathcal{L}_{\textrm{Total}} \gets \mathcal{L}_{\textrm{Data}} + \lambda_{s}\mathcal{L}_{\textrm{S-MF}}$; 
            \EndIf
            \State Update: $\theta \gets \theta - \eta \nabla_\theta \mathcal{L}_{\textrm{Total}}$;
        \EndFor
        \State \Return $\theta^*={\theta}$;
        \end{algorithmic}
        \end{algorithm}
    \end{minipage}%
    \hfill
    \begin{minipage}[t]{0.48\textwidth}
        \vspace{0pt} 
        \begin{algorithm}[H] 
        \small
        \caption{Generalized Multi-step Inference}
        \label{alg:generalized_inference}
        \begin{algorithmic}[1]
        \Require Pretrained net $\boldsymbol{m}_{\theta^{*}}$, start grid $\boldsymbol{x}_0$, start time $t_0$, start state $\boldsymbol{u}_0$, target grid $\boldsymbol{x}_N$, step $\Delta t$, steps $N$
        \Ensure Spatio-temporal trajectory $\{\boldsymbol{u}_k\}_{k=0}^N$
        \State Initialize trajectory list with $\boldsymbol{u}_0$;
        \For{$k = 0$ to $N-1$}
            \State Anchor coords: $(\boldsymbol{\xi}, \tau) \gets (\boldsymbol{x}_k, t_k)$;
            \State Anchor state: $\boldsymbol{u}_{\textrm{anchor}} \gets \boldsymbol{u}_k$;
            \State Target grid: $\boldsymbol{x}_{k+1} \gets \boldsymbol{x}_k + (\boldsymbol{x}_N - \boldsymbol{x}_0) / N$;
            \State Target time: $t_{k+1} \gets t_k + \Delta t$;
            \State Spatio-temporal arc length: 
            \Statex \quad\quad $\displaystyle l \gets \sqrt{\frac{\Vert \boldsymbol{x}_N-\boldsymbol{x}_0\Vert_2^2}{N^2} + {\Delta}t^2}$; 
            \State Predict MeanFlow: 
            \Statex \quad\quad $\boldsymbol{m} \gets \boldsymbol{m}_\theta([\boldsymbol{u}_{\textrm{anchor}}, \boldsymbol{\xi}, \tau, \boldsymbol{x}_{k+1}, t_{k+1}, \boldsymbol{a}])$;
            \State State increment: $\Delta\boldsymbol{u} \gets l \cdot \boldsymbol{m}$;
            \State Update field: $\boldsymbol{u}_{k+1} \gets \boldsymbol{u}_k + \Delta\boldsymbol{u}$;
            \State Append $\boldsymbol{u}_{k+1}$ to trajectory list;
        \EndFor
        \State \Return $\{\boldsymbol{u}_0, \boldsymbol{u}_1, \dots, \boldsymbol{u}_N\}$;
        \end{algorithmic}
        \end{algorithm}
    \end{minipage}
    
\end{figure}

\section{Theoretical Results}
\label{app:theory}

\subsection{The Construction of Spatio-Temporal MeanFlow Constraints}
\label{app:mf-constraints}

In this section, Prop. \ref{app:prop:mf-constraints} conclude and show how the MeanFlow constraints being constructed. 

\begin{proposition}[MeanFlow Constraints.]
\label{app:prop:mf-constraints}
    For MeanFlow function $\boldsymbol m$ satisfying
    \begin{equation}
        l(\boldsymbol {\xi}, {\tau}, \boldsymbol x, t)\cdot\boldsymbol m[\boldsymbol u(\boldsymbol {\xi}, {\tau}), \boldsymbol {\xi}, {\tau}, \boldsymbol x, t, \boldsymbol a] = \boldsymbol u(\boldsymbol x, t) - \boldsymbol u(\boldsymbol {\xi}, t), 
    \label{app:eq:fund-id}
    \end{equation}
    we have
    \begin{displaymath}
    \left\{\begin{aligned}
        &l_t\cdot\boldsymbol m = l\boldsymbol K\boldsymbol f + {\gamma}l^2\cdot \frac{\partial\boldsymbol m}{\partial {\tau}}, \\
        &l\cdot\boldsymbol K\cdot{\nabla}^\top\boldsymbol u(\boldsymbol {\xi}, {\tau}) = \boldsymbol m(\boldsymbol x - \boldsymbol {\xi})^\top - l^2\cdot\frac{\partial\boldsymbol m}{\partial \boldsymbol {\xi}}, \\
        &l\cdot\boldsymbol K\cdot{\Delta}\boldsymbol u(\boldsymbol {\xi}, {\tau}) = -(\boldsymbol {\alpha}+2\boldsymbol K^{-1}\boldsymbol {\beta}), 
    \end{aligned}\right.
    \end{displaymath}
    where $\boldsymbol {\alpha}=n\boldsymbol m-\lfrac{l_s^2}{l^2}\cdot\boldsymbol m + 2l^2{\Delta}\boldsymbol m(\boldsymbol {\xi}, {\tau})$ and $\boldsymbol {\beta}=-\frac{\partial \boldsymbol m}{\partial \boldsymbol {\xi}}(\boldsymbol x-\boldsymbol {\xi})-\lfrac{l_s^2}{l}\cdot\frac{\partial\boldsymbol m}{\partial \boldsymbol u}\boldsymbol m$. 
\end{proposition}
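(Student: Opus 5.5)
The plan is to differentiate the fundamental identity \eqref{app:eq:fund-id} with respect to the anchor coordinates $(\boldsymbol\xi,\tau)$. Throughout, $\boldsymbol m$ is viewed as the composite map $(\boldsymbol\xi,\tau)\mapsto \boldsymbol m[\boldsymbol u(\boldsymbol\xi,\tau),\boldsymbol\xi,\tau,\boldsymbol x,t,\boldsymbol a]$, with the target $(\boldsymbol x,t)$ held fixed. I read the right-hand side of \eqref{app:eq:fund-id} as $\boldsymbol u(\boldsymbol x,t)-\boldsymbol u(\boldsymbol\xi,\tau)$, consistent with \eqref{eq:fund-id}; as printed, $\boldsymbol u(\boldsymbol\xi,t)$ appears to be a typo. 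The only geometric inputs are the derivatives of $l=\sqrt{l_s^2+l_t^2}$:
\[
\partial_\tau l=-\frac{l_t}{l},\qquad \nabla_{\boldsymbol\xi} l=-\frac{(\boldsymbol x-\boldsymbol\xi)}{l},\qquad \Delta_{\boldsymbol\xi} l=\frac{n}{l}-\frac{l_s^2}{l^3}.
\]
The chain rule then gives $\frac{\rd}{\rd\tau}\boldsymbol m=\partial_\tau\boldsymbol m+\frac{\partial\boldsymbol m}{\partial\boldsymbol u}\partial_\tau\boldsymbol u$ and $\frac{\rd}{\rd\boldsymbol\xi}\boldsymbol m=\frac{\partial\boldsymbol m}{\partial\boldsymbol\xi}+\frac{\partial\boldsymbol m}{\partial\boldsymbol u}\nabla^\top\boldsymbol u$. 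Everywhere, the operator $\boldsymbol K=\boldsymbol I+l\frac{\partial\boldsymbol m}{\partial\boldsymbol u}$ arises because the $\boldsymbol u$-dependence of $\boldsymbol m$ produces a term $l\frac{\partial\boldsymbol m}{\partial\boldsymbol u}(\cdot)$. This term combines with the $-(\cdot)$ coming from differentiating $-\boldsymbol u(\boldsymbol\xi,\tau)$ on the right-hand side.

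\textbf{Temporal constraint.} Differentiate in $\tau$:
\[
-\frac{l_t}{l}\boldsymbol m+l\Big(\partial_\tau\boldsymbol m+\frac{\partial\boldsymbol m}{\partial\boldsymbol u}\partial_\tau\boldsymbol u\Big)=-\partial_\tau\boldsymbol u .
\]
Move the $\partial_\tau\boldsymbol u$ terms together to get $\boldsymbol K\,\partial_\tau\boldsymbol u$, multiply by $\gamma l$, and substitute $\gamma\partial_\tau\boldsymbol u=\boldsymbol f$ from \eqref{eq:pde_gov}. This yields $\gamma l_t\boldsymbol m=l\boldsymbol K\boldsymbol f+\gamma l^2\partial_\tau\boldsymbol m$. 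This coincides with the stated form once $\gamma$ is normalised to $1$ in the time-dependent case; for $\gamma=0$ the equation degenerates to the stationary relation $\boldsymbol K\boldsymbol f=0$. I will state this normalisation explicitly.

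\textbf{First-order spatial constraint.} Differentiate in $\boldsymbol\xi$:
\[
-\frac{\boldsymbol m(\boldsymbol x-\boldsymbol\xi)^\top}{l}+l\Big(\frac{\partial\boldsymbol m}{\partial\boldsymbol\xi}+\frac{\partial\boldsymbol m}{\partial\boldsymbol u}\nabla^\top\boldsymbol u\Big)=-\nabla^\top\boldsymbol u .
\]
Multiplying by $l$ gives $l\boldsymbol K\nabla^\top\boldsymbol u=\boldsymbol m(\boldsymbol x-\boldsymbol\xi)^\top-l^2\frac{\partial\boldsymbol m}{\partial\boldsymbol\xi}$ immediately.

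\textbf{Second-order spatial constraint.} Apply $\Delta_{\boldsymbol\xi}$ to $l\boldsymbol m=\boldsymbol u(\boldsymbol x,t)-\boldsymbol u(\boldsymbol\xi,\tau)$ via the product rule $\Delta(l\boldsymbol m)=(\Delta l)\boldsymbol m+2\big(\tfrac{\rd \boldsymbol m}{\rd\boldsymbol\xi}\big)\nabla l+l\Delta\boldsymbol m$. The first term produces $\big(\frac{n}{l}-\frac{l_s^2}{l^3}\big)\boldsymbol m$, which becomes the $n\boldsymbol m-\frac{l_s^2}{l^2}\boldsymbol m$ part of $\boldsymbol\alpha$ after scaling. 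The cross term produces $-\frac{2}{l}\big(\frac{\partial\boldsymbol m}{\partial\boldsymbol\xi}(\boldsymbol x-\boldsymbol\xi)+\frac{\partial\boldsymbol m}{\partial\boldsymbol u}\nabla^\top\boldsymbol u(\boldsymbol x-\boldsymbol\xi)\big)$. Here I eliminate $\nabla^\top\boldsymbol u(\boldsymbol x-\boldsymbol\xi)$ using the first-order constraint multiplied on the right by $(\boldsymbol x-\boldsymbol\xi)$, i.e.
\[
\nabla^\top\boldsymbol u(\boldsymbol x-\boldsymbol\xi)=\frac{1}{l}\boldsymbol K^{-1}\Big(l_s^2\boldsymbol m-l^2\frac{\partial\boldsymbol m}{\partial\boldsymbol\xi}(\boldsymbol x-\boldsymbol\xi)\Big).
\]
This substitution is the source of the $\boldsymbol K^{-1}\boldsymbol\beta$ term, with its $\frac{\partial\boldsymbol m}{\partial\boldsymbol\xi}(\boldsymbol x-\boldsymbol\xi)$ and $\frac{l_s^2}{l}\frac{\partial\boldsymbol m}{\partial\boldsymbol u}\boldsymbol m$ pieces. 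In the last term, I split off $\frac{\partial\boldsymbol m}{\partial\boldsymbol u}\Delta\boldsymbol u$ from $\Delta\boldsymbol m$ and combine it with $-\Delta\boldsymbol u$ on the right to form $\boldsymbol K\Delta\boldsymbol u$. The remainder is collected into the symbol $\Delta\boldsymbol m(\boldsymbol\xi,\tau)$ appearing in $\boldsymbol\alpha$, namely the explicit $\boldsymbol\xi$-Laplacian together with the mixed and $\partial^2\boldsymbol m/\partial\boldsymbol u^2$ contractions against $\nabla\boldsymbol u$. Multiplying through by $l$ then puts the identity in the form $l\boldsymbol K\Delta\boldsymbol u=-(\boldsymbol\alpha+2\boldsymbol K^{-1}\boldsymbol\beta)$.

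\textbf{Main obstacle.} The first two identities are routine. The main obstacle is this last step: making the stated coefficients match exactly. This concerns in particular the powers of $l$ in the $\boldsymbol\alpha$ and $\boldsymbol\beta$ terms, whether $\boldsymbol K^{-1}$ commutes past $\frac{\partial\boldsymbol m}{\partial\boldsymbol u}$ (it does, since $\boldsymbol K$ is a polynomial in that matrix), and precisely which second-derivative terms of $\boldsymbol m$ are subsumed in $\Delta\boldsymbol m$. I would first carry out the computation in the scalar case $d=1$, where all these operators commute, to fix constants. I would then lift the result to vector-valued $\boldsymbol u$, stating explicitly the convention for $\Delta\boldsymbol m$ under which the displayed form holds.
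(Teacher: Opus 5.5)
Your proposal follows the paper's proof essentially step for step. You differentiate the fundamental identity in $\tau$ and once and twice in $\boldsymbol\xi$, collect the $\partial\boldsymbol m/\partial\boldsymbol u$ contributions into $\boldsymbol K$, and eliminate $\nabla^\top\boldsymbol u(\boldsymbol\xi,\tau)(\boldsymbol x-\boldsymbol\xi)$ with the first-order constraint to generate $2\boldsymbol K^{-1}\boldsymbol\beta$. Your handling of $\gamma$ (multiplying by $\gamma$ instead of the paper's perturb-and-divide, both giving $\gamma l_t\boldsymbol m=l\boldsymbol K\boldsymbol f+\gamma l^2\partial_\tau\boldsymbol m$) and of the $\boldsymbol u(\boldsymbol\xi,t)$ typo is correct.

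When you carry out the last step, the leftover second-derivative terms of $\boldsymbol m$ enter $\boldsymbol\alpha$ with coefficient $l^2$, not $2l^2$. The paper's factor $2$ comes from its unexplained $2\,\partial^2\boldsymbol m/\partial\boldsymbol\xi^2$ term. So your convention for $\Delta\boldsymbol m(\boldsymbol\xi,\tau)$ must absorb a factor $\tfrac12$; equivalently, the displayed $\boldsymbol\alpha$ should read $l^2$ in place of $2l^2$.
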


\begin{proof}
We enforce a perturbation on ${\gamma}$ if ${\gamma}=0$ and let ${\gamma}\to 0$ in the end phase. This technique allows us to regard parameter ${\gamma}$ as a non-sigular value. 

We compute the derivatives of $\boldsymbol m$ given that $\boldsymbol \xi , \boldsymbol x, t, \boldsymbol a$ are constants w.r.t. $\tau $ and $\tau , \boldsymbol x, t, \boldsymbol a$ are constants w.r.t. $\boldsymbol \xi $, leading to
\begin{equation}
\left\{\begin{aligned}
\frac{\rd \boldsymbol m}{\rd \tau } =& \frac{\partial \boldsymbol m}{\partial \boldsymbol u} \cdot\frac{\partial \boldsymbol u}{\partial \tau } + \frac{\partial \boldsymbol m}{\partial \tau} = \frac{\partial \boldsymbol m}{\partial \boldsymbol u}\cdot \frac{\boldsymbol f(\boldsymbol \xi , \tau )}{\gamma} + \frac{\partial \boldsymbol m}{\partial \tau}, \\
\frac{\rd \boldsymbol m}{\rd \boldsymbol \xi } =&\frac{\partial \boldsymbol m}{\partial \boldsymbol u}\cdot\frac{\partial \boldsymbol u}{\partial \boldsymbol \xi } + \frac{\partial \boldsymbol m}{\partial \boldsymbol \xi } = \frac{\partial \boldsymbol m}{\partial \boldsymbol u}\cdot\nabla^\top \boldsymbol u(\boldsymbol \xi , \tau ) + \frac{\partial \boldsymbol m}{\partial \boldsymbol \xi }. 
\end{aligned}\right.
\end{equation}

The derivatives w.r.t. $\tau $ of both sides of Eq. \eqref{app:eq:fund-id} are
\begin{equation}
\left\{\begin{aligned}
\frac{\rd \text{LHS}}{\rd \tau } =& \frac{\rd l}{\rd \tau } \boldsymbol m + l \frac{\rd \boldsymbol m}{\rd \tau } = \frac{-(t-\tau ) \boldsymbol m}{l} + l \left(\frac{\partial \boldsymbol m}{\partial \boldsymbol u}\cdot \frac{\boldsymbol f(\boldsymbol \xi , \tau )}{\gamma}+\frac{\partial \boldsymbol m}{\partial \tau}\right), \\
\frac{\rd \text{RHS}}{\rd \tau } =& -\frac{\partial \boldsymbol u(\boldsymbol \xi , \tau )}{\partial \tau } = -\frac{\boldsymbol f(\boldsymbol \xi , \tau )}{\gamma}, 
\end{aligned}\right.
\end{equation}
where $\boldsymbol f(\boldsymbol \xi , \tau )\triangleq\boldsymbol f[\boldsymbol u(\boldsymbol \xi , \tau ), \nabla , \boldsymbol \xi , \tau , \boldsymbol a]$.
Notably, the notations $\frac{\rd ~\boldsymbol \cdot}{\rd ~\boldsymbol \cdot}$ and $\frac{\partial ~\boldsymbol \cdot}{\partial ~\boldsymbol \cdot}$ both represent the Jacobian matrices, whereas $\nabla\boldsymbol f$ represent spatial gradients, which is the transpose of Jacobian matrix. 

Consequently, we have the ground-truth approximation
\begin{equation}
\begin{aligned}
\boldsymbol m_{\text{gt}} =& \frac{l}{t-\tau }\left[l \left( \frac{\partial \boldsymbol m}{\partial \boldsymbol u} \cdot\frac{\boldsymbol f(\boldsymbol \xi , \tau )}{\gamma} + \frac{\partial \boldsymbol m}{\partial \tau } \right)+\frac{\boldsymbol f(\boldsymbol \xi , \tau )}{\gamma}\right], \\
=& \frac{l}{t-\tau }\left[\boldsymbol K \cdot \frac{\boldsymbol f(\boldsymbol \xi , \tau )}{\gamma} + l \frac{\partial \boldsymbol m}{\partial \tau }\right], 
\label{eq:gt}
\end{aligned}
\end{equation}
where $l\triangleq\sqrt{(t-\tau )^2+\Vert \boldsymbol x-\boldsymbol \xi \Vert _2^2}$ and $\boldsymbol K=\boldsymbol{I}+l \frac{\partial \boldsymbol m}{\partial \boldsymbol u}$. \textbf{This coheres to the final result in conventional \textit{temporal MeanFlow}. The \textit{spatial} part is derived as follows.}

The derivatives of Eq. \eqref{app:eq:fund-id} w.r.t. $\boldsymbol \xi $, on the other hand, result in
\begin{equation}
\left\{\begin{aligned}
\frac{\rd \text{LHS}}{\rd \boldsymbol \xi } =& \frac{\boldsymbol m (\boldsymbol \xi -\boldsymbol x)^\top}{l} + l \left(\frac{\partial \boldsymbol m}{\partial \boldsymbol u} \cdot\nabla^\top \boldsymbol u(\boldsymbol \xi , \tau ) + \frac{\partial \boldsymbol m}{\partial \boldsymbol \xi }\right), \\
\frac{\rd \text{RHS}}{\rd \boldsymbol \xi } =& -\frac{\partial \boldsymbol u(\boldsymbol \xi , \tau )}{\partial \boldsymbol \xi } = -\nabla^\top \boldsymbol u(\boldsymbol \xi , \tau ),
\end{aligned}\right.
\end{equation}
and thus
\begin{equation}
    \boldsymbol K \cdot\nabla^\top \boldsymbol u(\boldsymbol \xi , \tau ) = \frac{1}{l}\boldsymbol m (\boldsymbol x -\boldsymbol {\xi})^\top - l \frac{\partial \boldsymbol m}{\partial \boldsymbol \xi }.
\label{eq:space-order1}
\end{equation}

Note that this result may substitute the $\nabla \boldsymbol u$'s of $\boldsymbol K \boldsymbol f$ in \eqref{eq:gt}. 

There are more PDEs containing two or more order of derivatives, e.g. the Navier-Stokes equations with $\Delta\boldsymbol u$ within considerations. 

For PDE with higher order in space, say order-$2$, we have
\begin{equation}
\left\{\begin{aligned}
\frac{\rd ^2 \text{LHS}}{\rd \boldsymbol \xi ^2} =& \frac{\boldsymbol m\boldsymbol 1^\top}{l} - \frac{1}{l^3}\boldsymbol m(\boldsymbol \xi -\boldsymbol x)^\top\text{diag}~(\boldsymbol \xi -\boldsymbol x) + l \left(\frac{\partial \boldsymbol m}{\partial \boldsymbol u} \cdot\frac{\partial ^2\boldsymbol u}{\partial \boldsymbol \xi ^2} + 2\frac{\partial^2 \boldsymbol m}{\partial \boldsymbol \xi ^2}\right)\\
&+ \frac{2}{l} \left(\frac{\partial \boldsymbol m}{\partial \boldsymbol u} \cdot\nabla^\top \boldsymbol u(\boldsymbol \xi , \tau ) + \frac{\partial \boldsymbol m}{\partial \boldsymbol \xi }\right)\text{diag}~(\boldsymbol \xi -\boldsymbol x), \\
\frac{\rd ^2 \text{RHS}}{\rd \boldsymbol \xi ^2} =& -\frac{\partial^2 \boldsymbol z(\boldsymbol \xi , \tau )}{\partial \boldsymbol \xi ^2},
\end{aligned}\right.
\end{equation}
where the matrices $\left[\frac{\rd ^2\boldsymbol z}{\rd \boldsymbol x^2}\right]_{ij}\triangleq\frac{\rd ^2\boldsymbol z_i}{\rd \boldsymbol x_j^2}$ and $\left[\frac{\partial ^2\boldsymbol z}{\partial \boldsymbol x^2}\right]_{ij}\triangleq\frac{\partial ^2\boldsymbol z_i}{\partial \boldsymbol x_j^2}$, then,
\begin{equation}
\begin{aligned}
&-\boldsymbol K\Delta \boldsymbol u(\boldsymbol \xi , \tau ), \\
=& -\boldsymbol K\frac{\partial ^2\boldsymbol u(\boldsymbol \xi , \tau )}{\partial ^2\boldsymbol \xi ^2}\boldsymbol 1, \\
=& -\Big(\boldsymbol I+l\frac{\partial\boldsymbol m}{\partial\boldsymbol u}\Big)\cdot\frac{\partial ^2\boldsymbol u(\boldsymbol \xi , \tau )}{\partial ^2\boldsymbol \xi ^2}\boldsymbol 1,\\
=& \frac{n}{l}\boldsymbol m - \frac{\Vert \boldsymbol \xi -\boldsymbol x\Vert_2^2}{l^3}\boldsymbol m + 2l \Delta \boldsymbol m(\boldsymbol \xi , \tau ) + \frac{2}{l} \frac{\partial \boldsymbol m}{\partial \boldsymbol \xi }(\boldsymbol \xi -\boldsymbol x) \\
&-\frac{2}{l}\frac{\partial \boldsymbol m}{\partial \boldsymbol u}\boldsymbol K^{-1}\left(\frac{1}{l}\boldsymbol m (\boldsymbol \xi -\boldsymbol x)^\top+l \frac{\partial \boldsymbol m}{\partial \boldsymbol \xi }\right)(\boldsymbol \xi -\boldsymbol x), \\
=& \frac{\boldsymbol \alpha}l - \frac{2}{l^2}\frac{\partial \boldsymbol m}{\partial \boldsymbol u}\boldsymbol K^{-1}\boldsymbol m \Vert \boldsymbol \xi -\boldsymbol x\Vert _2^2 + 2\left[\frac{\boldsymbol K}{l}-\frac{\partial \boldsymbol m}{\partial \boldsymbol u}\right] \boldsymbol K^{-1}\frac{\partial \boldsymbol m}{\partial \boldsymbol \xi }(\boldsymbol \xi -\boldsymbol x), \\
\end{aligned}
\end{equation}
hence
\begin{equation}
    \boldsymbol K\Delta \boldsymbol u(\boldsymbol \xi , \tau ) = -\Big(\frac{\boldsymbol \alpha }{l} + \frac{2}{l}\boldsymbol K^{-1}\boldsymbol \beta\Big),
\label{eq:space-order2}
\end{equation}
where $\boldsymbol \alpha \triangleq n\boldsymbol m - \frac{\Vert \boldsymbol x -\boldsymbol {\xi}\Vert _2^2}{l^2}\boldsymbol m+2l^2 \Delta \boldsymbol m(\boldsymbol \xi ,  \tau )$ and $\boldsymbol \beta \triangleq -\frac{\partial \boldsymbol m}{\partial \boldsymbol \xi }(\boldsymbol x -\boldsymbol {\xi}) - \frac{\Vert \boldsymbol x -\boldsymbol {\xi}\Vert _2^2}{l} \frac{\partial \boldsymbol m}{\partial \boldsymbol u}\boldsymbol m$, in which $n$ represents the number of spatial dimensions. 
Consequently, $-l\boldsymbol K^2 \Delta \boldsymbol u(\boldsymbol \xi , \tau ) = \boldsymbol K\boldsymbol \alpha +2\boldsymbol \beta$.
\end{proof}




\subsection{Error Estimate}
\label{app:error-estimation}

\begin{theorem}[A-posteriori Error Bound, manuscript]
\label{app:thm:error_bound}
Assume the PDE operator $\boldsymbol{f}$ is uniformly Lipschitz continuous with respect to the state variable with constant $L$, and continuously differentiable ($\mathcal{C}^1$) such that its Jacobian $\nabla_{\boldsymbol{u}}\boldsymbol{f}$ is bounded locally. Further, assume the MeanFlow predictor $\boldsymbol{m}_\theta$ and the true physical state $\boldsymbol{u}_{\textrm{True}}$ are $\mathcal{C}^1$-continuous. If the total training loss bounds the squared path residuals such that $\mathcal{L}_{\textrm{MeanFlow}} \le \epsilon$, then the pointwise error between the reconstructed state $\boldsymbol{u}_\theta$ and the true solution $\boldsymbol{u}_{\textrm{True}}$ at any target point $(\boldsymbol{x}, t)$ is bounded by:
\begin{equation}
    \|\boldsymbol{u}_\theta(\boldsymbol{x}, t) - \boldsymbol{u}_{\textrm{True}}(\boldsymbol{x}, t)\|_2 \le C \sqrt{\epsilon} + \mathcal{O}(l^2),
\end{equation}
where $C$ is a constant depending on the spatio-temporal path length $l$ and the Lipschitz constant $L$.
\end{theorem}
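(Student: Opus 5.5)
We will work with the residual $\boldsymbol r_\textrm{MeanFlow}$ introduced before Thm.~\ref{thm:decoupling-error}, interpreted pointwise along the straight path $\mathfrak p(s)$, $s\in[0,1]$. The overall plan is a perturbation (Grönwall-type) argument. The reconstruction $\boldsymbol u_\theta(\boldsymbol x,t)=\boldsymbol u(\boldsymbol\xi,\tau)+l\,\boldsymbol m_\theta$ defines the learned state. Prop.~\ref{app:prop:mf-constraints} shows that the exact MeanFlow function $\boldsymbol m$ (the one satisfying the fundamental identity Eq.~\eqref{eq:fund-id} with $\boldsymbol u_{\textrm{True}}$) makes the residual vanish identically. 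Hence the error $\boldsymbol e\triangleq\boldsymbol u_\theta-\boldsymbol u_{\textrm{True}}$ at the target point equals $l(\boldsymbol m_\theta-\boldsymbol m)$, and it suffices to bound $\delta\boldsymbol m\triangleq \boldsymbol m_\theta-\boldsymbol m$.

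\textbf{Step 1 (linearized residual equation).} We subtract the exact relation $l_t\boldsymbol m-l\boldsymbol K\boldsymbol f-\gamma l^2\partial_\tau\boldsymbol m=0$ from the learned one, $l_t\boldsymbol m_\theta-l\boldsymbol K_\theta\boldsymbol f_\theta-\gamma l^2\partial_\tau\boldsymbol m_\theta=\boldsymbol r_\theta$. Using the $\mathcal C^1$ assumptions and the Lipschitz bound $\|\boldsymbol f(\boldsymbol u_\theta)-\boldsymbol f(\boldsymbol u_{\textrm{True}})\|\le L\|\boldsymbol e\|$, we write the difference as a linear first-order transport equation in $\tau$ (equivalently in the path parameter $s$):
\[
\gamma l^2\partial_\tau\delta\boldsymbol m = l_t\,\delta\boldsymbol m - \boldsymbol A\,\delta\boldsymbol m - \boldsymbol r_\theta + \boldsymbol q .
\]
Here $\boldsymbol A$ collects the Jacobian terms ($l\,\partial_{\boldsymbol u}\boldsymbol m\,\boldsymbol f$ and $lL$-type terms), so $\|\boldsymbol A\|\le l(L+\|\partial_{\boldsymbol u}\boldsymbol m_\theta\|\,\|\boldsymbol f\|)$. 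The term $\boldsymbol q$ is the second-order Taylor remainder coming from $\boldsymbol K_\theta\boldsymbol f_\theta-\boldsymbol K\boldsymbol f$ and from replacing the spatial gradients in $\boldsymbol f$ via Eq.~\eqref{eq:spatial-mf}. Under the $\mathcal C^1$ hypothesis with locally bounded Jacobians, this remainder is $\mathcal O(l^2)$ relative to the path.

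\textbf{Step 2 (integrate along the path).} We integrate this ODE from the anchor, where $\delta\boldsymbol m$ is controlled because at $l\to 0$ the MeanFlow reduces to the instantaneous operator (the natural boundary condition of App.~\ref{app:meanflow_derivation}), up to the target. Grönwall's inequality then gives
\[
\|\delta\boldsymbol m\|\le e^{c(L)l}\Big(\int\|\boldsymbol r_\theta\|+\int\|\boldsymbol q\|\Big).
\]
By Cauchy--Schwarz, $\int\|\boldsymbol r_\theta\|\le \sqrt{|\text{path}|}\,\sqrt{\mathcal L_{\textrm{MeanFlow}}}\le\sqrt{l\epsilon}$. Multiplying by $l$ yields $\|\boldsymbol e\|\le C(l,L)\sqrt\epsilon+\mathcal O(l^2)$ with $C=l^{3/2}e^{c(L)l}$ up to constants. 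If the loss is interpreted as a supremum bound instead of an average, we simply take $C=l\,e^{c(L)l}$.

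\textbf{Main obstacle.} The hardest point is that $\gamma l^2$ multiplies the derivative term, and for $\gamma\to 0$ (static case) the equation becomes algebraic rather than evolutionary. For this case I would proceed as in Prop.~\ref{app:prop:mf-constraints}: first perturb $\gamma$ to be nonzero, then handle $\gamma=0$ separately. In that case $(l_t\boldsymbol I-\boldsymbol A)\delta\boldsymbol m=\boldsymbol r_\theta-\boldsymbol q$, and the bound follows directly, provided $l_t\boldsymbol I-\boldsymbol A$ is invertible. This invertibility holds for $l$ small enough that $\|\boldsymbol A\|<l_t$, or alternatively by using invertibility of $\boldsymbol K$, which is already assumed in Eq.~\eqref{eq:spatial-mf}.

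A secondary difficulty is making the $\mathcal O(l^2)$ remainder honest. I would bound it with a Taylor expansion of $\boldsymbol m$ about the anchor, which requires the Jacobian bounds to hold uniformly on the convex hull traversed by the path.
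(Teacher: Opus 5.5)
\textbf{There is a gap: the real obstacle in your argument is $l\to0$, not $\gamma\to0$, and Step~2 fails there.}

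\emph{The singular endpoint.} In your equation the derivative carries the factor $\gamma l^2$. This factor vanishes exactly at $l=0$, the point from which you propose to integrate. After normalizing, both the zeroth-order coefficient $(l_t\boldsymbol I-\boldsymbol A)/(\gamma l^2)$ and the forcing $\boldsymbol r_\theta/(\gamma l^2)$ are singular. On a purely temporal path ($\boldsymbol\xi=\boldsymbol x$, $l=l_t$, $\gamma=1$), the coefficient of $\delta\boldsymbol m$ is $1/(t-\tau)$, plus a term of the same order from $\boldsymbol A$ since $\|\boldsymbol A\|/l^2\lesssim 1/l$. The integral of $1/(t-\tau')$ over $[\tau,t]$ diverges. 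So Gr\"onwall gives no finite amplification factor, certainly not $e^{c(L)l}$, and what you would actually have to integrate is $\|\boldsymbol r_\theta\|/l^2$, not $\|\boldsymbol r_\theta\|$.

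\emph{The starting value.} This is not available either. The limit $\lim_{l\to0}\boldsymbol m=\boldsymbol f$ is a property of the exact $\boldsymbol m$, not of $\boldsymbol m_\theta$. Every term of $\boldsymbol r_{\textrm{MeanFlow}}$ carries a factor $l_t$, $l$ or $l^2$, so the residual vanishes at $l=0$ for any $\boldsymbol m_\theta$. Hence $\mathcal L_{\textrm{MeanFlow}}\le\epsilon$ says nothing about $\boldsymbol m_\theta-\boldsymbol f$ there.

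\emph{Spatial paths.} For $\boldsymbol\xi\neq\boldsymbol x$, varying $\tau$ alone never reaches $l=0$, because $l\ge l_s$. Moving the anchor along the segment needs the $\partial_{\boldsymbol\xi}$ relations too, and your equation does not contain them.

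\emph{The static fallback.} This does not work either. There $l_t=0$, so $l_t\boldsymbol I-\boldsymbol A=-\boldsymbol A$ and the condition $\|\boldsymbol A\|<l_t$ can never hold. Nothing in the hypotheses controls $\boldsymbol A^{-1}$, and invertibility of $\boldsymbol K$ does not give invertibility of $l_t\boldsymbol I-\boldsymbol A$.

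\emph{Step~1 misreads the structure.} The difference $l(\boldsymbol K_\theta-\boldsymbol K)\boldsymbol f=l^2(\partial_{\boldsymbol u}\boldsymbol m_\theta-\partial_{\boldsymbol u}\boldsymbol m)\boldsymbol f$ contains a \emph{derivative} of $\delta\boldsymbol m$. It therefore cannot be absorbed into a term $\boldsymbol A\,\delta\boldsymbol m$ bounded by $\|\delta\boldsymbol m\|$. The Lipschitz difference $\boldsymbol f(\boldsymbol u_\theta)-\boldsymbol f(\boldsymbol u_{\textrm{True}})$ also never appears. In both relations $\boldsymbol f$ is evaluated at the anchor state $\boldsymbol u(\boldsymbol\xi,\tau)$, which is data; the gradient substitution only produces further derivatives of $\delta\boldsymbol m$.

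\emph{How your route can be repaired.} Merge $\partial_\tau+\boldsymbol f\cdot\partial_{\boldsymbol u}$ into the total derivative along the true trajectory, and track the state error $\boldsymbol E=l_t\,\delta\boldsymbol m$ instead of $\delta\boldsymbol m$. In the temporal case ($\boldsymbol\xi=\boldsymbol x$, $\gamma=1$) this gives exactly $\mathrm d\boldsymbol E/\mathrm d\tau=-\boldsymbol r_\theta/l_t$, with $\boldsymbol E\to\boldsymbol 0$ as $\tau\to t$. Hence $\boldsymbol u_\theta(\boldsymbol x,t)-\boldsymbol u_{\textrm{True}}(\boldsymbol x,t)=\int_\tau^t\boldsymbol r_{\textrm{Temp}}\,\mathrm d\tau'$, with residuals taken at anchors on the true trajectory and the same target. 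This needs no Gr\"onwall and no $L$. However, $\boldsymbol r_{\textrm{MeanFlow}}$ enters with weight $1/l_t$, so this route naturally controls the error through $\mathcal L_{\textrm{Temp}}$, not $\mathcal L_{\textrm{MeanFlow}}$.

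\emph{How the paper avoids the degenerate endpoint.} It runs the error equation in the other variable. It fixes the anchor and sets $\boldsymbol v(s)=\boldsymbol u_\theta(s)-\boldsymbol u_{\textrm{True}}(s)$ with $\boldsymbol u_\theta(s)=\boldsymbol u(\boldsymbol\xi,\tau)+s\,l\,\boldsymbol m_\theta$. Then $\boldsymbol v(0)=\boldsymbol 0$ holds automatically because the anchor is data. Adding and subtracting $l\boldsymbol f(\boldsymbol u_\theta(s))$ gives the bounded rate $lL$ from the Lipschitz hypothesis, plus $\boldsymbol r_{\textrm{MeanFlow}}$ and an $\mathcal O(l^2)$ Taylor remainder. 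Gr\"onwall and Cauchy--Schwarz on $s\in[0,1]$ then give $C=\sqrt{(e^{2lL}-1)/(2lL)}$.
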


\begin{proof}
Let the straight spatio-temporal path $\mathfrak{p}$ connecting the starting point $(\boldsymbol{\xi}, \tau)$ and the target point $(\boldsymbol{x}, t)$ be parameterized by a scalar $s \in [0, 1]$. The predicted state along the path is linearly modeled as $\boldsymbol{u}_\theta(s) = \boldsymbol{u}(\boldsymbol{\xi}, \tau) + s \cdot l \cdot \boldsymbol{m}_\theta$, implying its path derivative is precisely $\frac{\rd \boldsymbol{u}_\theta}{\rd s} = l \cdot \boldsymbol{m}_\theta$.

We define the pointwise state error as $\boldsymbol{v}(s) = \boldsymbol{u}_\theta(s) - \boldsymbol{u}_{\textrm{True}}(s)$. Assuming the starting state is exact (i.e., anchored by data constraints such that $\boldsymbol{v}(0) = \boldsymbol{0}$), the evolution of the error along the path is governed by the difference in their derivatives:
\begin{equation}
    \frac{\rd \boldsymbol{v}}{\rd s} = \frac{\rd \boldsymbol{u}_\theta}{\rd s} - \frac{\rd \boldsymbol{u}_{\textrm{True}}}{\rd s} = l \cdot \boldsymbol{m}_\theta - l \cdot \boldsymbol{f}(\boldsymbol{u}_{\textrm{True}}(s)).
\end{equation}

To bridge the predicted MeanFlow and the true physical operator, we explicitly add and subtract the term $l \cdot \boldsymbol{f}(\boldsymbol{u}_\theta(s))$:
\begin{equation}
    \frac{\rd \boldsymbol{v}}{\rd s} = l \cdot \left[ \boldsymbol{f}(\boldsymbol{u}_\theta(s)) - \boldsymbol{f}(\boldsymbol{u}_{\textrm{True}}(s)) \right] + l \cdot \left[ \boldsymbol{m}_\theta - \boldsymbol{f}(\boldsymbol{u}_\theta(s)) \right].
\end{equation}

The first bracketed term is governed by the Lipschitz continuity of the PDE operator $\boldsymbol{f}$ with constant $L$, bounded by $L\|\boldsymbol{v}(s)\|_2$. The second bracketed term measures the deviation between the constant MeanFlow vector and the continuous local PDE evaluation. By Taylor expanding $\boldsymbol{f}(\boldsymbol{u}_\theta(s))$ around the starting point $s=0$, we relate it to the global MeanFlow residual $\boldsymbol{r}_{\textrm{MeanFlow}}$ evaluated by the network:
\begin{equation}
    l \cdot \left[ \boldsymbol{m}_\theta - \boldsymbol{f}(\boldsymbol{u}_\theta(s)) \right] = \boldsymbol{r}_{\textrm{MeanFlow}} + \boldsymbol{r}(s),
\end{equation}
where $\boldsymbol{r}(s) \sim s \cdot l^2 \nabla_{\boldsymbol{u}}\boldsymbol{f} \cdot \boldsymbol{m}_\theta$ is the first-order Taylor remainder along the path. Because the state derivatives and the Jacobian $\nabla_{\boldsymbol{u}}\boldsymbol{f}$ are assumed bounded, the remainder satisfies $\|\boldsymbol{r}(s)\|_2 \le M s l^2 = \mathcal{O}(l^2)$, with $M$ being a local bounding constant.

Taking the $L_2$ norm on both sides and applying the triangle inequality yields a rigorous differential inequality:
\begin{equation}
    \left\| \frac{\rd \boldsymbol{v}}{\rd s} \right\|_2 \le l \cdot L \|\boldsymbol{v}(s)\|_2 + \|\boldsymbol{r}_{\textrm{MeanFlow}}\|_2 + \mathcal{O}(l^2).
\end{equation}

Applying Gr\"onwall's inequality to integrate this bound from $s=0$ to $s=1$ yields:
\begin{equation}
\begin{aligned}
    &\frac{\rd }{\rd s} \left\| \boldsymbol{v}(s)\right\|_2 \le \left\| \frac{\rd \boldsymbol{v}}{\rd s} \right\|_2 \le l \cdot L \|\boldsymbol{v}(s)\|_2 + \|\boldsymbol{r}_{\textrm{MeanFlow}}\|_2 + \mathcal{O}(l^2), \\
    \Longrightarrow&\frac{\rd }{\rd s}\left[\left\| \boldsymbol{v}(s)\right\|_2\textrm{e}^{-lLs}\right]\le \left[\|\boldsymbol{r}_{\textrm{MeanFlow}}\|_2 + \mathcal{O}(l^2)\right]\textrm{e}^{-lLs}, \\
    \Longrightarrow&\|\boldsymbol{v}(1)\|_2 \le \int_0^1 \left[ \|\boldsymbol{r}_{\textrm{MeanFlow}}\|_2 + \mathcal{O}(l^2) \right] \exp\left[l L (1 - s)\right] \rd s.
\end{aligned}
\end{equation}

The integral linearly separates into two components. For the first component involving the trained residual, we apply the Cauchy-Schwarz inequality to relate it to the squared loss $\mathcal{L}_{\textrm{MeanFlow}} \le \epsilon$:
\begin{equation}
\begin{aligned}
    \int_0^1 \|\boldsymbol{r}_{\textrm{MeanFlow}}\|_2 \exp\left[l L (1 - s)\right] \rd s &\le \sqrt{\mathcal{L}_{\textrm{MeanFlow}}} \cdot \left( \int_0^1 \exp\left[2 l L (1 - s)\right] \rd s \right)^{\frac{1}{2}} \\
    &\le C \sqrt{\epsilon},
\end{aligned}
\end{equation}
where $C = \sqrt{ \frac{\exp(2lL) - 1}{2lL} }$. For the second component, the integration of the $\mathcal{O}(l^2)$ term over $s \in [0, 1]$ simply scales the exponential factor, strictly preserving the $\mathcal{O}(l^2)$ magnitude order. 

Superimposing these integrated bounds directly yields the final pointwise error bound:
\begin{equation}
    \|\boldsymbol{u}_\theta(\boldsymbol{x}, t) - \boldsymbol{u}_{\textrm{True}}(\boldsymbol{x}, t)\|_2 \le C \sqrt{\epsilon} + \mathcal{O}(l^2).
\end{equation}
This establishes that the global error is bounded by the optimization loss and a strict structural truncation error, concluding the proof.
\end{proof}

\subsection{MeanFlow Error Bounded by Decoupled Losses}
\label{app:decoupling-error}

\begin{theorem}[Thm. \ref{thm:decoupling-error}, manuscript]
\label{app:thm:decoupling-error}
Assume the MeanFlow predictor $\boldsymbol{m}_\theta$ is $\mathcal{C}^1$-continuous. If the total training loss bounds the squared path residuals such that $\mathcal{L}_{\textrm{MeanFlow}} \le \epsilon$, then the pointwise error between the reconstructed state $\boldsymbol{u}_\theta$ and the true solution $\boldsymbol{u}_{\textrm{True}}$ at any target point $(\boldsymbol{x}, t)$ is bounded by:
\begin{equation}
    \mathcal{L}_{\textrm{MeanFlow}} \le c_1 \mathcal{L}_{\textrm{Temp}} + c_2 \mathcal{L}_{\textrm{Spac}} + {\varepsilon}, 
\end{equation}
where the coefficients are defined as $c_1 = 3l_t^2$ and $c_2 = 3 \left( \frac{l - l_t}{l_s} \|\boldsymbol{g}\|_{\textrm{op}} \right)^2$. Here, ${\varepsilon}$ represents the structural coupling error and ${\varepsilon}=\mathcal O(l_s^2)$.
\end{theorem}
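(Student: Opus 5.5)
The bound will follow from an algebraic decomposition of $\boldsymbol r_\textrm{MeanFlow}$ into a multiple of $\boldsymbol r_\textrm{Temp}$, a linear image of $\boldsymbol r_\textrm{Spac}$, and a leftover coupling term. The inequality $\|a+b+c\|_2^2\le 3(\|a\|_2^2+\|b\|_2^2+\|c\|_2^2)$ then converts this into the stated bound. The factor $3$ in both $c_1$ and $c_2$ indicates that this three-term split is what the constants encode, with the third term absorbed into $\varepsilon$.

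\textbf{Step 1.} Use the split $\boldsymbol f=\nabla^\top\boldsymbol u\cdot\boldsymbol g+\boldsymbol h$. By the definition of $\boldsymbol r_\textrm{MeanFlow}$, the $\nabla^\top\boldsymbol u$ occurring inside $\boldsymbol K\boldsymbol f$ is the one replaced through the first-order spatial identity of Prop.~\ref{app:prop:mf-constraints}. That is, $l\boldsymbol K\nabla^\top\boldsymbol u$ is taken equal to $\boldsymbol m(\boldsymbol x-\boldsymbol\xi)^\top-l^2\partial_{\boldsymbol\xi}\boldsymbol m$, while $\boldsymbol r_\textrm{Temp}$ retains the raw $\boldsymbol K\boldsymbol f$.

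\textbf{Step 2.} Write
\begin{equation*}
\boldsymbol r_\textrm{MeanFlow}=l_t\boldsymbol r_\textrm{Temp}+\bigl(l_t\boldsymbol K\boldsymbol f-l\boldsymbol K\boldsymbol f\bigr)+\gamma(l_t^2-l^2)\partial_\tau\boldsymbol m .
\end{equation*}
The factor $(l-l_t)$ is the key quantity here. Since $l-l_t=l_s^2/(l+l_t)=\mathcal O(l_s^2)$, the final term $\gamma(l_t^2-l^2)\partial_\tau\boldsymbol m=-\gamma l_s^2\partial_\tau\boldsymbol m$ is $\mathcal O(l_s^2)$. It is the first contribution to the coupling error.

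\textbf{Step 3.} Handle the middle term $-(l-l_t)\boldsymbol K\boldsymbol f$. Its gradient part is $-(l-l_t)\boldsymbol K\nabla^\top\boldsymbol u\,\boldsymbol g$. Multiply and divide by $l_s$ and add and subtract the spatial residual:
\begin{equation*}
l_s\boldsymbol K\nabla^\top\boldsymbol u=\boldsymbol r_\textrm{Spac}+\boldsymbol m(\boldsymbol x-\boldsymbol\xi)^\top-l_s^2\partial_{\boldsymbol\xi}\boldsymbol m .
\end{equation*}
This produces the term $-\frac{l-l_t}{l_s}\boldsymbol r_\textrm{Spac}\boldsymbol g$, whose squared norm is at most $\bigl(\frac{l-l_t}{l_s}\|\boldsymbol g\|_\textrm{op}\bigr)^2\|\boldsymbol r_\textrm{Spac}\|_2^2$. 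That matches $c_2/3$.

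\textbf{Step 4.} Collect the remainders into $\boldsymbol e$. The remainders are:
\begin{itemize}
\item $-\frac{l-l_t}{l_s}\bigl(\boldsymbol m(\boldsymbol x-\boldsymbol\xi)^\top-l_s^2\partial_{\boldsymbol\xi}\boldsymbol m\bigr)\boldsymbol g$;
\item $-(l-l_t)\boldsymbol K\boldsymbol h$;
\item the substitution discrepancy from replacing $\nabla^\top\boldsymbol u$ by the identity with $l$ instead of $l_s$;
\item the $\gamma l_s^2$ term from Step 2.
\end{itemize}
Each carries a factor $(l-l_t)=\mathcal O(l_s^2)$ or $l_s^2$, or $(l-l_t)/l_s\cdot l_s=\mathcal O(l_s^2)$. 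This uses boundedness of $\boldsymbol m$, $\partial\boldsymbol m$, $\boldsymbol K$, $\boldsymbol h$ and $\boldsymbol g$, which holds by the $\mathcal C^1$ assumption on a bounded domain. Hence $\|\boldsymbol e\|_2=\mathcal O(l_s^2)$, and setting $\varepsilon\triangleq 3\|\boldsymbol e\|_2^2$ makes it at most $\mathcal O(l_s^2)$, in fact smaller. Applying the three-term inequality with $c_1=3l_t^2$ and $c_2$ as stated then gives the claim pointwise, and integrating or averaging gives it for the losses.

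\textbf{Main obstacle.} The difficulty is the bookkeeping in Step 4. The substituted residual uses $l$ and $l^2$, while $\mathcal L_\textrm{Spac}$ uses $l_s$ and $l_s^2$. These must be reconciled, and one must check that no $\mathcal O(1)$ cross term survives, in particular from $\boldsymbol m(\boldsymbol x-\boldsymbol\xi)^\top\boldsymbol g$, which is only $\mathcal O(l_s)$. I would first verify that it is always multiplied by $(l-l_t)/l_s=\mathcal O(l_s)$, so that it too is $\mathcal O(l_s^2)$. If that fails, I would weaken the claim to $\varepsilon=\mathcal O(l_s)$. The separate higher-order substitutions for $\boldsymbol h$ are treated as part of $\boldsymbol h$ and bounded uniformly.
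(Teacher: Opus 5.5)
\textbf{Gap in Step 1 and the third bullet of Step 4.} Suppose $\boldsymbol r_\textrm{MeanFlow}$ has its first-order gradient replaced via $l\boldsymbol K\nabla^\top\boldsymbol u=\boldsymbol m(\boldsymbol x-\boldsymbol\xi)^\top-l^2\partial_{\boldsymbol\xi}\boldsymbol m$, while $\boldsymbol r_\textrm{Temp}$ keeps the raw $\boldsymbol K\boldsymbol f$. Then your Step 2 is not an identity. The two sides differ by $\tilde{\boldsymbol r}\,\boldsymbol g$, where $\tilde{\boldsymbol r}\triangleq l\boldsymbol K\nabla^\top\boldsymbol u-\boldsymbol m(\boldsymbol x-\boldsymbol\xi)^\top+l^2\partial_{\boldsymbol\xi}\boldsymbol m$ is the residual of the $l$-scaled spatial identity. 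This term carries none of the small factors you list. Writing it against the spatial residual gives
\begin{equation*}
\tilde{\boldsymbol r}=\boldsymbol r_\textrm{Spac}+(l-l_s)\,\boldsymbol K\nabla^\top\boldsymbol u+l_t^2\,\partial_{\boldsymbol\xi}\boldsymbol m,\qquad l-l_s=\frac{l_t^2}{l+l_s}.
\end{equation*}
So beyond $\boldsymbol r_\textrm{Spac}$ itself it contains terms of order $l_t$ with no factor of $l_s$.

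At $\boldsymbol x=\boldsymbol\xi$ this is decisive. There $\boldsymbol r_\textrm{Spac}\equiv\boldsymbol 0$, $c_2\to 0$, and any $\varepsilon=\mathcal O(l_s^2)$ vanishes. Yet under your reading
\begin{equation*}
\boldsymbol r_\textrm{MeanFlow}=l_t\boldsymbol r_\textrm{Temp}+l_t\bigl(\boldsymbol K\nabla^\top\boldsymbol u+l_t\partial_{\boldsymbol\xi}\boldsymbol m\bigr)\boldsymbol g,
\end{equation*}
and the extra term is zero only if $\boldsymbol m_\theta$ satisfies the first-order spatial identity exactly. Nothing on the right-hand side controls that. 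So the reconciliation of $l$ versus $l_s$ that you flag as the main obstacle cannot be carried out, and the bound fails for that definition of the residual.

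\textbf{The fix.} Use the residual as the paper defines it in Sec.~\ref{sec:theory}. There only the higher-order gradients inside $\boldsymbol h$ are substituted; the first-order $\nabla^\top\boldsymbol u$ in $\boldsymbol K\boldsymbol f$ stays raw, exactly as in $\boldsymbol r_\textrm{Temp}$. The fully substituted form belongs to the training loss \eqref{eq:mf-loss}, not to the theorem. With this definition your Step 2 is exact and the third bullet disappears. Steps 2--4 then coincide with the paper's proof:
\begin{itemize}
\item the split $\boldsymbol r_\textrm{MeanFlow}=l_t\boldsymbol r_\textrm{Temp}+(l_t-l)\boldsymbol K\boldsymbol f+(\cdot)\,\partial_\tau\boldsymbol m$;
\item substituting $l_s\boldsymbol K\nabla^\top\boldsymbol u$ from $\boldsymbol r_\textrm{Spac}$;
\item the three-term inequality;
\item the same remainder.
\end{itemize}
Your $\gamma(l_t^2-l^2)$ is in fact more careful than the paper's $l_t^2-\gamma l^2$, which drops a $\gamma$ when isolating $\boldsymbol m$. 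The two agree for $\gamma\in\{0,1\}$.

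\textbf{A scaling correction.} $l-l_t=l_s^2/(l+l_t)$ is $\mathcal O(l_s^2)$ only when $l_t$ stays away from $0$. In the static case $l_t=0$ and $l=l_s$, so $(l-l_t)/l_s=1$ and your first two remainders are only $\mathcal O(l_s)$. That is why the paper treats $\gamma=0$ (giving $\varepsilon=\mathcal O(l_s^2)$) separately from $\gamma=1$ (giving $\varepsilon=\mathcal O(l_s^4)$). Your ``in fact smaller'' holds only in the latter case. Since $l-l_t\le l_s$ always, $\|\boldsymbol e\|_2=\mathcal O(l_s)$ uniformly, which already yields $\varepsilon=3\|\boldsymbol e\|_2^2=\mathcal O(l_s^2)$. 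Your fallback to $\varepsilon=\mathcal O(l_s)$ confuses $\|\boldsymbol e\|_2$ with $\varepsilon$ and is never needed.
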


\begin{proof}

We begin with the spatio-temporal residual vector $\boldsymbol r_{\textrm{MeanFlow}}$:
\begin{equation}
    \boldsymbol r_{\textrm{MeanFlow}} = l_t \boldsymbol{m} - l \boldsymbol K \boldsymbol{f} - \gamma l^2 \frac{\partial \boldsymbol{m}}{\partial \tau}. \tag{S1}
\end{equation}
From the definition of the temporal residual $\boldsymbol r_{\textrm{Temp}}$, we isolate the ground-truth term:
\begin{equation}
    \boldsymbol{m} = \boldsymbol r_{\textrm{Temp}} + \boldsymbol K\boldsymbol{f} + l_t \frac{\partial \boldsymbol{m}}{\partial \tau}.
\end{equation}
Substituting this into Eq. (S1) yields:
\begin{equation}
\begin{aligned}
    \boldsymbol r_{\textrm{MeanFlow}} =& l_t (\boldsymbol r_{\textrm{Temp}} + \boldsymbol K\boldsymbol{f} + l_t \frac{\partial \boldsymbol{m}}{\partial \tau}) - l \boldsymbol K \boldsymbol{f} - \gamma l^2 \frac{\partial \boldsymbol{m}}{\partial \tau}, \\
    =& l_t \boldsymbol r_{\textrm{Temp}} + (l_t - l)\boldsymbol K\boldsymbol{f} + (l_t^2 - \gamma l^2)\frac{\partial \boldsymbol{m}}{\partial \tau}.
\end{aligned} \tag{S2}
\end{equation}
Given the linear gradient PDE structure $\boldsymbol{f} = \nabla^\top \boldsymbol{u} \cdot \boldsymbol{g} + \boldsymbol{h}$, and the spatial residual $\boldsymbol r_{\textrm{Spac}}$, we substitute the spatial gradient $\boldsymbol K\nabla^\top \boldsymbol{u}$ into the expression for $\boldsymbol K\boldsymbol{f}$, i.e., 
\begin{equation}
\begin{aligned}
    &\boldsymbol r_{\textrm{MeanFlow}} = l_t \boldsymbol r_{\textrm{Temp}} + (l_t - l)\boldsymbol K({\nabla}^\top\boldsymbol u\cdot\boldsymbol g+\boldsymbol h) + (l_t^2 - \gamma l^2)\frac{\partial \boldsymbol{m}}{\partial \tau}, \\
    =& l_t \boldsymbol r_{\textrm{Temp}} + \frac{l_t-l}{l_s}(\boldsymbol r_\textrm{Spac}+\boldsymbol m(\boldsymbol x-\boldsymbol {\xi})^\top-l_s^2\frac{\partial\boldsymbol m}{\partial \boldsymbol {\xi}})\cdot\boldsymbol g + (l_t - l)\boldsymbol K\boldsymbol h + (l_t^2 - \gamma l^2)\frac{\partial \boldsymbol{m}}{\partial \tau}, \\
\end{aligned} \tag{S3}
\end{equation}
Applying the generalized triangle inequality for squared $L_2$ norms (or Jensen's inequality for quadratic function, i.e., $\|\sum_{i=1}^3 \boldsymbol{a}_i\|_2^2 \le 3\sum_{i=1}^3 \|\boldsymbol{a}_i\|_2^2$) and the operator norm bound $\|\boldsymbol{v} \cdot \boldsymbol{g}\|_2 \le \|\boldsymbol{g}\|_{\textrm{op}} \|\boldsymbol{v}\|_2$, we obtain:
\begin{equation}
    \|\boldsymbol r_{\textrm{MeanFlow}}\|_2^2 \le 3l_t^2 \|\boldsymbol r_{\textrm{Temp}}\|_2^2 + 3 \left( \frac{l - l_t}{l_s} \|\boldsymbol{g}\|_{\textrm{op}} \right)^2 \|\boldsymbol r_{\textrm{Spac}}\|_2^2 + 3\|\boldsymbol{r}\|_2^2
\end{equation}
where
\begin{equation}
    \boldsymbol{r} = \underbrace{\frac{l_t-l}{l_s}\left[\boldsymbol m(\boldsymbol x-\boldsymbol {\xi})^\top - l_s^2\frac{\partial\boldsymbol m}{\partial \boldsymbol {\xi}}\right]\cdot\boldsymbol g}_\textrm{Boundary or Higher-Order Discretization Errors} + \underbrace{(l_t-l)\boldsymbol K\boldsymbol h}_\textrm{Source Term Error} + \underbrace{(l_t^2-{\gamma}l^2)\frac{\partial\boldsymbol m}{\partial {\tau}}}_\textrm{Temporal Scaling Mismatch}, 
\end{equation}
collects the remaining structural terms. Therefore, as $\Vert \boldsymbol K\Vert _\textrm{op}\le {\rho}$ and the state variables are sufficiently smooth such that their norms are bounded, we have, applying generalized triangle inequality for squared $L_2$ norms again, 
\begin{equation}
\begin{aligned}
    {\varepsilon} = 3\Vert \boldsymbol r\Vert ^2_2 \le&  9(l_t-l)^2\cdot\left[\frac1 {l_s^2}\cdot\left\Vert \boldsymbol r_{\boldsymbol g}\right\Vert ^2_2 + {\rho}^2\Vert \boldsymbol h\Vert _2^2 + \left(\frac{l_t^2 - \gamma l^2}{l_t-l}\right)^2 \left\Vert \frac{\partial \boldsymbol{m}}{\partial \tau} \right\Vert_2^2\right], \\
    =& \left\{\begin{array}{ll}9\left\Vert \boldsymbol r_{\boldsymbol g}\right\Vert ^2_2 + 9l_s^2\cdot{\rho}^2\Vert \boldsymbol h\Vert _2^2,&\text{if }{\gamma} = 0, \\ 9(l_t-l)^2\cdot\left[\frac1 {l_s^2}\cdot\left\Vert \boldsymbol r_{\boldsymbol g}\right\Vert ^2_2 + {\rho}^2\Vert \boldsymbol h\Vert _2^2 + (l_t+l)^2 \left\Vert \frac{\partial \boldsymbol{m}}{\partial \tau} \right\Vert_2^2\right],&\text{if }{\gamma}=1,\end{array}\right.
\end{aligned}
\end{equation}
where $\boldsymbol r_{\boldsymbol g} \triangleq \left[\boldsymbol m(\boldsymbol x-\boldsymbol {\xi})^\top-l_s^2\frac{\partial\boldsymbol m}{\partial \boldsymbol {\xi}}\right] \cdot\boldsymbol g$. Here, the result for $\gamma = 0$ follows from the fact that $l_t = 0$ and $l = l_s$ for a steady-state system implied by $\gamma = 0$. 

Note that the higher-order discretization error $\boldsymbol r_{\boldsymbol g}$ satisfies
\begin{displaymath}
\begin{aligned}
    \Vert \boldsymbol r_{\boldsymbol g}\Vert _2\le&\Vert \boldsymbol m(\boldsymbol x-\boldsymbol {\xi})^\top\boldsymbol g\Vert _2 + \left\Vert l_s^2\frac{\partial\boldsymbol m}{\partial \boldsymbol {\xi}}\boldsymbol g\right\Vert _2 \le \Vert \boldsymbol m\Vert _2\cdot\Vert \boldsymbol x-\boldsymbol {\xi}\Vert _2\cdot\Vert \boldsymbol g\Vert _\textrm{op} + \mathcal O(l_s^2), \\
    =& \Vert \boldsymbol m\Vert _2\cdot l_s\cdot\Vert \boldsymbol g\Vert _\textrm{op} = \mathcal O(l_s). 
\end{aligned}
\end{displaymath}

Consequently, since the bracketed term in the $\gamma=1$ case is bounded by $\mathcal{O}(1)$ and the temporal scaling factor satisfies $l_t-l=\frac{l_t^2-l^2}{l_t+l}=-\frac{l_s^2}{l_t+l}=\mathcal O(l_s^2)$, the asymptotic bound for the structural coupling error yields:
\begin{displaymath}
    {\varepsilon} = \left\{\begin{array}{ll}\mathcal O(l_s^2),&\text{if }{\gamma} = 0, \\ \mathcal O(l_s^4),&\text{if }{\gamma}=1.\end{array}\right.
\end{displaymath}

This concludes the proof.
\end{proof}

\subsection{Hessian Condition Number Analysis}
\label{app:hessian-condition}

\begin{theorem}[Hessian Condition Number Analysis]
\label{app:thm:hessian}
Let $\mathcal{H}_{\textrm{Coupled}}$ and $\mathcal{H}_{\textrm{Decoupled}}$ be the Hessian matrices of the coupled spatio-temporal loss and the disentangled loss with respect to the network parameters $\boldsymbol{\theta} \in \mathbb{R}^p$. Under inherent physical scale-mismatches between spatial and temporal dimensions, the condition numbers satisfy:
\begin{equation}
    \kappa(\mathcal{H}_{\textrm{Decoupled}}) \ll \kappa(\mathcal{H}_{\textrm{Coupled}}).
\end{equation}
\end{theorem}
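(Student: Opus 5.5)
We will work in the Gauss--Newton approximation, where the Hessian of a squared residual loss $\mathcal{L}=\|\boldsymbol r(\boldsymbol\theta)\|_2^2$ is $\mathcal H\approx 2\boldsymbol J^\top\boldsymbol J$ with $\boldsymbol J=\partial\boldsymbol r/\partial\boldsymbol\theta$. This approximation is legitimate near a minimizer, where the residuals are small, so the second-order residual terms $\sum_i r_i\nabla^2_{\boldsymbol\theta} r_i$ are negligible. The comparison of condition numbers then reduces to comparing the spectra of $\boldsymbol J_{\textrm{Coupled}}^\top\boldsymbol J_{\textrm{Coupled}}$ and $\boldsymbol J_{\textrm{Decoupled}}^\top\boldsymbol J_{\textrm{Decoupled}}$. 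To make "inherent scale mismatch" precise, we introduce a temporal scale $\sigma_t$ and a spatial scale $\sigma_s$ with ratio $\rho_{ts}=\sigma_t/\sigma_s\gg 1$ (or $\ll 1$). We also partition the parameter directions into blocks $\boldsymbol\theta=(\boldsymbol\theta_t,\boldsymbol\theta_s)$, according to which residual they predominantly drive. For example, $\boldsymbol\theta_t$ controls $\partial\boldsymbol m/\partial\tau$ and $\boldsymbol\theta_s$ controls $\partial\boldsymbol m/\partial\boldsymbol\xi$. This block structure is an explicit modelling assumption of the proof.

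\textbf{Step 1: the coupled Jacobian.} From the definition of $\boldsymbol r_{\textrm{MeanFlow}}$, where the spatial gradients inside $\boldsymbol K\boldsymbol f$ are replaced via Eq.~\eqref{eq:spatial-mf}, a single residual carries both the temporal factor $\gamma l^2\,\partial\boldsymbol m/\partial\tau$ and the spatial terms $\boldsymbol m(\boldsymbol x-\boldsymbol\xi)^\top$ and $l^2\partial\boldsymbol m/\partial\boldsymbol\xi$. This is the decomposition (S3) in the proof of Thm.~\ref{app:thm:decoupling-error}. Consequently
\begin{equation*}
\boldsymbol J_{\textrm{Coupled}}=\big[\,\sigma_t\boldsymbol A_t,\ \sigma_s\boldsymbol A_s\,\big],
\end{equation*}
where $\boldsymbol A_t$ and $\boldsymbol A_s$ are $\mathcal O(1)$ and both act on the same row space. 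Hence $\boldsymbol J^\top\boldsymbol J$ has diagonal blocks of size $\sigma_t^2\boldsymbol A_t^\top\boldsymbol A_t$ and $\sigma_s^2\boldsymbol A_s^\top\boldsymbol A_s$, and off-diagonal blocks of size $\sigma_t\sigma_s\boldsymbol A_t^\top\boldsymbol A_s$.

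\textbf{Step 2: the decoupled Jacobian.} The losses $\mathcal L_{\textrm{T-MeanFlow}}$ and $\mathcal L_{\textrm{S-MeanFlow}}$ enter as separate residual blocks, and each is normalized by its own scale: $\boldsymbol r_{\textrm{Temp}}$ is divided by $l_t$ and $\boldsymbol r_{\textrm{Spac}}$ is written with $l_s$ rather than $l$. The stacked Jacobian is therefore approximately block diagonal, $\mathrm{diag}(\boldsymbol B_t,\boldsymbol B_s)$ with $\mathcal O(1)$ blocks, plus cross terms that arise only through the shared factor $\boldsymbol K$. Those cross terms we bound by $\mathcal O(l_s)$, using the same smallness argument as in the $\varepsilon=\mathcal O(l_s^2)$ estimate of Thm.~\ref{app:thm:decoupling-error}.

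\textbf{Step 3: compare condition numbers.} For the decoupled case, apply Weyl's inequality (or the Gershgorin theorem) to the block-diagonal structure. This gives
\begin{equation*}
\kappa(\mathcal H_{\textrm{Decoupled}})\lesssim\max\big(\kappa(\boldsymbol B_t^\top\boldsymbol B_t),\,\kappa(\boldsymbol B_s^\top\boldsymbol B_s)\big)\,(1+\mathcal O(l_s)).
\end{equation*}
For the coupled case, take Rayleigh quotients. The largest eigenvalue is at least of order $\max(\sigma_t,\sigma_s)^2$. The smallest eigenvalue is at most of order $\min(\sigma_t,\sigma_s)^2$, obtained by testing on a direction in the weak block. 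If $\boldsymbol A_t$ and $\boldsymbol A_s$ have nearly aligned ranges, which is generic because they share a row space, the Schur complement makes the smallest eigenvalue smaller still. Together these give $\kappa(\mathcal H_{\textrm{Coupled}})\gtrsim\rho_{ts}^2$ times the intrinsic block condition numbers, and the claimed $\ll$ follows when $\rho_{ts}\gg 1$.

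The main obstacle is the lower bound on $\kappa(\mathcal H_{\textrm{Coupled}})$. It requires that the scale mismatch cannot be absorbed by reparametrizing $\boldsymbol\theta$, and that the off-diagonal coupling genuinely shrinks the smallest eigenvalue. My first attempt would be the Schur-complement bound
\begin{equation*}
\lambda_{\min}\le\sigma_s^2\,\lambda_{\min}\big(\boldsymbol A_s^\top(\boldsymbol I-\boldsymbol P_t)\boldsymbol A_s\big),
\end{equation*}
where $\boldsymbol P_t$ is the projector onto the range of $\boldsymbol A_t$. The aim is to show that this quantity is small, relative to $\sigma_t^2$, whenever the spatial and temporal residual directions overlap. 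Because this inequality is inherently qualitative, I expect to have to state the alignment and scale assumptions explicitly as hypotheses rather than derive them.
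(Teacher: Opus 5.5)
There is a genuine gap: the $\rho_{ts}^2$ separation in your proposal does not come from decoupling. First, the Step~3 bound $\kappa(\mathcal H_{\textrm{Decoupled}})\lesssim\max\big(\kappa(\boldsymbol B_t^\top\boldsymbol B_t),\kappa(\boldsymbol B_s^\top\boldsymbol B_s)\big)(1+\mathcal O(l_s))$ is false for block-diagonal matrices. In fact $\kappa(\mathrm{diag}(\boldsymbol M_1,\boldsymbol M_2))=\max_i\lambda_{\max}(\boldsymbol M_i)/\min_i\lambda_{\min}(\boldsymbol M_i)$, so $\mathrm{diag}(\sigma_t^2\boldsymbol I,\sigma_s^2\boldsymbol I)$ has perfectly conditioned blocks yet condition number $\rho_{ts}^2$. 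The ratio between blocks is exactly where a temporal/spatial mismatch sits.

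You remove that ratio only through the Step~2 assertion that $\boldsymbol B_t,\boldsymbol B_s$ are both $\mathcal O(1)$, because $\boldsymbol r_{\textrm{Temp}}$ is divided by $l_t$ and $\boldsymbol r_{\textrm{Spac}}$ carries $l_s$. But $l_t,l_s$ are sampled path lengths, not physical scales. An inherent physical mismatch (the theorem's hypothesis; cf.\ the fast-reaction example in Sec.~\ref{sec:decoupling}) lives in $\boldsymbol f,\partial\boldsymbol m/\partial\tau$ versus $\nabla\boldsymbol u,\partial\boldsymbol m/\partial\boldsymbol\xi$. It therefore sits inside $\nabla_{\boldsymbol\theta}\boldsymbol r_{\textrm{Temp}}$ and $\nabla_{\boldsymbol\theta}\boldsymbol r_{\textrm{Spac}}$ themselves, and the two losses are trained with fixed $\lambda_t=\lambda_s$. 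Keeping it gives $\mathcal H_{\textrm{Decoupled}}\approx\mathrm{diag}(\sigma_t^2\boldsymbol B_t^\top\boldsymbol B_t,\sigma_s^2\boldsymbol B_s^\top\boldsymbol B_s)$. The same Rayleigh-quotient tests you apply to the coupled Hessian then give $\kappa(\mathcal H_{\textrm{Decoupled}})\gtrsim\rho_{ts}^2$ times intrinsic block quantities, so nothing is separated.

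If instead ``mismatch'' means only the prefactors $l_t$ and $(l-l_t)/l_s$ in (S3), which the Sec.~\ref{sec:decoupling} losses do drop, then your argument (with the $\kappa$ formula corrected) shows that weighting the two residuals separately acts as a diagonal preconditioner. That is true, but it is not the paper's comparison. There, $\mathcal L_{\textrm{Decoupled}}=\tfrac12\alpha^2\|\boldsymbol r_{\textrm{Temp}}\|_2^2+\tfrac12\beta^2\|\boldsymbol r_{\textrm{Spac}}\|_2^2$ keeps the coupled loss's own weights and is literally its diagonal part.

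At matched weights, the paper writes $\mathcal H_{\textrm{Coupled}}=\mathcal H_{\textrm{Decoupled}}+\mathcal H_{\textrm{Cross}}$. In your block model this means $\sigma_t=\alpha$, $\sigma_s=\beta$, $\boldsymbol J_T=[\boldsymbol A_t,\boldsymbol 0]$, $\boldsymbol J_S=[\boldsymbol 0,\boldsymbol A_s]$, and both Hessians share the diagonal blocks $\sigma_t^2\boldsymbol A_t^\top\boldsymbol A_t$ and $\sigma_s^2\boldsymbol A_s^\top\boldsymbol A_s$. Three consequences follow.
\begin{itemize}
\item $\kappa(\mathcal H_{\textrm{Coupled}})\ge\kappa(\mathcal H_{\textrm{Decoupled}})$ is automatic, by testing on vectors supported in one block.
\item $2\mathcal H_{\textrm{Decoupled}}-\mathcal H_{\textrm{Coupled}}=(\alpha\boldsymbol J_T-\beta\boldsymbol J_S)^\top(\alpha\boldsymbol J_T-\beta\boldsymbol J_S)\succeq0$, so $\lambda_{\max}$ can grow by at most a factor $2$.
\item As $\rho_{ts}\to\infty$, with $\boldsymbol A_t$ of full column rank, the ratio $\kappa(\mathcal H_{\textrm{Coupled}})/\kappa(\mathcal H_{\textrm{Decoupled}})$ tends to $\lambda_{\min}(\boldsymbol A_s^\top\boldsymbol A_s)/\lambda_{\min}(\boldsymbol A_s^\top(\boldsymbol I-\boldsymbol P_t)\boldsymbol A_s)$, which contains no scale at all.
\end{itemize}
So the Schur-complement bound you treat as a secondary ``smaller still'' effect is the whole content of the theorem. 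Alignment of the ranges of $\boldsymbol A_t$ and $\boldsymbol A_s$ must therefore be the main, quantitative hypothesis.

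That alignment is not generic. If the residual has $N$ entries, generic ranges meet only in $\boldsymbol 0$ when $p_t+p_s\le N$, and when $p>2N$ both Gauss--Newton matrices are singular. Outside your block ansatz the cross term can even help: with $\boldsymbol J_T=\mathrm{diag}(1,\delta)$, $\boldsymbol J_S=\mathrm{diag}(0,\delta)$, $\alpha=\beta=1$ and $\delta<1/2$, coupling halves $\kappa$. The paper's own Weyl step is only qualitative at exactly this point. A sound version should assume an explicit cancellation condition $\alpha\boldsymbol J_T\boldsymbol v\approx-\beta\boldsymbol J_S\boldsymbol v$ and derive the gap in $\lambda_{\min}$ from it.
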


\begin{proof}
Consider the coupled loss optimized over the neural network parameters $\boldsymbol{\theta}$. Based on the linear decomposition in Eq. (S3), Thm. \ref{app:thm:decoupling-error}, the coupled residual can be abstractly expressed as $\boldsymbol{r}_{\textrm{MeanFlow}}(\boldsymbol{\theta}) \approx \alpha \boldsymbol{r}_{\textrm{Temp}}(\boldsymbol{\theta}) + \beta \boldsymbol{r}_{\textrm{Spac}}(\boldsymbol{\theta})$, where $\alpha \sim \mathcal{O}(l_t)$ and $\beta \sim \mathcal{O}(l_s^{-1})$ denote the characteristic temporal and spatial scales. The coupled loss is given by:
\begin{equation}
\begin{aligned}
    \mathcal{L}_{\textrm{Coupled}}(\boldsymbol{\theta}) &= \frac{1}{2} \| \alpha \boldsymbol{r}_{\textrm{Temp}} + \beta \boldsymbol{r}_{\textrm{Spac}} \|_2^2 \\
    &= \underbrace{\frac{1}{2} \alpha^2 \| \boldsymbol{r}_{\textrm{Temp}} \|_2^2 + \frac{1}{2} \beta^2 \| \boldsymbol{r}_{\textrm{Spac}} \|_2^2}_{\mathcal{L}_{\textrm{Decoupled}}(\boldsymbol{\theta})} + \underbrace{\alpha \beta \langle \boldsymbol{r}_{\textrm{Temp}}, \boldsymbol{r}_{\textrm{Spac}} \rangle}_{\mathcal{L}_{\textrm{Cross}}(\boldsymbol{\theta})}.
\end{aligned}
\end{equation}

Let $\boldsymbol{J}_T = \nabla_{\boldsymbol{\theta}} \boldsymbol{r}_{\textrm{Temp}} \in \mathbb{R}^{n \times p}$ and $\boldsymbol{J}_S = \nabla_{\boldsymbol{\theta}} \boldsymbol{r}_{\textrm{Spac}} \in \mathbb{R}^{n \times p}$ be the Jacobian matrices of the temporal and spatial residuals, respectively. Applying the Gauss-Newton approximation (neglecting second-order residual terms), the Hessian of the decoupled loss is a well-conditioned sum of two positive semi-definite (PSD) matrices:
\begin{equation}
    \mathcal{H}_{\textrm{Decoupled}} \approx \alpha^2 \boldsymbol{J}_T^\top \boldsymbol{J}_T + \beta^2 \boldsymbol{J}_S^\top \boldsymbol{J}_S.
\end{equation}

Conversely, the Hessian of the coupled loss introduces explicit cross-Jacobian interactions:
\begin{equation}
    \mathcal{H}_{\textrm{Coupled}} \approx \mathcal{H}_{\textrm{Decoupled}} + \underbrace{\alpha \beta \left( \boldsymbol{J}_T^\top \boldsymbol{J}_S + \boldsymbol{J}_S^\top \boldsymbol{J}_T \right)}_{\mathcal{H}_{\textrm{Cross}}}.
\end{equation}

In stiff physical systems, temporal variations and spatial gradients evolve at vastly mismatched scales, leading to highly skewed singular values between $\boldsymbol{J}_T$ and $\boldsymbol{J}_S$. The matrix $\mathcal{H}_{\textrm{Cross}}$ is highly indefinite and dense. According to Weyl's inequality for Hermitian matrices, the minimum eigenvalue of the coupled Hessian is bounded by:
\begin{equation}
    \lambda_{\min}(\mathcal{H}_{\textrm{Coupled}}) \le \lambda_{\min}(\mathcal{H}_{\textrm{Decoupled}}) + \lambda_{\max}(\mathcal{H}_{\textrm{Cross}}).
\end{equation}
Because $\mathcal{H}_{\textrm{Cross}}$ is indefinite and dominated by the scale mismatch $\mathcal{O}(\alpha \beta)$, its extreme eigenvalues perturb the spectrum of $\mathcal{H}_{\textrm{Coupled}}$ violently. This interference dramatically suppresses $\lambda_{\min}(\mathcal{H}_{\textrm{Coupled}})$ toward zero (or creates pathological ravines in the optimization landscape) while inflating $\lambda_{\max}$, thereby severely escalating the condition number:
\begin{equation}
    \kappa(\mathcal{H}_{\textrm{Coupled}}) = \frac{\lambda_{\max}(\mathcal{H}_{\textrm{Coupled}})}{\lambda_{\min}(\mathcal{H}_{\textrm{Coupled}})} \gg \frac{\lambda_{\max}(\mathcal{H}_{\textrm{Decoupled}})}{\lambda_{\min}(\mathcal{H}_{\textrm{Decoupled}})} = \kappa(\mathcal{H}_{\textrm{Decoupled}}).
\end{equation}
By optimizing $\mathcal{L}_{\textrm{Temp}}$ and $\mathcal{L}_{\textrm{Spac}}$ independently, the decoupled strategy explicitly annihilates the pathological cross-term $\mathcal{H}_{\textrm{Cross}}$, ensuring a geometrically stable and well-conditioned gradient flow.
\end{proof}

\section{Examples of Spatio-Temporal MeanFlow Losses}
\label{app:examples}

\textbf{Example 1: d’Alembert’s equation} The proposed method applys to the equations
\begin{displaymath}
\left\{\begin{aligned}
&\frac{\partial \boldsymbol u}{\partial t} + \frac{\partial \boldsymbol u}{\partial \boldsymbol x}\boldsymbol c(\boldsymbol u) = \boldsymbol 0, \\
&\boldsymbol u(\boldsymbol x, 0) = \boldsymbol u_0(\boldsymbol x), 
\end{aligned}\right.
\end{displaymath}
the variables in our method are $\nabla^\top \boldsymbol u=\frac{\partial\boldsymbol u}{\partial\boldsymbol x}, \boldsymbol f=-\frac{\partial\boldsymbol u}{\partial\boldsymbol x}\boldsymbol c(\boldsymbol u)$, hereby Eq. \eqref{eq:gt} obtains 
\begin{displaymath}
\begin{aligned}
&(t-\tau ) \boldsymbol m - \left(-\boldsymbol m (\boldsymbol x -\boldsymbol {\xi})^\top + l^2 \frac{\partial \boldsymbol m}{\partial \boldsymbol \xi }\right) \boldsymbol c(\boldsymbol u) - l^2 \frac{\partial \boldsymbol m}{\partial \tau } = \boldsymbol 0, \\
\iff& \left[t-\tau +(\boldsymbol x -\boldsymbol {\xi})^\top\boldsymbol c(\boldsymbol u)\right] \boldsymbol m - l^2 \left(\frac{\partial \boldsymbol m}{\partial \boldsymbol \xi } \boldsymbol c(\boldsymbol u) + \frac{\partial \boldsymbol m}{\partial \tau }\right) = \boldsymbol 0,
\end{aligned}
\end{displaymath}
where $l^2 = (t-\tau )^2+\Vert \boldsymbol x -\boldsymbol {\xi}\Vert _2^2$ and the equation allows the construction of a MeanFlow loss just as the original MeanFlow framework did. 

\textbf{Example 2(a): Navier-Stokes equations} The Navier-Stokes equations are, 
\begin{equation}
\left\{\begin{aligned}
& \frac{\partial \boldsymbol w(\boldsymbol x, t)}{\partial t} = - \frac{\partial \boldsymbol w(\boldsymbol x, t)}{\partial \boldsymbol x}\cdot \boldsymbol u(\boldsymbol x, t) + v\Delta \boldsymbol w(\boldsymbol x, t) + \boldsymbol f(\boldsymbol x), \\
&\boldsymbol w(\boldsymbol x, t) = \nabla \times\boldsymbol u(\boldsymbol x, t), \\
&\nabla \cdot \boldsymbol u(\boldsymbol x, t) = 0, \\
&\boldsymbol w(\boldsymbol x, 0) = \boldsymbol w_0(\boldsymbol x), \forall x\in D.
\end{aligned}\right.
\label{eq:N-S}
\end{equation}
We can eliminate $\boldsymbol u(\boldsymbol x, t)$ by solving
\begin{displaymath}
\boldsymbol u(\boldsymbol x, t) = \frac{1}{4\pi }\int_{\mathbb R^3}\frac{\boldsymbol w(\boldsymbol y, t)\times(\boldsymbol x-\boldsymbol y)}{\Vert \boldsymbol x-\boldsymbol y\Vert _2^3}\rd \boldsymbol y,
\end{displaymath}
using the Biot–Savart integration under the condition that $\nabla \cdot \boldsymbol u=0$.

\section{Experiments}
\label{app:experiments}

\subsection{Details of Synthetic PDE Benchmarks}
\label{app:data}

We follow the data generation protocol of DiffusionPDE~\citep{huang2024diffusionpde} and use four synthetic PDE benchmarks: 1D Burgers' equation, 2D non-bounded Navier--Stokes equation, 2D Darcy flow, and 2D Poisson equation. For each benchmark, we use 10,000 samples and split them into training, validation, and test sets with a ratio of 7:2:1, resulting in 7,000 training samples, 2,000 validation samples, and 1,000 test samples.

\subsubsection{1D Burgers' Equation}

\paragraph{Governing equation.}
We consider the one-dimensional viscous Burgers' equation on \(\Omega=(0,1)\) with periodic boundary conditions:
\[
\partial_t u(c,t) + \partial_c\left(\frac{u^2(c,t)}{2}\right)
=
\nu \partial_{cc}u(c,t),
\quad c\in\Omega,\quad t\in(0,T].
\]
The initial condition is
\[
u(c,0)=u_0(c),
\]
and the viscosity is set to \(\nu=0.01\).

\paragraph{Numerical discretization and data generation.}
Following DiffusionPDE, the spatial resolution is \(128\), and each trajectory contains \(128\) temporal states, including the initial state and 127 future time steps. The time interval is \(T=1\). Initial conditions are sampled from a one-dimensional Gaussian random field with parameters \(\gamma=2.5\), \(\tau=7\), and \(\sigma=7^2\). The equation is solved using the spectral time-stepping solver used in the DiffusionPDE/FNO-style Burgers data generation code. The resulting data are stored as trajectories of shape \(N\times X\times T\), where \(N=10{,}000\), \(X=128\), and \(T=128\).

\paragraph{Use in our method.}
For Burgers' equation, our model takes the initial condition \(u_0(c)\) as input and predicts the full spatio-temporal solution trajectory \(u(c,t)\). 

\subsubsection{2D Non-bounded Navier--Stokes Equation}

\paragraph{Governing equation.}
We consider the two-dimensional incompressible Navier--Stokes equation in vorticity form:
\[
\partial_t \omega(c,t)
+
\boldsymbol{v}(c,t)\cdot \nabla \omega(c,t)
=
\nu \Delta \omega(c,t)
+
q(c),
\quad c\in\Omega,\quad t\in(0,T],
\]
with the incompressibility constraint
\[
\nabla\cdot \boldsymbol{v}(c,t)=0.
\]
Here, \(\omega=\nabla\times \boldsymbol{v}\) is the vorticity, \(\boldsymbol{v}\) is the velocity field, and \(q(c)\) is the forcing term. We set \(\nu=10^{-3}\), corresponding to Reynolds number \(\mathrm{Re}=1000\).

\paragraph{Numerical discretization and data generation.}
The spatial domain is discretized on a \(128\times128\) grid. Following DiffusionPDE, the initial vorticity field is sampled from a two-dimensional Gaussian random field with \(\alpha=2.5\) and \(\tau=7\). The forcing term is fixed as
\[
q(c)=0.1\left(\sin(2\pi(x+y))+\cos(2\pi(x+y))\right).
\]
The equation is solved in Fourier space with a pseudo-spectral method, de-aliasing, and a Crank--Nicolson update. The internal time step is \(10^{-4}\), the final time is \(T=1\), and 10 snapshots are recorded for each trajectory. We use ten files, each containing 1,000 samples, giving 10,000 samples in total. Each file stores the initial vorticity field with shape \(1000\times128\times128\) and the solution trajectory with shape \(1000\times128\times128\times10\).

\paragraph{Use in our method.}
For Navier--Stokes, our model takes the initial vorticity field \(\omega(c,0)\) as input and predicts the future vorticity fields over the recorded time steps. The forcing term and boundary/domain setting are fixed across the dataset.

\subsubsection{2D Darcy Flow}
\label{subsec:darcy_flow}

\paragraph{Governing equation.}
We consider the steady-state Darcy flow equation on \(\Omega=(0,1)^2\):
\[
-\nabla\cdot\left(a(\mathbf{c})\nabla u(\mathbf{c})\right)=q(\mathbf{c}),
\quad \mathbf{c}\in\Omega,
\]
with homogeneous Dirichlet boundary condition
\[
u(\mathbf{c})=0,\quad \mathbf{c}\in\partial\Omega.
\]
Here, \(a(\mathbf{c})\) is the permeability coefficient, \(u(\mathbf{c})\) is the pressure field, and the forcing term is fixed as \(q(\mathbf{c})=1\).

\paragraph{Numerical discretization and data generation.}
The domain is discretized on a \(128\times128\) grid. Following DiffusionPDE, a Gaussian random field \(g(\mathbf{c})\) is sampled with parameters \(\alpha=2\) and \(\tau=3\), and the log-normal permeability field is constructed as
\[
a(\mathbf{c})=\exp(g(\mathbf{c})).
\]
The PDE is solved by assembling a sparse finite-difference linear system for the elliptic operator with zero boundary values. We use 10,000 Darcy samples. The coefficient field has shape \(10000\times128\times128\), and the corresponding pressure solution has the same spatial resolution.

\paragraph{Use in our method.}
For Darcy flow, our model is conditioned on the permeability field \(a(\mathbf{c})\) and the prescribed boundary condition, and predicts the pressure solution \(u(\mathbf{c})\). Since the boundary condition is homogeneous and shared by all samples, it is treated as fixed conditioning in the benchmark.

\subsubsection{2D Poisson Equation}

\paragraph{Governing equation.}
We consider the two-dimensional Poisson equation on \(\Omega=(0,1)^2\):
\[
\Delta u(c)=q(c),
\quad c\in\Omega,
\]
with homogeneous Dirichlet boundary condition
\[
u(c)=0,\quad c\in\partial\Omega.
\]
Here, \(q(c)\) is the source field and \(u(c)\) is the solution field.

\paragraph{Numerical discretization and data generation.}
The domain is discretized on a \(128\times128\) grid. Following DiffusionPDE, the source field \(q(c)\) is sampled from a Gaussian random field with parameters \(\alpha=2\) and \(\tau=3\). The Poisson equation is solved using a finite-difference discretization with a standard five-point Laplacian stencil and homogeneous Dirichlet boundary values. We use 10,000 samples. The source field has shape \(10000\times128\times128\), and the corresponding solution field has the same spatial resolution.

\paragraph{Use in our method.}
For the Poisson equation, our model is conditioned on the source field \(q(c)\) and the prescribed boundary condition, and predicts the solution field \(u(c)\). As in Darcy flow, the boundary condition is fixed for all samples.

\subsection{Details of the Real-world HYCOM Dataset}
\label{app:hycom_dataset}

We use a real-world ocean dynamics dataset derived from HYCOM for evaluating the scalability and robustness of the proposed method. HYCOM, the Hybrid Coordinate Ocean Model, is a global ocean analysis and forecasting system. Different from purely synthetic PDE benchmarks, HYCOM products are generated by combining numerical ocean model dynamics with real observations through data assimilation. The assimilated observations include satellite measurements and in-situ ocean observations. Therefore, the resulting fields reflect realistic ocean variability, while also containing practical complications such as observation noise, assimilation artifacts, imperfect boundaries, and possible mismatch between the observed dynamics and idealized PDE assumptions.

\paragraph{Data source.}
The data are obtained from the public HYCOM GLBy0.08 product through the THREDDS/OPeNDAP service:
\[
\texttt{https://tds.hycom.org/thredds/dodsC/GLBy0.08/latest}.
\]
We use the variable \texttt{water\_temp}, which represents three-dimensional ocean water temperature. The original HYCOM product provides global ocean fields over longitude, latitude, depth, and time. In our experiment, we select a local ocean region and use the first 8 depth levels to construct a three-dimensional spatio-temporal prediction task.

\paragraph{Selected region and variables.}
We extract a local spatial block from the HYCOM grid using latitude indices $[1100,1260)$ and longitude indices $[2900,3060)$. This corresponds approximately to latitudes from $-36.0^\circ$ to $-29.64^\circ$ and longitudes from $232.0^\circ$ to $244.72^\circ$ in the HYCOM convention, i.e., approximately $128.0^\circ$W to $115.28^\circ$W. The selected depth levels are
$
0,\ 2,\ 4,\ 6,\ 8,\ 10,\ 12,\ 15 $ meters.

The raw extracted block has shape
$
32 \times 8 \times 160 \times 160,
$
corresponding to time, depth, latitude, and longitude dimensions, respectively.
\paragraph{Preprocessing.}
We normalize the extracted temperature field using the mean and standard deviation computed over the selected raw block. The statistics used in our processed dataset are
$
\mu = 23.01293796527723, \qquad
\sigma = 1.3310423861925285.
$
The raw temperature values in the selected block range approximately from
$
18.8700 \text{ to } 26.1490.
$
After normalization, we construct local spatio-temporal samples using a sliding-window procedure. Each sample contains 9 time steps on a spatial grid of size
$
32 \times 32 \times 8.
$
The spatial stride is set to 4 along the two horizontal dimensions. This produces a total of
$
26136
$
samples, each with shape
$
9 \times 32 \times 32 \times 8.
$

\paragraph{Dataset split.}
We split the processed samples into training, validation, and test sets using an 80/10/10 split:
$
20908 \text{ training samples}, \quad
2614 \text{ validation samples}, \quad
2614 \text{ test samples}.
$
All methods are trained and evaluated on the same split.

\paragraph{Prediction task.}
Given the initial three-dimensional temperature field and the spatio-temporal coordinates, the model predicts the future temperature evolution over the 9-step temporal window. Unlike idealized synthetic PDE data, the HYCOM data are not generated from a single clean closed-form PDE with perfectly specified boundary conditions. Instead, they are data-assimilative ocean fields influenced by numerical model dynamics, observations, and assimilation procedures. This makes the benchmark more representative of real deployment settings, where data may be noisy, partially observed, and only approximately consistent with simplified PDE assumptions.

\subsection{Implementation Details of Sec. \ref{subsec:forward}}
\label{app:implementation}

\begin{table}[t]
\centering
\small
\setlength{\tabcolsep}{4.5pt}
\renewcommand{\arraystretch}{1.12}
\caption{Hyperparameter settings for synthetic PDE benchmarks.}
\label{tab:hyperparameters}
\begin{tabular}{lccccc}
\toprule
Dataset & Backbone & Batch size & Learning rate & Epochs & Loss weights \\
\midrule
Burgers & Conditioned FNO1d & 8 & \(5.5\times10^{-4}\) & 200 & \(\lambda_t=\lambda_x=10^{-2}\)  \\
Navier--Stokes & Fourier U-Net & 5 & \(2.0\times10^{-4}\) & 100 & \(\lambda_t=\lambda_x=10^{-2}\) \\
Darcy & Fourier U-Net & 4 & \(3.0\times10^{-4}\) & 300 & \(\lambda_x=10^{-2}\) \\
Poisson & Fourier U-Net & 4 & \(3.0\times10^{-4}\) & 300 & \(\lambda_x=10^{-2}\) \\
\bottomrule
\end{tabular}
\end{table}

For all PDEs, four finite-interval pairs \(({\tau},t)\) are randomly sampled at each iteration to construct the MeanFlow regularization. For steady-state PDEs, the model takes the coefficient/source field together with the prescribed boundary condition as input. Models are trained with a supervised relative \(L_2\) loss and the proposed MeanFlow regularization. All results are reported using the checkpoint with the lowest validation relative \(L_2\) error.

For Burgers' equation, we use a conditioned FNO1d. The model takes the current solution state and spatial coordinate as input, and is conditioned on the interval information. It uses Fourier modes \([12,12,12,12,12]\), channel width \(96\), fully connected hidden dimension \(192\), GELU activation, and a conditioning MLP with hidden dimension \(224\).

For Navier--Stokes, Darcy flow, and Poisson equation, we use a Fourier U-Net backbone. The Fourier U-Net follows an encoder--decoder architecture with skip connections. The initial feature projection is a \(3\times3\) convolution with \(48\) hidden channels. The encoder has four resolution levels with channel multipliers \((1,2,2,4)\), and each level contains two residual blocks. In the first two encoder levels, the standard convolutional residual blocks are replaced by Fourier residual blocks, where each block combines spectral convolution layers with local \(1\times1\) convolution branches. The remaining encoder blocks, the middle block, and the decoder use convolutional residual blocks. The decoder upsamples features back to the original resolution and concatenates skip features from the encoder. We use GELU activation, no attention blocks, and no normalization.

For Navier--Stokes, the Fourier U-Net input contains the current vorticity field and the two spatial coordinate channels. For Darcy flow and Poisson equation, the Fourier U-Net uses two scalar input channels: the harmonic extension of the boundary condition and the coefficient/source field. Darcy flow and Poisson equation use Fourier modes \((8,8)\) in the first Fourier level, while Navier--Stokes uses the default Fourier modes \((12,12)\). The number of retained modes is reduced at lower resolutions according to the U-Net scale. Table~\ref{tab:hyperparameters} summarizes the main hyperparameters used in our experiments.

\subsection{Implementation Details for HYCOM}
\label{app:hycom_implementation}

This section provides implementation details for applying our MeanFlow-based solver to the real-world HYCOM ocean dataset. The goal of this experiment is to evaluate whether the proposed framework can scale to realistic three-dimensional spatio-temporal data and remain effective when the observed dynamics are not generated from an ideal synthetic PDE.

\paragraph{Input and output format.}
Each processed HYCOM sample has shape
$
9 \times 32 \times 32 \times 8,
$
where the first dimension denotes time and the remaining dimensions denote the three-dimensional spatial grid. We denote the normalized temperature field as
$
u \in \mathbb{R}^{T \times X \times Y \times Z},
$
with $T=9$, $X=32$, $Y=32$, and $Z=8$. In implementation, the data are rearranged to
$
u \in \mathbb{R}^{X \times Y \times Z \times T}.
$
The initial state $u_0$ is used as the conditioning field, and the model predicts the future temperature evolution over the full temporal window.

\paragraph{Model input.}
For each query time step, we construct a five-channel input tensor
$
\mathbf{x} = (x, y, z, t, u_0),
$
where $(x,y,z)$ are normalized spatial coordinates, $t$ is the normalized temporal coordinate, and $u_0$ is the initial three-dimensional temperature field broadcast to all query times. Thus, for each time step, the model input has shape
$
32 \times 32 \times 8 \times 5.
$
The model outputs a scalar temperature-related quantity at each spatial location.

\paragraph{Network architecture.}
We use a three-dimensional Fourier neural operator backbone. The model applies spectral convolution over the three spatial dimensions and predicts the temperature evolution conditioned on the initial state and coordinates. Unless otherwise specified, the HYCOM experiment uses the following architecture:
$
\text{modes}_x = [8,8,8,8], \quad
\text{modes}_y = [8,8,8,8], \quad
\text{modes}_z = [4,4,4,4],
$
with channel width $20$, fully connected dimension $64$, and hidden layers
$
[20,20,20,20,20].
$
The activation function is GELU. No additional padding is used.

\paragraph{Training protocol.}
We train the model for 101 epochs using Adam with learning rate $10^{-3}$. The batch size is 2. A multi-step learning rate schedule is used with milestones at epochs 60 and 90 and decay factor 0.5. Validation is performed every 10 epochs. The checkpoint with the best validation loss is selected, and the corresponding test loss is reported.



\subsection{Implementation Details for the Inverse Darcy Experiment}
  \label{app:inverse_details}

  \paragraph{Problem setup.}
  We consider the inverse Darcy flow problem: given a fully observed pressure
  field $u^{\text{obs}} \in \mathbb{R}^{S \times S}$, recover the permeability
  field $a(x)$ satisfying $-\nabla\cdot(a\nabla u) = 1$ with homogeneous
  Dirichlet boundary conditions.  
  
  \paragraph{Forward model.}
  The forward MeanFlow model is trained on the forward Darcy task as described in Section~\ref{subsec:darcy_flow}. Its weights are fully frozen during     
  inverse optimization; no fine-tuning is performed. 
  \paragraph{Initial field construction.}             
  Rather than initializing the input state $z_0$ to the zero field, we
  construct a \emph{harmonic extension} of the boundary values of                  
  $u^{\text{obs}}$. Concretely, we fix the boundary pixels of $z_0$ to match
$u^{\text{obs}}|_{\partial\Omega}$ and iteratively average interior values       over 100 Jacobi iterations:                  \begin{equation}        
    z_0^{(k+1)}(x) = \tfrac{1}{4}\!\sum_{x' \sim x} z_0^{(k)}(x'),   
    \quad z_0|_{\partial\Omega} = u^{\text{obs}}|_{\partial\Omega}.                   
  \end{equation}        
  This initialization satisfies the boundary condition and provides a    spatially smooth starting point, which we find stabilizes early optimization steps.                     
  \paragraph{Hyperparameters.}
  We run Adam for 2000 steps with initial learning rate $5\times10^{-3}$,
  decayed to $10^{-4}$ via cosine annealing. Gradient norms are clipped to 1.0. All experiments use a single NVIDIA GPU. 
       

\subsection{Compared Methods}
\label{app:baselines}

We compare our method with baselines from three representative paradigms: physics-informed solvers, neural operator solvers, and generative PDE solvers.

\paragraph{PINN.}
Physics-informed neural networks directly parameterize the solution function with a neural network and train it by minimizing initial-condition, boundary-condition, and PDE residual losses. PINNs do not require paired simulation data, but their optimization can become unstable when different loss terms have conflicting scales or gradients.

\paragraph{I-PINN.}
I-PINN improves the vanilla PINN framework by combining an enhanced feed-forward architecture with adaptive loss weighting under upper-bound constraints. This design aims to mitigate gradient-flow stiffness and reduce gradient-related training failures. Compared with PINN, I-PINN improves convergence and accuracy without introducing substantial extra computational complexity.

\paragraph{FNO.}
Fourier Neural Operator is a data-driven neural operator that learns mappings between function spaces. It uses Fourier-domain integral operator layers to model global dependencies and can be evaluated on different spatial discretizations. In our experiments, FNO serves as a standard data-driven operator-learning baseline.

\paragraph{DeepONet.}
DeepONet is a neural operator architecture designed to learn nonlinear operators. It uses a branch network to encode the input function sampled at sensor locations and a trunk network to encode output coordinates. The two representations are combined to predict the output function value. Therefore, DeepONet is categorized as an operator-learning method rather than a standard pointwise neural network.

\paragraph{PINO.}
Physics-informed Neural Operator combines supervised operator learning with PDE residual constraints. PINO is built on the FNO framework and imposes physics losses, potentially at higher resolutions than the supervised data. It is a hybrid baseline between data-driven operator learning and physics-informed training.

\paragraph{CFO.}
Continuous Flow Operator learns continuous-time PDE dynamics using a flow-matching objective. Instead of autoregressively predicting future states on a fixed time grid, CFO fits temporal splines to trajectory data and trains a neural operator to predict analytic velocity fields. During inference, the learned continuous vector field is integrated with an ODE solver. We classify CFO as a continuous-time neural operator baseline.

\paragraph{DiffusionPDE.}
DiffusionPDE formulates PDE solving as generative modeling under partial observation. It models the joint distribution of solution and coefficient fields, allowing it to infer missing information and solve forward or inverse PDE problems. This makes it a representative diffusion-based generative PDE solver.

\paragraph{Rectified Flow.}
Rectified Flow learns a conditional velocity field that transports input laws to output solution laws along nearly straight trajectories. Compared with diffusion models that often require many stochastic denoising steps, rectified-flow surrogates use deterministic ODE sampling and can reduce the number of inference steps substantially.

\paragraph{PBFM.}
Physics-Based Flow Matching is a physics-constrained generative flow-matching method. It addresses the conflict between distributional fidelity and physical consistency by enforcing physical constraints during training with conflict-free gradient updates. Unlike methods that rely on inference-time correction, PBFM incorporates physics constraints into the generative training process.

\subsection{Additional Experimental Results of Sec. \ref{subsec:forward}}
\label{app:results}

\paragraph{Additional Visual Comparisons.}
In this section, we provide further qualitative comparisons between our proposed framework and baselines on NS dataset. Fig.~\ref{fig:visual_comparison} visualizes the ground truth and the predicted solutions. As illustrated, our method consistently yields predictions that are most aligned with the ground truth, especially in capturing fine-grained structures and high-frequency features. For instance, in the Navier--Stokes equation, while some baseline models struggle with the turbulent regions over long-term integration, our model maintains high physical fidelity. 

\begin{figure*}[htbp]
    \centering
\includegraphics[width=\textwidth]{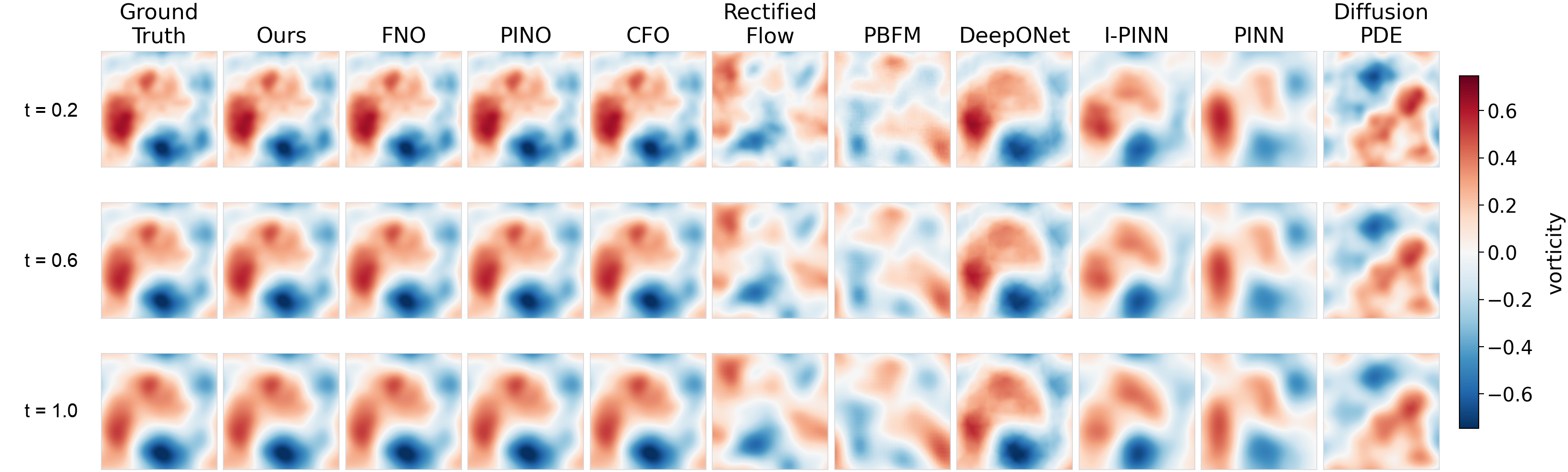}
    \caption{Visual comparison of predicted fields and error maps on ns dataset.}
    \label{fig:visual_comparison}
\end{figure*}

\paragraph{Training Loss Trajectories.}
We present the training trajectories in Fig.~\ref{fig:loss_curve}. The plots depict the evolution of the total loss, the data-driven loss ($\mathcal{L}_{\text{data}}$), and the proposed temporal and spatial MeanFlow losses ($\mathcal{L}_{t\text{-MF}}, \mathcal{L}_{s\text{-MF}}$). The results indicate that the data loss decreases rapidly during the initial epochs, allowing the model to capture the primary mapping between input conditions and solution fields. Subsequently, the temporal and spatial MeanFlow losses stabilize at low magnitudes, providing crucial physical regularizations that ensure the predicted fields satisfy the underlying PDE constraints. 

\begin{figure}[htbp]
    \centering
    \includegraphics[width=0.99\linewidth]{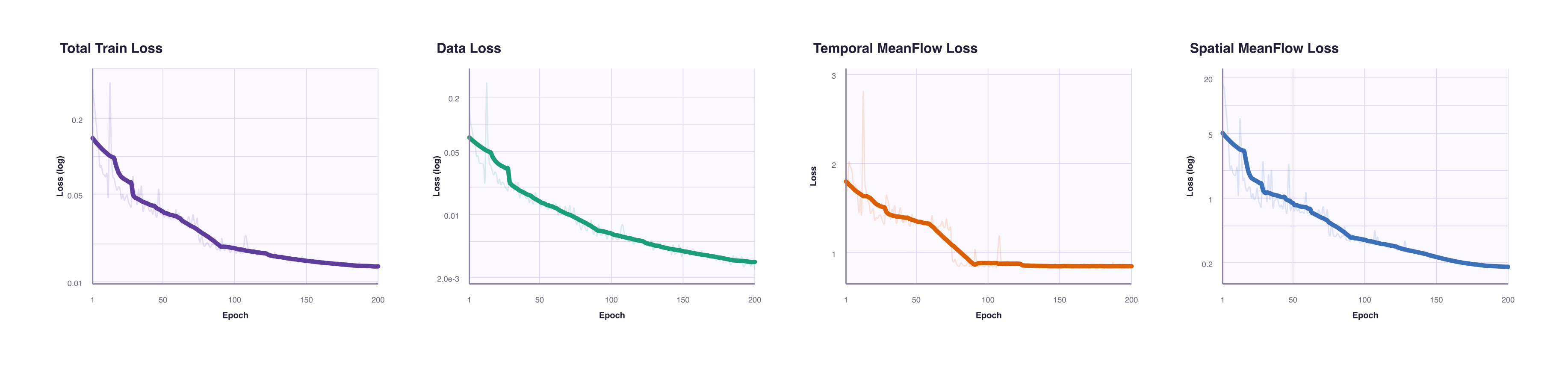} 
    \caption{Training trajectories for all loss components, including the total loss, the data loss, the temporal MeanFlow loss, and the spatial MeanFlow loss.}
    \label{fig:loss_curve}
\end{figure}

\paragraph{Additional Coefficient Analysis.}
We further analyze the sensitivity of Burgers' equation to different temporal and spatial MeanFlow loss coefficients. 
The coefficient pair $(\lambda_{\mathrm{T\mbox{-}MF}}, \lambda_{\mathrm{S\mbox{-}MF}})$ denotes the weights of the temporal MeanFlow loss and the spatial MeanFlow loss, respectively. 
All models are evaluated on the same base-resolution test set at $s=128$, and we report the global relative $\ell_2$ error.

As shown in Fig.~\ref{fig:burgers_coefficients}, introducing the spatial MeanFlow loss substantially improves performance when the temporal MeanFlow coefficient is fixed at $0.01$. 
In particular, increasing $\lambda_{\mathrm{S\mbox{-}MF}}$ from $0$ to a moderate value significantly reduces the error, while overly large coefficients do not further improve the result. 
Similarly, when fixing $\lambda_{\mathrm{S\mbox{-}MF}}=0.01$, using a moderate temporal MeanFlow coefficient gives the best performance, whereas a large temporal coefficient degrades accuracy. 
These results indicate that the temporal and spatial MeanFlow losses are complementary, but their coefficients should be balanced to avoid over-regularization.

\begin{figure}[t]
\centering
\includegraphics[width=\linewidth]{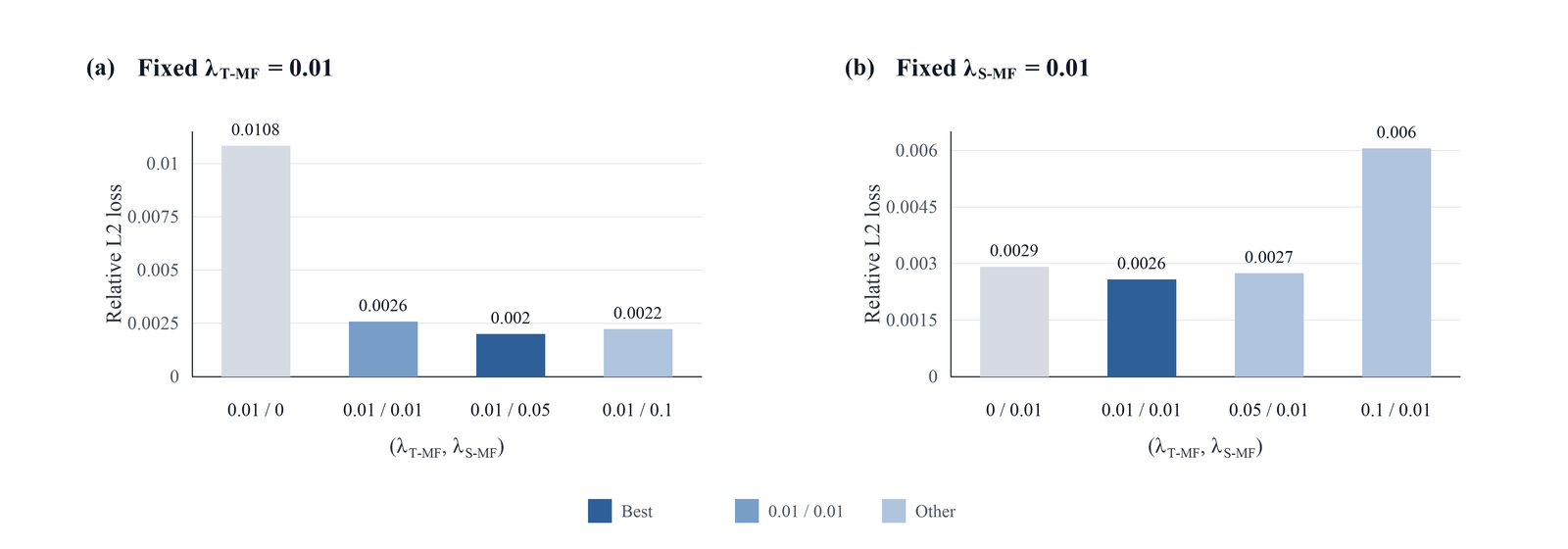}
\caption{Additional coefficient analysis on Burgers' equation. Left: spatial MeanFlow coefficient sweep with fixed temporal MeanFlow coefficient $\lambda_{\mathrm{T\mbox{-}MF}}=0.01$. Right: temporal MeanFlow coefficient sweep with fixed spatial MeanFlow coefficient $\lambda_{\mathrm{S\mbox{-}MF}}=0.01$. The reported metric is global relative $\ell_2$ error at the base resolution $s=128$; lower is better.}
\label{fig:burgers_coefficients}
\end{figure}




\end{document}